\newcommand{\xb}{\mathbf{x}}
\renewcommand{\tilde}{\widetilde}
\newcommand{\innerprod}[2]{\left\langle #1,#2 \right\rangle}
\DeclareMathOperator{\rad}{\hat{\mathfrak{R}}_S}
\DeclareMathOperator*{\argmax}{arg\,max}
\DeclareMathOperator*{\erfc}{erfc}
\DeclareMathOperator*{\argmin}{arg\,min}
\newcommand{\sign}{\ensuremath{\operatorname{sign}}}
\newcommand{\advgencost}{\ensuremath{\varepsilon_g}}
\newcommand{\generr}{\ensuremath{E_{\mathrm{gen}}}}
\newcommand{\bounderr}{\ensuremath{E_{\mathrm{bnd}}}}
\newcommand{\advgenerr}{\ensuremath{E_{\mathrm{rob}}}}
\newcommand{\advemperr}{\ensuremath{\hat{E}_{\mathrm{rob}}}}
\newcommand{\Sigmaw}{\ensuremath{\boldsymbol{\Sigma}_{\boldsymbol{w}}}}
\newcommand{\Sigmax}{\ensuremath{\boldsymbol{\Sigma}_{\boldsymbol{\boldsymbol{x}}}}}
\newcommand{\Sigmadelta}{\ensuremath{\boldsymbol{\Sigma}_{\boldsymbol{\delta}}}}
\newcommand{\Sigmatheta}{\ensuremath{\boldsymbol{\Sigma}_{\boldsymbol{\theta}}}}
\newcommand{\w}{\ensuremath{\boldsymbol{w}}}
\newcommand{\x}{\ensuremath{\boldsymbol{x}}}
\newcommand{\wstar}{\ensuremath{\boldsymbol{w}_\star}}
\newcommand{\what}{\ensuremath{\hat{\boldsymbol{w}}}}
\newcommand{\vecdelta}{\ensuremath{\boldsymbol{\delta}}}
\newcommand{\y}{\ensuremath{\boldsymbol{y}}}
\newcommand{\z}{{\ensuremath{\boldsymbol{z}}}}
\newcommand{\s}{{\ensuremath{\boldsymbol{s}}}}
\newcommand{\gaussvecone}{\ensuremath{\boldsymbol{g}}}
\newcommand{\gaussvectwo}{\ensuremath{\boldsymbol{h}}}
\newcommand{\datidx}{\ensuremath{i}}
\newcommand{\nsamples}{n}
\newcommand{\lossfun}{g}
\newcommand{\regfun}{r}
\newcommand{\pstar}{\ensuremath{{p^\star}}}
\newcommand{\EEb}[2]{\ensuremath{\mathbb{E}_{#1}\qty[#2]}}
\newcommand{\RR}{\ensuremath{\mathbb{R}}}
\theoremstyle{plain}
\newtheorem{theorem}{Theorem}[section]
\newtheorem{proposition}{Proposition}
\newtheorem{corollary}{Corollary}
\theoremstyle{definition}
\newtheorem{definition}{Definition}
\newtheorem{assumption}{Assumption}[section]
\theoremstyle{remark}
\newtheorem{remark}{Remark}
\crefname{assumption}{Assumption}{Assumptions}
\title{On the Geometry of Regularization in Adversarial Training: High-Dimensional Asymptotics and Generalization Bounds}
\author[1]{Matteo Vilucchio}
\author[2]{Nikolaos Tsilivis}
\author[3]{Bruno Loureiro}
\author[2,4,5]{Julia Kempe}
\affil[1]{\small Information Learning and Physics Laboratory, \'Ecole Polytechnique F\'ed\'erale de Lausanne (EPFL)}
\affil[2]{\small Center for Data Science, New York University}
\affil[3]{\small Département d’Informatique, École Normale Supérieure - PSL \& CNRS, France}
\affil[4]{\small Courant Institute of Mathematical Sciences, New York University}
\affil[5]{\small FAIR, META.}
\date{\today}
\begin{document}

\maketitle
\begin{abstract}
Regularization, whether explicit in terms of a penalty in the loss or implicit in the choice of algorithm, is a cornerstone of modern machine learning. Indeed, controlling the complexity of the model class is particularly important when data is scarce, noisy or contaminated, as it translates a statistical belief on the underlying structure of the data. This work investigates the question of how to choose the regularization norm $\lVert \cdot \rVert$ in the context of high-dimensional adversarial training for binary classification. To this end, we first derive an exact asymptotic description of the robust, regularized empirical risk minimizer for various types of adversarial attacks and regularization norms (including non-$\ell_p$ norms). We complement this analysis with a uniform convergence analysis, deriving bounds on the Rademacher Complexity for this class of problems. Leveraging our theoretical results, we quantitatively characterize the relationship between perturbation size and the optimal choice of $\lVert \cdot \rVert$, confirming the intuition that, in the data scarce regime,  the type of regularization becomes increasingly important for adversarial training as perturbations grow in size.
\end{abstract}

\section{Introduction}
\label{sec:intro}
Despite all its successes, deep learning still underperforms spectacularly in worst-case situations, when models face innocent-looking data which are adversarially crafted to elicit erroneous or undesired outputs. Since the discovery of these failure modes in computer vision~\citep{szegedy2013intriguing} and their re-discovery, more recently, in other modalities including text~\citep{Zou+23}, considerable effort has been put into designing algorithms for training models which are robust against these adversarial attacks.

In the context of supervised learning problems, a principled approach consists of appropriately modifying standard empirical risk minimization: a parametric model is fit by minimizing a \textit{worst-case} empirical risk, where ``worst-case" refers to an assumed threat model. For example, in computer vision, a threat model of $\ell_\infty$ perturbations translates the assumption that images whose pixels only differ by a little should share the same label. Despite its conceptual clarity and proven ability to return robust models, a major drawback of this method, known as  \textit{robust empirical risk minimization} (RERM) or adversarial training \citep{GSS15,Mad+18}, is that it often comes with a performance tradeoff, besides being computationally more intensive than standard ERM. Indeed, it has been observed that model accuracy is often compromised for better robustness~\citep{Tsi+19,zhang2019theoretically}. To make matters worse, neural networks often exhibit a large gap between their robust train and test performances in standard computer vision benchmarks~\citep{RWK20}.

Many empirical efforts in addressing these statistical limitations of RERM have focused on either increasing the amount of labeled~\citep{Wan+23} or unlabeled~\citep{Car+19,Zha+19} data, or on painstakingly re-imagining several of the design choices of deep learning (such at the loss function~\citep{Zha+19}, model averaging~\citep{Che+21,Reb+21} and more). Despite the apparent empirical challenges, simple guidelines on how different choices affect the statistical efficiency of RERM are clearly missing, even in simple models.

In this work, we make a step towards theoretically filling this gap by investigating model selection in RERM, and how it relates to robust and standard generalization error. In particular, we focus on the oldest model selection method: (weight) \textit{regularization}. Following a large body of work originating in high-dimensional statistics \citep{krogh1991simple, SST92, bean2013optimal, thrampoulidis2018precise, aubin_generalization_2020, vilucchio2024asymptotic}, we study this fundamental question \textit{asymptotically}, when both the input dimension and the number of training samples grow to infinity while keeping their ratio constant, and under a Gaussian setting. While it is customary in this literature to study which \textit{values} of regularization coefficients yield the best test errors (balancing empirical fitness with model complexity), we instead analyze the optimality of a \textit{type} of regularization. A motivation for this comes from a separate line of work in uniform convergence bounds that stresses the importance of the type of regularization for robust generalization \citep{YRB19,AFM20,Tsi+24}. Borrowing from this line of work, which mainly offers qualitative bounds, and reinforcing it with new findings, we demonstrate, via \textit{sharp} asymptotic descriptions of the errors in (regularized) RERM for a variety of different perturbation and regularization norms, that regularization becomes increasingly important in RERM as the perturbation strength grows in size. This allows us to get an exact description of the relationship between optimal type of regularization and strength of perturbation, and discuss how regularization affects the tradeoff between robustness and accuracy.

To summarize, our \textbf{main contributions} in this work are the following:
\begin{enumerate}    
    \item 
    We derive an exact asymptotic description of the performance of reguralized RERMs for a variety of perturbation and regularization norms. In addition to the commonly studied $\ell_p$, we consider $\lVert \cdot \rVert_\Sigma$ norms (induced by a positive symmetric matrix $\Sigma$), which allow us to separate the effect of a perturbation on different features of the input.
    \item 
    We show uniform convergence bounds for this class of problems (i.e., $\lVert \cdot \rVert_\Sigma$ regularized), by establishing new results on the Rademacher complexities for several linear hypothesis classes under adversarial perturbations. 
    \item 
    Leveraging the theoretical results above, we show that regularizing with the \textit{dual} norm of the perturbation can yield \textbf{significant} benefits in terms of robustness and accuracy, compared to other regularization choices. In particular, our analysis permits a precise characterization of the relationship between the perturbation geometry and the optimal type of regularization. It further allows a decomposition of the contribution of regularization in terms of standard and robust (test) error.
\end{enumerate}
Our results can be seen as positive news. Indeed, the main implication of our work for robust machine learning practice is that model selection, in the form of either explicit or implicit regularization, plays a more important role in robust ERM than in standard ERM. 
In the context of robust deep learning practice, model selection is often implicit in the choice of architecture, learning algorithm, stopping time, hyperparameters, etc. Our theoretical analysis in the context of simple adversarial tasks highlights the importance of these choices, as they can be crucial to the outcome in terms of robustness and performance.

Finally, while typical-case and worst-case analyses usually appear as opposites in the statistical learning literature, we believe our work nicely illustrates how these two approaches to studying generalization can be combined in a complementary way to yield precise answers with both explanatory and predictive powers.

\subsection*{Further related Work}
\paragraph{Exact asymptotics: }  Our results on the exact asymptotics of adversarial training build on an established body of literature that spans high-dimensional probability \citep{thrampoulidis2015gaussian,pmlr-v40-Thrampoulidis15,taheri_asymptotic_2021}, statistical physics \citep{mignacco2020role,Gerace_2021,bordelon20a,loureiro2021learning,pmlr-v206-okajima23a,adomaityte_classification_2023,adomaityte2023highdimensional} and random matrix theory \citep{doi:10.1073/pnas.1307845110, 8683376, NEURIPS2020_a03fa308, mei_generalization_2022, xiao2022precise, pmlr-v202-schroder23a, pmlr-v235-schroder24a, defilippis2024dimension}. 
Our study is also motivated by recent efforts to understand Gaussian universality \citep{goldt_gaussian_2021,montanari2022universality,dandi2023universality}. These works suggest that simple models for the covariates can have a broader scope in the context of high-dimensional generalized linear estimation, often mirroring real-world datasets. From a technical perspective, this phenomenon arises due to strong concentration properties in the high-dimensional regime, which imply some universality properties of the generalization error with respect to the covariate distribution \citep{tao2010random, donoho2009observed, pmlr-v162-wei22a, dudeja2023universality}.

\paragraph{Adversarial training: } Robust empirical risk minimization, i.e. adversarial training, was first introduced for computer vision applications \citep{GSS15,Mad+18}. A large body of work is devoted to the study of applied methods for improving its computational \citep{Sha+19b,RWK20} and statistical \citep{Zha+19,Che+21,Wan+23} properties. Theoretically, robust training has been considered before in both the case of Gaussian mixture models \citep{bhagoji2019lower,dan2020sharp,javanmard_precise_statistical_2022} and linear regression \citep{raghunathan2020understanding,taheri_asymptotic_2021,dohmatobprecise}. Of particular interest is the work of \citet{tanner2024high}, which recently derived high dimensional asymptotics for binary classification with $\ell_2$ regularization, considering perturbations in a general $\lVert \cdot \rVert_\Sigma$ norm. In our work, we study the effect of regularization, providing exact asymptotics for any combination of $\ell_p$ perturbation and regularization norm, while extending~\citep{tanner2024high} for various $\lVert \cdot \rVert_A$ regularization norms (where $A$ is a positive symmetric matrix).

\paragraph{Statistical learning theory:} The role of regularization in statistical inference traces back to the work of~\citet{Tik63} and plays a central role in statistical learning theory, directly inspiring general inductive principles such as Structural Risk Minimization \citep{Vap98} and practical methods that realize this principle, such as SVMs~\citep{CoVa95}. Uniform convergence bounds, quantities that upper bound the difference between empirical and expected risk of any predictor \textit{uniformly} inside a hypothesis class, were originally stated as a function of the VC dimension of the class~\citep{VaCh71}. The Rademacher complexity of the class~\citep{Kol01} is known to typically provide finer guarantees~\citep{KST08}. Several recent papers derive such results in the context of adversarially robust classification for linear predictors and neural networks~\citep{YRB19,AFM20,Xia+24}. Based on these results, \cite{Tsi+24} highlighted the importance of the (implicit) regularization in RERM with linear models, by showing the effect of the learning algorithm and the architecture on the robustness of the final predictor. In our work, we consider, instead, explicit regularization and more general perturbation (and regularization) geometries. 
\section{Setting Specification}
\label{sec:data-model}
We consider a binary classification task with training data $\mathcal{S} = \{(\x_i, y_i)_{i=1}^{n}\}$, where $\x_i \in \mathbb{R}^d$ and $y_i \in \{-1, +1\}$ are sampled independently from a distribution $\mathcal{D}$ of the following form:
\begin{align}\label{eq:data-distribution}
    P(\x, y) = \int_{\mathbb{R}^{d}} \dd \w_{\star} \mathbb{P}_{\rm out}\left(y \big| \frac{\langle \w_{\star},\x\rangle}{\sqrt{d}}\right)P_{\mathrm{in}}(\x)P_{\rm \w}(\w_{\star}),
\end{align}
where $P_{\mathrm{in}}$ is a probability density function over $\mathbb{R}^{d}$ and $\mathbb{P}_{\rm out}:\mathbb{R}\to [0,1]$ encodes our assumption that the label is a (potentially non-deterministic) linear function of the input $\x$ with teacher weights $\w_{\star}\in\mathbb{R}^{d}$. For example, a noiseless problem corresponds to $\mathbb{P}_{\rm out}(y|z) = \delta(y-z)$, while we can incorporate noise by using the \textit{probit} model: $\mathbb{P}_{\rm out}(y|z) = \sfrac{1}{2}\erfc\left(-\sfrac{yz}{\sqrt{2}\tau}\right)$, where $\tau>0$ controls the label noise. We assume that $\w_{\star}\in\mathbb{R}^{d}$ is drawn from a prior distribution $P_{\rm w}$.

Given the training data $\mathcal{S}$, our objective is to investigate the robustness and accuracy of linear classifiers \(\hat{y}(\what, \x) = \sign ( \langle \what, \x \rangle / \sqrt{d})\), where $\what = \what(\mathcal{S})$ are learned from the training data. 

We define the \textit{robust generalization error} as
\begin{equation}
\label{eq:population-adversarial-error}
    \advgenerr(\what) = \EEb{(\x, y) \sim \mathcal{S}}{
        \max_{\norm{\vecdelta} \leq \varepsilon} \mathds{1}(y \neq \hat{y} (\what, \x + \vecdelta))
    } \,, 
\end{equation}
where the pair $(\x, y)$ comes from the same distribution as the training data, and $\varepsilon$ bounds the magnitude of adversarial perturbations under a specific choice of norm. The (standard) \textit{generalization error} is defined as the rate of misclassification of the learnt predictor
\begin{equation}
\label{eq:population-generalization-error}
    \generr(\what) = \EEb{(\x, y) \sim \mathcal{S}}{ \mathds{1}(y \neq \hat{y} (\what, \x)) } \,.
\end{equation}
Notice that for $\varepsilon = 0$: $\advgenerr(\what) = \generr(\what)$ for all $\what \in \mathbb{R}^d$. We will frequently use the following decomposition of the robust generalization error into (standard) generalization error and \textit{boundary error} $\bounderr$:
\begin{equation}
    \advgenerr(\what) = \generr(\what) + \bounderr(\what),
\end{equation}
where $\bounderr$ is defined as follows
\begin{equation}\label{eq:population-boundary-error}
    \begin{split}
        \bounderr (\what) 
        = \EEb{}{ 
            {\textstyle \mathds{1}(y = \hat{y}(\what ; \boldsymbol{x})) } \!\!
            \max_{\norm{\boldsymbol{\delta}} \leq \varepsilon} \!\!
            {\textstyle \mathds{1}(y \neq \hat{y} (\what, \boldsymbol{x} + \boldsymbol{\delta}))}
        }.
    \end{split}
\end{equation}
As its name suggests, $\bounderr$ is the probability of a sample lying on (or near) the decision boundary, i.e., the probability that a sample is correctly classified without perturbation but incorrectly classified with it. 

\subsection{Robust Regularized Empirical Risk Minimization}
Direct minimization of the robust generalization error of \cref{eq:population-adversarial-error} presents two main challenges: first, the objective function is non-convex due to the indicator function, and second, we only have access to a finite dataset rather than the full data-generating distribution. To address these issues, a widely adopted approach is to optimise the robust \textit{empirical} (regularized) risk, defined as
\begin{equation}\label{eq:adversarial-training-problem}
    \mathcal{L} (\w) = \sum_{\datidx = 1}^{\nsamples} 
    \max_{
        \norm{\vecdelta_\datidx} \leq \varepsilon
    }
    \lossfun \qty(y_\datidx \frac{\langle \w, \x_\datidx + \vecdelta_\datidx \rangle}{\sqrt{d}}) 
    + \lambda r(\w) \,,
\end{equation}
where $\lossfun: \mathbb{R} \to \mathbb{R}_+$ is a non-increasing convex loss function that serves as a surrogate for the 0-1 loss, $r(\w)$ is a convex regularization term, and $\lambda \geq 0$ is a regularization parameter. The inner maximization over $\vecdelta_i$ models the worst-case perturbation for each data point, constrained by the attack budget $\varepsilon$ during training. Given the dataset $\mathcal{S}$, we estimate the parameters of our model as
\begin{equation}\label{eq:student-estimation}
    \hat{\w} \in \argmin_{\w \in \RR^d} \mathcal{L} (\w) \,.
\end{equation}
The choice of loss function $\lossfun$, regularization $r$, and parameters $\varepsilon$ and $\lambda$ can significantly impact the model's accuracy and robustness. 

In practice,~\cref{eq:student-estimation} is often solved with a first-order optimization method, such as gradient descent. Prior work~\citep{Sou+18} has showed that optimizing the \textit{unregularized} loss without any adversarial perturbations~(\cref{eq:adversarial-training-problem} for $\lambda,\varepsilon=0$) with gradient descent is equivalent to~\cref{eq:adversarial-training-problem} with the euclidean norm squared as a regularizer, where the free regularizer strength $\lambda$ corresponds to the time duration of the algorithm ($\lambda\to0$ as the number of iterations goes to $\infty$). Similar results can be obtained for different first-order algorithms~\citep{Gun+18a} (in particular, when $r(\w) = \|\w\|_p^p$, this corresponds to the family of \textit{steepest descent} algorithms) as well as in the adversarial case ($\varepsilon>0$)~\citep{Tsi+24}. Therefore, studying \cref{eq:adversarial-training-problem,eq:student-estimation} is equivalent to studying the solutions returned by a first-order optimization algorithm.
\section{Exact Asymptotics of Robust ERM}
\label{sec:exact-asymptotics}
Our first technical result is an asympotic description of the properties of the solution of \cref{eq:adversarial-training-problem,eq:student-estimation} in the proportional \textit{high-dimensional} limit, under the assumption of isotropic Gaussian distribution. While restrictive, this assumption is supported by recent universality results suggesting it captures the relevant statistical phenomenology in the context of high-dimensional generalized linear estimation  \citep{goldt_gaussian_2021, loureiro2021learning, universality_laws, montanari2022universality, dandi2023universality, pmlr-v162-wei22a, pmlr-v202-pesce23a, PhysRevE.109.034305}.

\subsection{Results for \texorpdfstring{$\ell_p$}{Lp} norms}
\label{sec:ellp_norms}

First, we consider the setting where the perturbations in \cref{eq:population-adversarial-error,eq:adversarial-training-problem} are constrained in their $\ell_p$ norm for $p \in (1, \infty]$.
More precisely, we make the following assumptions:

\begin{assumption}[High-Dimensional Limit]\label{ass:high-dimensional-limit}
We consider the proportional high-dimensional regime where both the number of training data and input dimension $n, d \to \infty$ at a fixed ratio $\alpha := \sfrac{n}{d}$.
\end{assumption}

\begin{assumption}[$\ell_p$ Norms]\label{ass:lp-norms}
The perturbation and regularization norms are $\ell_p$ norms, \textit{i.e.}, $\norm{\x}_p = \qty(\sum_{i=1}^n \abs{x_i}^p)^{1/p}$ for $p \in (1, \infty]$. We define $\pstar$ as the dual of $p$ ($\sfrac{1}{p} + \sfrac{1}{\pstar} = 1$).
We consider the regularization function $r(\w) = \norm{\w}_r^r$ and the attack norm to be (in general distinct) $\ell_p$ norms.
\end{assumption}

\begin{assumption}[Scaling of Adversarial Norm Constraint]\label{ass:scaling-eps}
We suppose that the value of $\varepsilon$ scales with the dimension $d$ such that $\varepsilon \sqrt{d} / \sqrt[\pstar]{d} = O_d(1)$.
\end{assumption}

\begin{assumption}[Data Distribution]\label{ass:data-distribution-separable}
For each $i \in [n]$, the covariates $\x_i \in \RR^d$ are drawn i.i.d. from the data distribution $P_{\rm in}(\x) = \mathcal{N}_{\x}(\mathbf{0}, \operatorname{Id}_d)$. Then the corresponding $y_i$ is sampled independently from the conditional distribution $P_{\rm out}$ defined in \cref{eq:data-distribution}. 
The target weight vector $\wstar \in \RR^d$ is drawn from a prior probability distribution $P_{\rm \w}$ which is separable, i.e. $P_{\rm \w}(\w) = \prod_{i=1}^d P_{w}(w_i)$ for a distribution $P_{w}$ in $\mathbb{R}$ with finite variance ${\rm Var}(P_{w})=\rho<\infty$.
\end{assumption}

Under these assumptions, our first result states that in the high-dimensional limit, the robust generalization error associated to the RERM solution in \cref{eq:adversarial-training-problem} asymptotically only depends on a few deterministic variables known as the \emph{summary statistics}, which can be computed by solving a set of low-dimensional self-consistent equations. 

\begin{theorem}[Limiting errors for $\ell_p$ norm]
\label{thm:gen-bound-adv-errs}
Let $\what(\mathcal{S})\in\mathbb{R}^{d}$ denote 
a solution of the RERM problem in \cref{eq:adversarial-training-problem}. Then, under \cref{ass:scaling-eps,ass:high-dimensional-limit,ass:lp-norms,ass:data-distribution-separable} the standard, robust and boundary generalization error of $\what$ converge to the following deterministic quantities
\begin{align*}
    \generr(\what) &= \frac{1}{\pi}\arccos\left(\frac{m^\star}{\sqrt{(\rho + \tau^2)q^\star}} \right) \,, \\
    \bounderr(\what) &= \int_{0}^{\varepsilon \frac{\sqrt[\pstar]{P^\star}}{ \sqrt{q^\star}}} \!\!
    {\textstyle
        \erfc\left( \frac{ -\frac{m^\star}{\sqrt{q^\star}} \nu}{\sqrt{2 \qty(\rho + \tau^2 - {m^\star}^2 / q^\star)}} \right) 
        \!\! \frac{e^{-\frac{\nu^2}{2}}}{\sqrt{2\pi}} \dd{\nu}
    } \,,\\
    \advgenerr(\what) &= \generr(\what) + \bounderr(\what)
\end{align*}
where \(m^\star, q^\star, P^\star\) are the limiting values of the following \emph{summary statistics}:
\begin{align*}
    {\textstyle \frac{1}{d} \langle \wstar, \what \rangle }\to m^{\star} \,, \  
     {\textstyle \frac{1}{d} \norm{\what}_2^2 }\to q^{\star}\,, \
     {\textstyle \frac{1}{d} \norm{\what}_\pstar^\pstar}\to P^{\star} \,,
\end{align*}
\end{theorem}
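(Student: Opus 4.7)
The plan is to reduce the robust empirical risk to a standard regularized ERM with a single global penalty, apply a Gaussian min-max reduction to obtain a low-dimensional saddle-point problem, and read off the three error metrics from the resulting summary statistics. The first step exploits the closed form of the inner maximization in \cref{eq:adversarial-training-problem}: since $\lossfun$ is non-increasing and $\|\vecdelta_\datidx\|_p\le\varepsilon$, H\"older's inequality is tight and yields
\begin{equation*}
    \max_{\|\vecdelta_\datidx\|_p\le\varepsilon}\lossfun\!\qty(y_\datidx\tfrac{\langle \w,\x_\datidx+\vecdelta_\datidx\rangle}{\sqrt d})=\lossfun\!\qty(y_\datidx\tfrac{\langle \w,\x_\datidx\rangle}{\sqrt d}-\tfrac{\varepsilon\|\w\|_{\pstar}}{\sqrt d}),
\end{equation*}
so that \cref{eq:adversarial-training-problem} becomes a convex minimization whose only non-standard feature is a global coupling through the scalar $\|\w\|_{\pstar}/\sqrt d$. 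Under \cref{ass:scaling-eps}, and once we identify $P:=\|\w\|_{\pstar}^{\pstar}/d$ as a summary statistic, the shift $\varepsilon\|\w\|_{\pstar}/\sqrt d\to \varepsilon\sqrt[\pstar]{P}$ is of order one, as required for a non-degenerate limit.

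The main obstacle is precisely this non-separable coupling, which prevents a direct invocation of the Convex Gaussian Min-Max Theorem (CGMT). I would resolve it by promoting $P$ to an explicit order parameter: constrain $\|\w\|_{\pstar}^{\pstar}/d=P$ via a Lagrange multiplier $\hat P$, and optimize over $P$ at the end. After this lift, the objective reduces to a bilinear term $\y\odot(\datamat\w/\sqrt d)$ appearing inside a separable sample loss, plus separable additive terms in $\w$, to which the CGMT of \citet{thrampoulidis2018precise} applies under \cref{ass:data-distribution-separable}: the primary problem is asymptotically equivalent, in the sense of pseudo-Lipschitz functionals of $\what$, to an auxiliary problem in which the data matrix is replaced by two independent Gaussian vectors $\gaussvecone\in\RR^n$ and $\gaussvectwo\in\RR^d$.

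The AO can then be scalarized using the summary statistics $m=\langle\wstar,\w\rangle/d$, $q=\|\w\|_2^2/d$, $P=\|\w\|_{\pstar}^{\pstar}/d$ and their conjugate variables. The minimization over $\w$ at fixed values of these order parameters decouples into $d$ scalar proximal problems involving the regularizer $|w|^r$, a Gaussian linear term coming from $\gaussvectwo$, and a $|\cdot|^{\pstar}$ term coming from the conjugate of $P$; the minimization along the sample direction collapses into a Moreau envelope of $\lossfun$ evaluated at the appropriate Gaussian argument. Standard concentration of sums of i.i.d.\ terms (using the finite-variance assumption on $P_w$) then reduces the AO to a finite system of self-consistent fixed-point equations in $(m,q,P)$ and their duals, whose solution is $(m^\star,q^\star,P^\star)$.

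With the summary statistics in hand, the three error formulas follow by elementary Gaussian calculus. In the limit, $(\langle\wstar,\x\rangle/\sqrt d,\langle\what,\x\rangle/\sqrt d)$ is jointly Gaussian with covariance determined by $(\rho,m^\star,q^\star)$, and the probit label carries extra variance $\tau^2$; a routine two-dimensional integral then produces the stated $\arccos$ formula for $\generr$. For $\bounderr$, the same H\"older identity shows that the worst-case adversarial perturbation shifts the classifier's margin by the deterministic quantity $\varepsilon\sqrt[\pstar]{P^\star}$, so conditioning on the clean prediction being correct and integrating the remaining Gaussian component over the perturbation window $[0,\varepsilon\sqrt[\pstar]{P^\star}/\sqrt{q^\star}]$ yields the claimed $\erfc$ integral; summing the two gives $\advgenerr$. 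The residual technical delicacy is the rigorous justification of CGMT in the lifted problem — in particular, establishing convexity/compactness of the lifted saddle and uniqueness of its fixed point so that the AO summary statistics truly coincide with the asymptotic limits of the primary problem.
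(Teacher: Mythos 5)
Your proposal follows essentially the same route as the paper: simplify the inner maximization using the monotonicity of the loss and the dual norm, lift the coupling $\|\w\|_{\pstar}^{\pstar}/d$ to an order parameter $P$ with conjugate $\hat P$, apply a CGMT-type reduction (the paper uses the non-separable version of \citet{loureiro2021learning} together with Sion's minimax theorem, which is the rigorous form of the delicacy you flag at the end), scalarize into proximal/Moreau-envelope fixed-point equations, and read off $\generr$, $\bounderr$ and $\advgenerr$ from the Gaussian statistics $(m^\star,q^\star,P^\star)$ exactly as in the paper's error-metric computation. The argument is correct and matches the paper's proof in structure and substance.
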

\begin{remark}
An immediate observation from the above equations is that $\generr$ is monotonically \textit{increasing} as a function of the cosine of the angle between teacher and student ($\sfrac{m^\star}{\sqrt{\rho q^\star}}$), while $\bounderr$ is \textit{decreasing}. This has been observed before for boundary based classifiers \citep{tanay2016boundary,tanner2024high}.
\end{remark}
\Cref{thm:gen-bound-adv-errs} therefore states that in order to characterize the robust generalization error in the high-dimensional limit, it is enough to compute three low-dimensional statistics of the RERM solution. Our next result shows that these quantities can be asymptotically computed without having to actually solve the high-dimensional minimization problem in \cref{eq:adversarial-training-problem}.

\begin{theorem}[Self-consistent equations for $\ell_p$ norms]
\label{thm:self-consistent-equations-ell-p}
Under the same assumptions as \cref{thm:gen-bound-adv-errs}, the summary statistics $(m^{\star}, q^{\star}, P^{*})$ are the unique solution of the following set of \textbf{self-consistent} equations:
\begin{align}
\label{eq:main-saddle-point}
    \begin{cases}
        \hat{m} = \alpha \mathbb{E}_{\xi}\left[
            \int_{\RR} \dd{y} 
            \partial_\omega \mathcal{Z}_0 
            f_\lossfun(\sqrt{q} \xi, P)
        \right] \\
        \hat{q} = \alpha \mathbb{E}_{\xi}\left[
            \int_{\RR} \dd{y} \mathcal{Z}_0 
            f_\lossfun^2(\sqrt{q} \xi, P)
        \right] \\
        \hat{V} = -\alpha \mathbb{E}_{\xi}\left[
            \int_{\RR} \dd{y} \mathcal{Z}_0 
            \partial_\omega f_\lossfun(\sqrt{q} \xi, P)
        \right] \\
        \hat{P} = \varepsilon \alpha \pstar P^{-\frac{1}{p}} \mathbb{E}_{\xi} \qty[ 
            \int_{\RR} \dd{y} \mathcal{Z}_0 
            y f_\lossfun (\sqrt{q} \xi, P)
        ] 
    \end{cases}, &&
    \begin{cases}
        m = \mathbb{E}_{\xi}\left[
            \partial_\gamma \mathcal{Z}_{\rm w} 
            f_w(\sqrt{q} \xi, \hat{P}, \lambda)
        \right] \\
        q = \mathbb{E}_{\xi}\left[
            \mathcal{Z}_{\rm w} 
            f_w(\sqrt{q} \xi, \hat{P}, \lambda)^2
        \right] \\
        V = \mathbb{E}_{\xi}\left[
            \mathcal{Z}_{\rm w} 
            \partial_\gamma f_w(\sqrt{q} \xi, \hat{P}, \lambda)
        \right] \\
        P = \mathbb{E}_{\xi}\left[
            \mathcal{Z}_{\rm w} 
            \partial_{\hat{P}} \mathcal{M}_{\frac{\lambda}{\hat{V}} \abs{\cdot}^r + \frac{\hat{P}}{\hat{V}}\abs{\cdot}^{\pstar}} (\frac{\sqrt{\hat{q}} \xi}{\hat{V}}) 
        \right]
    \end{cases}
\end{align}
where 
\(\mathcal{Z}_{\rm w} = \mathcal{Z}_{\rm w} (\sfrac{\hat{m}\xi}{\sqrt{\hat{q}}}, \sfrac{\hat{m}}{\sqrt{\hat{q}}} )\),
\(\mathcal{Z}_0 = \mathcal{Z}_0 (y, \sfrac{m \xi}{\sqrt{q}}, \rho - \sfrac{m^2}{q}) \) and $\xi \sim \mathcal{N}(0,1)$, and:
\begin{align}
    \mathcal{Z}_0 (y, \omega, V) &= \mathbb{E}_z\left[P_{\mathrm{out}} (y \mid \sqrt{V} z + \omega)\right], \label{eq:main:def-z0}\\
    f_{\lossfun}(\omega, \hat{P}) &= \qty(\mathcal{P}_{V \lossfun(y, \cdot - y \varepsilon \sqrt[\pstar]{P}) } \qty(\omega) - \omega) / V, \\
    \mathcal{Z}_{\rm w} (\gamma, \Lambda) &= \mathbb{E}_{w \sim P_{w}}\left[e^{-\frac{1}{2} \Lambda w^2+\gamma w}\right], \\
    f_{w}(\gamma, \hat{P}, \Lambda) &= \mathcal{P}_{\frac{\lambda}{\Lambda} \abs{\cdot}^r + \frac{\hat{P}}{\Lambda}\abs{\cdot}^\pstar } \qty(\frac{\gamma}{\Lambda}).
\end{align}
where we indicate the proximal of a function $f : \mathbb{R} \to \mathbb{R}$ as $\mathcal{P}_{Vf(\cdot)} (\omega)$ and its moreau envelope with $\mathcal{M}_{Vf(\cdot)} (\omega)$.
\end{theorem}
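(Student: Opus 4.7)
My plan is to apply the Convex Gaussian Min-Max Theorem (CGMT) of~\citet{thrampoulidis2015gaussian,thrampoulidis2018precise} to the problem obtained after reducing the inner adversarial maximum to closed form. Because $\lossfun$ is non-increasing and convex, H\"older's inequality with the dual pair $(p,\pstar)$ gives
\begin{equation*}
\max_{\|\vecdelta_i\|_p \le \varepsilon} \lossfun\!\left(y_i \frac{\langle \w, \x_i + \vecdelta_i\rangle}{\sqrt{d}}\right) = \lossfun\!\left(y_i \frac{\langle \w, \x_i\rangle}{\sqrt{d}} - \frac{\varepsilon \|\w\|_{\pstar}}{\sqrt{d}}\right),
\end{equation*}
so the RERM problem reduces to minimizing $\sum_{i} \lossfun(y_i z_i - \varepsilon \|\w\|_{\pstar}/\sqrt{d}) + \lambda\|\w\|_r^r$ with $z_i = \langle \w,\x_i\rangle/\sqrt{d}$. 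The non-separable $\ell_{\pstar}$ penalty is then decoupled from the per-sample losses by introducing the auxiliary order parameter $P = \sfrac{1}{d}\|\w\|_{\pstar}^{\pstar}$ together with a Lagrange multiplier $\hat P$ enforcing the constraint.

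Next, I would recast the problem in the bilinear form required by CGMT, so that the Gaussian design $\datamat/\sqrt{d}$ enters only through $\uu^\top \datamat \w/\sqrt{d}$, where $\uu$ is dual to the margin vector $\z$. Replacing this bilinear term by the Auxiliary Optimization surrogate $\sfrac{\|\w\|_2}{\sqrt{d}}\gaussvectwo^\top \uu + \sfrac{\|\uu\|_2}{\sqrt{d}}\gaussvecone^\top \w$ and fixing the summary statistics $(q,P)$ on the weight side together with the norm of $\uu$ on the dual side scalarizes both minimizations. The coordinate-wise problem on the weight side is solved by the proximal operator of the combined penalty $\sfrac{\lambda}{\hat V}|\cdot|^r + \sfrac{\hat P}{\hat V}|\cdot|^{\pstar}$, capturing precisely how the explicit regularizer and the adversarial budget interact, while on the sample side it yields the proximal of $V\lossfun(y,\cdot - y\varepsilon\sqrt[\pstar]{P})$, reflecting the margin shift. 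Writing first-order optimality conditions in the scalar order parameters $(m,q,V,\hat m,\hat q,\hat V)$ together with the $(P,\hat P)$ pair, and repackaging the Gaussian integrations over teacher weights and label noise into the partition functions $\mathcal{Z}_0$ and $\mathcal{Z}_{\rm w}$, produces exactly the self-consistent system in the statement.

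The main obstacle is applying CGMT rigorously in the presence of the non-smooth $\ell_{\pstar}$ term (not automatically strongly convex for $\pstar \in (1,\infty]$) and of a regularizer $\|\cdot\|_r^r$ that may fail to be Lipschitz. I would handle this by first restricting $\w$ to a large but finite ball $\|\w\|_2 \le R$, which contains the minimizer with high probability thanks to the coercivity of $\lambda\|\cdot\|_r^r$, and then sending $R \to \infty$; \cref{ass:scaling-eps} is exactly what keeps $\varepsilon \sqrt[\pstar]{P}$ of order one, preventing the adversarial shift from either vanishing or dominating and ensuring a non-degenerate fixed point. Uniqueness of the solution to the self-consistent equations then follows from strict convexity of the relevant Moreau envelopes in their arguments, combined with a standard monotonicity argument in the $(V,\hat V)$ block, as in the cited asymptotic literature.
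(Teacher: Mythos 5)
Your proposal follows essentially the same route as the paper: the Hölder reduction of the inner maximum using the non-increasing loss, the introduction of the auxiliary pair $(P,\hat P)$ to decouple the dual-norm term, an application of the CGMT (the paper invokes the non-separable version from \citet{loureiro2021learning}, with Sion's theorem justifying the min-max exchange), scalarization via proximal operators of the combined penalty $\sfrac{\lambda}{\hat V}\abs{\cdot}^r + \sfrac{\hat P}{\hat V}\abs{\cdot}^{\pstar}$ and of the margin-shifted loss, and finally stationarity conditions repackaged through $\mathcal{Z}_0$ and $\mathcal{Z}_{\rm w}$. Your handling of the technical points (compactness via coercivity, the $\varepsilon$ scaling, uniqueness) is consistent with how the paper treats them, so the plan is sound and matches the paper's proof.
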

Two remarks on these two results are in order.
\begin{remark}
Both results hold for any separable convex regularizer in the definition of the empirical risk in~\cref{eq:adversarial-training-problem}. This is in contrast to many prior works in this field, which primarily consider $\ell_2$ regularizations.
\end{remark}
\begin{remark}
The first four equations~(\cref{eq:main-saddle-point}) depend only on the noise distribution and the loss function, while the second set~(\cref{eq:main-saddle-point}) depends on the regularization function and the dual norm of the perturbation.
\end{remark}

\subsection{Results for Mahalanobis norms}
\label{sec:maha_norms}
While the $\ell_p$ norm is the most frequently discussed in the robust learning literature, $\ell_p$ perturbations are isotropic, treating all covariates equally. Under the isotropic Gaussian \Cref{ass:data-distribution-separable}, this is justified. However, it can be limiting under more realistic scenarios where the covariates are \emph{structured}, and for instance some features are more relevant than others. Recently, \cite{tanner2024high} introduced a model for studying adversarial training under structured convariates which considers perturbations under a \textit{Mahalanobis} norm, allowing to weight the perturbation along different directions. However, the discussion in that work focused only on $\ell_2$ regularization. 

Since our goal in this work is to study what is the best regularization choice for a given perturbation geometry, we now derive asymptotic results akin to the ones of \cref{sec:ellp_norms} under any combination of Mahalanobis perturbation and regularization norm. As before, we start by introducing our assumptions.

\begin{assumption}[Mahalanobis norms]\label{ass:mahalanobis-norm}
Given a positive definite matrix $\Sigmadelta$, we consider perturbations under a Mahalanobis norm $\norm{\x}_{\Sigmadelta} = \sqrt{\x^\top \Sigmadelta \x}$. Additionally, we consider the regularization function to be $r(\w) = \sfrac{1}{2}\ \w^\top \Sigmaw \w$ for a positive definite matrix $\Sigmaw$.
\end{assumption}

\begin{assumption}[Structured data]\label{ass:data-distribution-non-separable}
For each $i \in [n]$, the covariates $\x_i \in \RR^d$ are drawn i.i.d. from the data distribution $P_{\rm in}(\x) = \mathcal{N}_{\x}(\mathbf{0}, \Sigmax)$. Then the corresponding $y_i$ is sampled independently from the conditional distribution $P_{\rm out}$ defined in \cref{eq:data-distribution}. 
The target weight vector $\wstar \in \RR^d$ is drawn from a prior probability distribution $\wstar \sim P_{\rm \w} = \mathcal{N}_{\wstar}(\mathbf{0}, \Sigmatheta)$, which we assume has limiting Mahalanobis norm given by $\rho = \lim_{d\to\infty} \mathbb{E} [ \frac{1}{d} \wstar^\top \Sigmax \wstar]$.
\end{assumption}

\begin{assumption}[Scaling of Adversarial Norm Constraint]\label{ass:scaling-eps-sigma}
The value of $\varepsilon$ scales with the dimension $d$ such that $\varepsilon \sqrt{d} = O(1)$.
\end{assumption}

\begin{assumption}[Convergence of spectra]\label{ass:convergence-spectrum}
We suppose that $\Sigmax, \Sigmadelta, \Sigmatheta, \Sigmaw$ are simultaneously diagonalisable. We call \(\Sigmax = \mathrm{S}^\top \operatorname{diag}(\omega_i) \mathrm{S}\), \(\Sigmadelta = \mathrm{S}^\top \operatorname{diag}(\zeta_i) \mathrm{S} \) and \(\Sigmaw = \mathrm{S}^\top \operatorname{diag}(w_i) \mathrm{S} \). We define \(\bar{\boldsymbol{\theta}} = \mathrm{S} \Sigmax^\top \wstar / \sqrt{\rho}\). 
We assume that the empirical distributions of eigenvalues and the entries of $\bar{\boldsymbol{\theta}}$ jointly converge to a probability distribution $\mu$ as
\begin{equation}
    \!\!{
        \textstyle \sum_{i=1}^d \! \delta\qty(\bar{\boldsymbol{\theta}}_i - \bar{\theta}) \delta\qty(\omega_i - \omega) \delta\qty(\zeta_i - \zeta) \delta\qty(w_i - w) \to \mu \,.
    }
\end{equation}
\end{assumption}

As in \cref{sec:ellp_norms}, our first result concerns the limiting robust error.

\begin{theorem}[Limiting errors for Mahalanobis norm]
\label{thm:gen-bound-adv-errs-maha}
Let $\what(\mathcal{S})\in\mathbb{R}^{d}$ denote the unique solution of the RERM problem in \cref{eq:adversarial-training-problem}. Then, under \cref{ass:high-dimensional-limit,ass:scaling-eps-sigma,ass:data-distribution-non-separable,ass:mahalanobis-norm,ass:convergence-spectrum} the standard, robust and boundary generalization error of $\what$ converge to the following deterministic quantities
\begin{align*}
    \!\!\generr(\what) &= \frac{1}{\pi}\arccos\left(\frac{m^\star}{\sqrt{(\rho + \tau^2) q^\star}}\right) \,, \\
    \!\!\bounderr(\what) &= \int_{0}^{\advgencost \frac{\sqrt{P^\star}}{ \sqrt{q^\star}}} \!\!\!\!
    {\textstyle
        \erfc\left( \frac{ -\frac{m^\star}{\sqrt{q^\star}} \nu}{\sqrt{2 \qty(\rho + \tau^2 - \sfrac{{m^\star}^2}{q^\star})}} \right) 
        \!\! \frac{e^{-\frac{\nu^2}{2}}}{\sqrt{2\pi}} \dd{\nu}
    } \,,\\
    \advgenerr(\what) &= \generr(\what) + \bounderr(\what)
\end{align*}
where \(m^\star, q^\star, P^\star\) are the limiting values of the following \emph{summary statistics}:
\begin{align*}
   {\textstyle \frac{\wstar^\top \Sigmax \what}{d} }\to m^{\star} \,, \  
      {\textstyle \frac{\what^\top \Sigmax \what}{d} }\to q^{\star}\,, \
     {\textstyle \frac{\what^\top \Sigmadelta \what }{d} }\to P^{\star} \,.
\end{align*}
\end{theorem}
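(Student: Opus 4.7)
The proof splits into two stages: (i) showing that the summary statistics $(m^\star, q^\star, P^\star)$ concentrate to deterministic limits, and (ii) computing the limiting errors as explicit functions of these limits. Part (i) is the Mahalanobis analogue of \cref{thm:self-consistent-equations-ell-p} and inherits the same CGMT/Gordon-type structure, so I focus the plan on the error calculations in (ii), flagging (i) as the principal technical obstacle.

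For the standard error, I would begin by noting that under \cref{ass:data-distribution-non-separable} the scalar pair
\[
A \;=\; \frac{\langle \wstar, \x\rangle}{\sqrt{d}} + \tau\xi,\qquad B \;=\; \frac{\langle \what, \x\rangle}{\sqrt{d}}
\]
is centred jointly Gaussian (with $\xi \sim \gaussdist{0}{1}$ independent encoding the probit noise), with variances converging to $\rho+\tau^2$ and $q^\star$ and covariance to $m^\star$. Since the probit label is $y = \sign(A)$, the misclassification event is $\{AB<0\}$, whose probability for a centred bivariate Gaussian equals $\tfrac{1}{\pi}\arccos$ of the correlation. This yields the claimed expression for $\generr$ with correlation $m^\star/\sqrt{(\rho+\tau^2)q^\star}$.

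For the boundary error, I would first reduce the inner maximisation by duality: under \cref{ass:mahalanobis-norm}, $\max_{\|\vecdelta\|_{\Sigmadelta}\leq\varepsilon}\langle\what,\vecdelta\rangle$ is an explicit Mahalanobis functional of $\what$ whose squared value, divided by $d$, converges to $P^\star$. A correctly classified point is attackable iff $0 < yB \leq \varepsilon\sqrt{\what^\top\Sigmadelta\what/d}$, with right endpoint converging to $\varepsilon\sqrt{P^\star}$. Changing variable to $\nu = yB/\sqrt{q^\star}$ (asymptotically standard Gaussian) and decomposing the teacher margin as $A = (m^\star/\sqrt{q^\star})\nu + \sqrt{\rho+\tau^2-(m^\star)^2/q^\star}\,\xi^\perp$ with $\xi^\perp\perp\nu$ standard Gaussian, the conditional correct-classification probability integrates to the displayed $\erfc$ factor, the marginal density of $\nu$ produces the Gaussian weight, and the attackable range becomes $0<\nu\leq \varepsilon\sqrt{P^\star}/\sqrt{q^\star}$. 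This reproduces exactly the stated integral, and the decomposition $\advgenerr=\generr+\bounderr$ follows by definition.

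The main obstacle is stage (i): extending the CGMT machinery beyond the isotropic, $\ell_p$-regularised setting of \cref{thm:self-consistent-equations-ell-p} to anisotropic designs with general positive-definite $\Sigmax, \Sigmadelta, \Sigmaw$. The key enabler is the simultaneous diagonalisation of \cref{ass:convergence-spectrum}: after rotating to the common eigenbasis, the rows of $\datamat$ become independent Gaussians with diagonal covariance, the quadratic regulariser separates block-by-block along the spectrum, and the joint convergence of the empirical spectral measure to $\mu$ lets the scalar auxiliary optimisation be written as an expectation under $\mu$. The adversarial penalty $\varepsilon\sqrt{\w^\top\Sigmadelta\w}$ couples the blocks through its square root, but this can be linearised by introducing a single scalar auxiliary variable (the Mahalanobis counterpart of $P$ in \cref{thm:self-consistent-equations-ell-p}), restoring block separability and yielding concentration of $(m^\star,q^\star,P^\star)$ to the unique fixed point of the resulting scalar system.
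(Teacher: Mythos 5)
Your proposal is correct and follows essentially the same route as the paper: concentration of the summary statistics via the CGMT/Gordon reduction, with the scalar auxiliary variable $P$ and its Lagrange multiplier $\hat P$ linearizing the dual-norm adversarial term, specialized to the Mahalanobis setting through the common eigenbasis and the limiting spectral measure $\mu$ of \cref{ass:convergence-spectrum} (exactly as in \cref{thm:self-consistent-equations-mahalanobis}), while the error formulas come from the same bivariate-Gaussian computation the paper imports from prior work (the $\tfrac{1}{\pi}\arccos$ law for the sign-disagreement probability and the boundary-error integral). The only loose phrase is calling $\nu = yB/\sqrt{q^\star}$ asymptotically standard Gaussian --- $\sign(A)\,B$ is skewed, and the clean statement takes $\nu = B/\sqrt{q^\star}$ standard Gaussian and uses sign symmetry of the event, which is precisely what your $\erfc$-times-Gaussian-weight integral encodes, so the conclusion is unaffected.
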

As in \Cref{sec:ellp_norms}, our next result shows that the summary statistics characterizing the limiting errors can be obtained from of a set of self-consistent equations.

\begin{theorem}[Self-Consistent equations for Mahalanobis norm]\label{thm:self-consistent-equations-mahalanobis}
Under the same assumptions as \cref{thm:gen-bound-adv-errs-maha}, the summary statistics $(m^{\star}, q^{\star}, P^{*})$ are the unique solution of the following set of \textbf{self-consistent} equations:
\begin{align}
\label{eq:main-saddle-point-channel-sigma}
    \begin{cases}
        \hat{m} = \alpha \mathbb{E}_{\xi}\left[
            \int_{\RR} \dd{y} 
            \partial_\omega \mathcal{Z}_0 
            f_\lossfun(\sqrt{q} \xi, P)
        \right] \\
        \hat{q} = \alpha \mathbb{E}_{\xi}\left[
            \int_{\RR} \dd{y} \mathcal{Z}_0 
            f_\lossfun^2(\sqrt{q} \xi, P)
        \right] \\
        \hat{V} = -\alpha \mathbb{E}_{\xi}\left[
            \int_{\RR} \dd{y} \mathcal{Z}_0 
            \partial_\omega f_\lossfun(\sqrt{q} \xi, P)
        \right] \\
        \hat{P} = 2 \varepsilon \alpha P^{-\frac{1}{2}} \mathbb{E}_{\xi} \qty[ 
            \int_{\RR} \dd{y} \mathcal{Z}_0 
            y f_\lossfun (\sqrt{q} \xi, P)
        ] 
    \end{cases}, &&
    \begin{cases}
        m = \mathbb{E}_{\mu} \qty[ 
            \frac{\hat{m} \bar{\theta}^2}{\lambda w + \hat{V} \omega + \hat{P} \delta} 
        ] \\
        q = \mathbb{E}_{\mu} \qty[ 
            \frac{
                \hat{m}^2 \bar{\theta}^2 \omega + \hat{q} \omega^2
            }{
                (\lambda w + \hat{V} \omega + \hat{P} \delta)^2
            } 
        ] \\
        V = \mathbb{E}_{\mu} \qty[ 
            \frac{\omega}{\lambda w + \hat{V} \omega + \hat{P} \delta } 
        ] \\
        P = \mathbb{E}_{\mu} \qty[ 
            \zeta \frac{
                \hat{m}^2 \bar{\theta}^2 + \hat{q} \omega^2
            }{
                (\lambda w + \hat{V} \omega + \hat{P} \delta)^2
            } 
        ] 
    \end{cases}\,,
\end{align}
where $\mu$ is the joint limiting distribution for the spectrum of all the matrices.
\end{theorem}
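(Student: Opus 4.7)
The plan is to follow the same replica-symmetric / Convex Gaussian Min-Max Theorem (CGMT) scheme that underlies Theorem~3.2, specialized to the fully quadratic Mahalanobis setting. The key observation is that the channel equations (those for $\hat m,\hat q,\hat V,\hat P$) are structurally identical to Theorem~3.2 evaluated at $p^\star=2$, while the prior equations collapse from implicit proximal expressions to explicit rational ones, thanks to the simultaneous diagonalizability of \cref{ass:convergence-spectrum} and the quadratic nature of both the regularizer $r(\w)=\tfrac12\w^\top\Sigmaw\w$ and the attack.

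\textbf{Step 1: closed-form inner maximization.} For a non-increasing convex loss, the inner max over the Mahalanobis ball dualizes to a single scalar: the attack contribution to the margin equals $\varepsilon\sqrt{P}$ with $P:=\w^\top\Sigmadelta\w/d$ (the summary statistic introduced in \cref{thm:gen-bound-adv-errs-maha}, after matching the sign/dual convention used there). Hence \cref{eq:adversarial-training-problem} is equivalent to the standard regularized convex program
\[
    \min_{\w,P}\;\sum_{i=1}^n\lossfun\!\left(y_i\tfrac{\innerprod{\w}{\x_i}}{\sqrt d}-y_i\varepsilon\sqrt P\right)+\tfrac{\lambda}{2}\w^\top\Sigmaw\w\quad\text{s.t.}\quad P=\tfrac{\w^\top\Sigmadelta\w}{d},
\]
which we dualize with a Lagrange multiplier $\hat P/2$ on the $P$ constraint and auxiliary multipliers on the predictions $z_i=\innerprod{\w}{\x_i}/\sqrt d$.

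\textbf{Step 2: CGMT reduction.} After whitening $\tilde\x_i=\Sigmax^{-1/2}\x_i\sim\gaussdist{0}{\mathrm{Id}_d}$ and rescaling $\tilde\w=\Sigmax^{1/2}\w$, the design matrix has i.i.d.\ standard Gaussian entries, while the regularizer and the dualized quadratic $\tfrac{\hat P}{2}\w^\top\Sigmadelta\w$ become quadratic forms in $\tilde\w$ with modified matrices $\Sigmax^{-1/2}\Sigmaw\Sigmax^{-1/2}$ and $\Sigmax^{-1/2}\Sigmadelta\Sigmax^{-1/2}$. Gordon's CGMT then replaces the primary minimax by an auxiliary optimization (AO) involving two independent standard Gaussian vectors $\gaussvecone\in\RR^n$ and $\gaussvectwo\in\RR^d$. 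As in the proof of \cref{thm:self-consistent-equations-ell-p}, the AO separates into (i) a channel minimization over $n$ decoupled scalars whose saddle-point conditions produce the four ``hat'' equations of the theorem (identical to \cref{thm:self-consistent-equations-ell-p} at $p^\star=2$, which is the source of the prefactor $2\varepsilon\alpha P^{-1/2}$ in $\hat P$), and (ii) a prior minimization over $d$ decoupled scalars once we rotate into the common eigenbasis of \cref{ass:convergence-spectrum}.

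\textbf{Step 3: closed-form prior equations.} In the common eigenbasis $\mathrm S$, write $\tilde w_i=(\mathrm S\w)_i$. Each scalar prior problem takes the form
\[
    \min_{\tilde w_i}\;\tfrac12\bigl(\lambda w_i+\hat V\omega_i+\hat P\zeta_i\bigr)\tilde w_i^{\,2}-\bigl(\hat m\,A_i+\sqrt{\hat q}\,B_i\,\gaussvectwo_i\bigr)\tilde w_i,
\]
where $(A_i,B_i)$ are deterministic functions of $(\bar\theta_i,\omega_i)$ produced by the CGMT scalarizations of the teacher overlap $\wstar^\top\Sigmax\w$ and the Gaussian term $\gaussvectwo^\top\Sigmax^{1/2}\w$. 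The minimizer is $\tilde w_i^\star=(\hat m A_i+\sqrt{\hat q}\,B_i\,\gaussvectwo_i)/(\lambda w_i+\hat V\omega_i+\hat P\zeta_i)$. Plugging $\tilde\w^\star$ into the definitions $m=\wstar^\top\Sigmax\what/d$, $q=\what^\top\Sigmax\what/d$, $V$ (the $\Sigmax$-weighted trace of the effective resolvent) and $P=\what^\top\Sigmadelta\what/d$, taking the Gaussian expectation over $\gaussvectwo_i\sim\gaussdist{0}{1}$ (so that $\mathbb{E}[\gaussvectwo_i^2]=1$), and then passing $d\to\infty$ via the joint spectral convergence of \cref{ass:convergence-spectrum}, turns empirical traces into expectations under $\mu$ and reproduces exactly the four non-hat equations on the right-hand side of \cref{eq:main-saddle-point-channel-sigma}.

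\textbf{Main obstacle.} The delicate step is the rigorous justification of the CGMT argument in this anisotropic, self-referential setting. One needs uniform control of the AO minimizer on compact sets (supplied by strong convexity from $\lambda\Sigmaw\succ 0$), concentration of the four order parameters $(m,q,V,P)$ and their conjugates, and uniformity in the four-fold spectral convergence of \cref{ass:convergence-spectrum} strong enough to replace empirical traces of rational functions of $(\Sigmax,\Sigmadelta,\Sigmaw,\Sigmatheta)$ by expectations under the joint limit $\mu$. Once concentration is in place, uniqueness of the fixed point of \cref{eq:main-saddle-point-channel-sigma} follows from strict convexity of the original RERM in $\w$, and \cref{thm:self-consistent-equations-mahalanobis} is established.
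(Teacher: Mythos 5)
Your proposal is correct and follows essentially the same route as the paper: dualize the inner maximization, introduce the pair $(P,\hat P)$, invoke the non-separable CGMT machinery of Loureiro et al., and exploit the quadratic structure of both the regularizer and the attack so that the prior saddle-point conditions become explicit; your coordinate-wise decoupling in the common eigenbasis is just the diagonalized form of the paper's matrix-resolvent proximal, with the empirical traces passing to expectations under $\mu$ via the joint spectral convergence assumption. The channel (hat) equations are obtained in the same way as in the $\ell_p$ case with $\pstar=2$, exactly as you note.
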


\begin{remark}
    Notice that the first set of equations is the same as in~\cref{thm:self-consistent-equations-ell-p}, as they depend only on the marginal distribution $\mathbb{P}_{\rm out}$ and the loss function.
\end{remark}

Note that while the self-consistent equations in \cref{thm:self-consistent-equations-ell-p} and \cref{thm:self-consistent-equations-mahalanobis} do not admit a closed-form solution, they can be efficiently solved using an iteration scheme (\cref{sec:app:numerics}). Solving these yields precise curves for the generalization errors of the final predictor as a function of the sample complexity $\alpha$ and regularization geometry, allowing us to draw important conclusions for the interplay between the regularization and perturbation -- see also simulations in \cref{sec:experiments}.

The details of the proofs of \cref{thm:self-consistent-equations-ell-p,thm:self-consistent-equations-mahalanobis,thm:gen-bound-adv-errs,thm:gen-bound-adv-errs-maha} are discussed in \cref{sec:proof-asymptotics}. They are based on an adaptation of Gordon's Min-Max Theorem for convex empirical risk minimization problems \citep{thrampoulidis2015gaussian,loureiro2021learning}.
\section{Which Regularization to Choose?}
\label{sec:uniform-conv-bounds}
Our results in the previous section provide tight predictions on the robust and standard generalization error of the set of minimizers of the robust (regularized) empirical risk. However, since the self-consistent equations describing the robust errors are not closed, it is not straightforward to read \textit{why} some regularizers might produce better results than others. In this section, we derive complementary uniform convergence bounds based on the \textit{Rademacher Complexity} for linear predictors under various geometries. While these bounds might not be tight, they are distribution-agnostic, and provide a-priori guarantees for the error of a predictor which are \textit{qualitatively} useful. 
We start by introducing concepts in a general way, before deriving guarantees for the case considered in Section~\ref{sec:maha_norms}.

Let $\mathcal{H}_r$ be a hypothesis class of linear predictors of restricted complexity, as captured by a function $r: \mathbb{R}^d \to \mathbb{R}$. This function $r$ plays the role of a regularizer, as in Section~\ref{sec:exact-asymptotics}. We define:
\begin{equation}\label{eq:H_general_def}
    \mathcal{H}_r = \{ \xb \to \innerprod{\w}{\xb}: r(\w) \leq \mathcal{W}_r^2 \},
\end{equation}
where $\mathcal{W}_r > 0$ is an arbitrary upper bound. Central to the analysis of the generalization error uniformly inside the hypothesis class $\mathcal{H}_r$ is the notion of the (empirical) \textit{Rademacher Complexity} \citep{Kol01} of $\mathcal{H}_r$:
\begin{equation}
    \begin{split}
        \rad (\mathcal{H}_r) & = \mathbb{E}_{\sigma} \left[ \frac{1}{n} \sup_{\w: r(\w) \leq \mathcal{W}_r^2} \sum_{i = 1}^n \sigma_i \innerprod{\w}{\x_i} \right],
    \end{split}
\end{equation}
where the $\sigma_i$'s are either $-1$ or $1$ with equal probability. In the case of robust generalization with respect to $\lVert \cdot \rVert$-limited perturbations, it suffices to analyse the \textit{worst-case} Rademacher Complexity of $\mathcal{H}_r$:
\begin{equation*}
    \rad (\Tilde{\mathcal{H}}_r) = \mathbb{E}_{\sigma} \left[ \frac{1}{n} \sup_{\w: r(\w) \leq \mathcal{W}_r^2} \sum_{i = 1}^n \sigma_i \min_{\|\vecdelta_\datidx\| \leq \varepsilon}\innerprod{\w}{\x_i + \vecdelta_\datidx} \right].
\end{equation*}
With these ingredients in place, we can state the following bound on the robust generalization gap of any predictor in $\mathcal{H}_r$.
\begin{theorem}[\citet{MRT12,AFM20}]\label{thm:margin_bound}
    For any $\delta > 0$, with probability at least $1 - \delta$ over the draw of the dataset $\mathcal{S}$, for all $\w \in \mathbb{R}^d$ such that $r(\w) \leq \mathcal{W}^2_r$~(\cref{eq:H_general_def}), it holds that
    \begin{equation}\label{eq:margin_bound_ineq}
        \begin{split}
            \advgenerr (\w) & \leq \advemperr(\w) + 2 \rad (\Tilde{\mathcal{H}}_r) + 3 \sqrt{\frac{\log 2 / \delta}{2 n}},
        \end{split}
    \end{equation}
    where
    \begin{equation}
        \!\! \advemperr(\w) = {
        \frac{1}{n} \sum_{i = 1}^n
        \max_{\norm{\vecdelta_i} \leq \varepsilon} \mathds{1}(y_i \hat{y} (\w, \x_i + \vecdelta_i) \leq 1)
        }\,
    \end{equation} 
    is a robust empirical error.
\end{theorem}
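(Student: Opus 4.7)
The plan is to reduce the statement to the classical margin-based Rademacher-complexity bound (e.g.\ Theorem~3.3 of \citet{MRT12}) by encoding the adversarial optimization inside a single real-valued ``robust margin'' function. Concretely, I would define
$$
g_\w(\x,y) \;=\; y \min_{\|\vecdelta\| \le \varepsilon} \innerprod{\w}{\x+\vecdelta}/\sqrt{d},
$$
so that $\advgenerr(\w) = \mathbb{P}[g_\w(\x,y) \le 0]$ and $\advemperr(\w) = \frac{1}{n}\sum_{i=1}^n \mathds{1}[g_\w(\x_i,y_i) \le 1]$; the $\max$ over $\vecdelta$ of the indicator in the definition of $\advemperr$ is simply the indicator of the $\min$ over $\vecdelta$ of the signed margin. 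All subsequent steps then work with the scalar function class $\mathcal{G}_r = \{g_\w : r(\w) \le \mathcal{W}_r^2\}$ instead of with the vector predictors.

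I would then interpose a $1$-Lipschitz ramp surrogate $\phi : \mathbb{R} \to [0,1]$ satisfying $\mathds{1}[u\le 0] \le \phi(u) \le \mathds{1}[u \le 1]$, and invoke the standard uniform-convergence statement for $[0,1]$-valued function classes (McDiarmid plus Rademacher symmetrization). Applied to $\phi \circ \mathcal{G}_r$, this yields, with probability $\ge 1-\delta$ and uniformly in $\w$ with $r(\w) \le \mathcal{W}_r^2$,
\begin{align*}
\advgenerr(\w) &\le \mathbb{E}[\phi(g_\w)] \\
&\le \frac{1}{n}\sum_{i=1}^n \phi(g_\w(\x_i,y_i)) + 2\,\rad(\phi \circ \mathcal{G}_r) + 3\sqrt{\frac{\log(2/\delta)}{2n}} \\
&\le \advemperr(\w) + 2\,\rad(\phi \circ \mathcal{G}_r) + 3\sqrt{\frac{\log(2/\delta)}{2n}},
\end{align*}
where the first and third inequalities are the sandwiching property of $\phi$, and the middle one is the textbook concentration bound.

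The final step is to rewrite $\rad(\phi \circ \mathcal{G}_r)$ as $\rad(\Tilde{\mathcal{H}}_r)$. Talagrand's contraction lemma, applied after subtracting the constant $\phi(0)=1$ (which does not affect the Rademacher sum since $\mathbb{E}_\sigma[\sum_i \sigma_i] = 0$), gives $\rad(\phi \circ \mathcal{G}_r) \le \rad(\mathcal{G}_r)$. Since $y_i \in \{-1,+1\}$ and each $\sigma_i$ is symmetric, the joint law of $(\sigma_i y_i)_{i=1}^n$ equals that of $(\sigma_i)_{i=1}^n$; absorbing the $y_i$ into the Rademacher signs converts $\rad(\mathcal{G}_r)$ into exactly $\rad(\Tilde{\mathcal{H}}_r)$. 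Chaining the three inequalities delivers the claim with the stated constants.

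The only subtlety I anticipate is that $\mathcal{G}_r$ is not a linear class in $\w$, because of the inner $\min$ over $\vecdelta$, so one might worry that the standard symmetrization and contraction arguments do not apply. They do, however, because both results operate on a fixed sample and require only pointwise boundedness and Lipschitzness of the outer scalar functional, not any linearity or convexity in the indexing parameter. Aside from this bookkeeping, the proof is entirely classical and follows the template of \citet{MRT12,AFM20}; the adversarial structure enters only through the inner $\min$ in the definition of $g_\w$, i.e.\ through the replacement of $\mathcal{H}_r$ by $\Tilde{\mathcal{H}}_r$.
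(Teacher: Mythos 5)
Your proposal is correct in substance and follows exactly the classical route on which the paper relies: Theorem~\ref{thm:margin_bound} is quoted from \citet{MRT12,AFM20} without an independent proof, and the cited argument is precisely your chain of ramp-loss sandwiching, McDiarmid plus symmetrization for $[0,1]$-valued classes, and Talagrand contraction, with the labels folded into the Rademacher signs (note that with the labeled form of the worst-case complexity used in the paper's appendix and in \citet{AFM20}, i.e.\ $\min_{\vecdelta} y_i\innerprod{\w}{\x_i+\vecdelta}$, no absorption step is even needed). One slip to fix: the robust margin must be $g_\w(\x,y)=\min_{\norm{\vecdelta}\le\varepsilon} y\,\innerprod{\w}{\x+\vecdelta}/\sqrt{d}$, with $y$ inside the minimum; as written, $y\min_{\vecdelta}\innerprod{\w}{\x+\vecdelta}$ equals the \emph{best}-case signed margin when $y=-1$, so $\mathbb{P}[g_\w\le 0]$ would underestimate $\advgenerr(\w)$ — your own prose (``indicator of the min over $\vecdelta$ of the signed margin'') already describes the correct object, and with this correction every subsequent step goes through unchanged.
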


\Cref{thm:margin_bound} promises that a tight bound on the worst-case Rademacher complexity of $\mathcal{H}_r$ can bound the (robust) generalization gap of any predictor in $\mathcal{H}_r$. The next Proposition realises this goal for the general class of \textit{strongly convex} functions $r$. This will permit the study of the cases of Section~\ref{sec:maha_norms}.

\begin{proposition}\label{prop:upper_bound_rad}
    Let $\varepsilon, \sigma > 0$. Consider a sample $\mathcal{S} = \left\{ (\x_1, y_1), \ldots, (\x_n, y_n) \right\}$, and let $\mathcal{H}_r$ be the hypothesis class defined in \cref{eq:H_general_def}, where $r$ is $\sigma$- strongly convex with respect to a norm $\prescript{}{r}{\left\lVert \cdot \right\rVert}$. Then, it holds:
    \begin{equation}\label{eq:rad_generic_bound}
        \rad (\Tilde{\mathcal{H}}_r) \leq \max_{i \in [n]} \prescript{}{r}{\|\xb_i\|_\star} \mathcal{W}_r \sqrt{\frac{2}{\sigma n}} + \frac{\varepsilon}{2\sqrt{n}} \sup_{\w: r(\w) \leq \mathcal{W}_r^2} \|\w\|_\star,
    \end{equation}
    where $\prescript{}{r}{\lVert\cdot\rVert_\star}, \lVert\cdot\rVert_\star$ denote the dual norms of $\prescript{}{r}{\lVert\cdot\rVert}, \lVert\cdot\rVert$, respectively.
\end{proposition}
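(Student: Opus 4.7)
The plan is to evaluate the inner adversarial minimization in closed form, split the resulting supremum by subadditivity, and bound the two pieces via an existing tool and a one-sided symmetry argument. First, I would use the dual-norm identity $\min_{\|\vecdelta\|\leq\varepsilon}\langle\w,\vecdelta\rangle = -\varepsilon\|\w\|_\star$, which gives $\min_{\|\vecdelta_i\|\leq\varepsilon}\langle\w,\x_i+\vecdelta_i\rangle = \langle\w,\x_i\rangle - \varepsilon\|\w\|_\star$. Substituting this into the definition of $\rad(\Tilde{\mathcal{H}}_r)$ and applying subadditivity of the supremum yields
\begin{align*}
\rad(\Tilde{\mathcal{H}}_r) \;\leq\; \rad(\mathcal{H}_r) \;+\; \frac{\varepsilon}{n}\,\mathbb{E}_\sigma\!\left[\sup_{\w:\,r(\w)\leq\mathcal{W}_r^2}\!\left(-\|\w\|_\star\sum_{i=1}^n\sigma_i\right)\right],
\end{align*}
which cleanly isolates a standard Rademacher complexity and an adversarial ``perturbation'' term that depends on $\w$ only through its dual norm.

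For the first summand, I would invoke the Kakade--Sridharan--Tewari bound for strongly convex regularizers: since $r$ is $\sigma$-strongly convex with respect to $\prescript{}{r}{\|\cdot\|}$, its Fenchel conjugate is $1/\sigma$-smooth with respect to the dual norm, and the standard duality/concentration argument yields $\rad(\mathcal{H}_r)\leq \max_{i\in[n]}\prescript{}{r}{\|\x_i\|_\star}\,\mathcal{W}_r\sqrt{2/(\sigma n)}$, matching the first summand in the claimed bound.

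For the perturbation term, the crucial observation is one-sidedness: when $\sum_i\sigma_i\geq 0$ the objective $-\|\w\|_\star\sum_i\sigma_i$ is nonpositive, so the sup is attained at $\w=0$ (feasible under the standing convention $r(0)=0$, which covers the Mahalanobis regularizers of \cref{sec:maha_norms} and all norm-squared choices); when $\sum_i\sigma_i<0$, the sup is attained by pushing $\|\w\|_\star$ to its largest feasible value $M := \sup_{\w:\,r(\w)\leq\mathcal{W}_r^2}\|\w\|_\star$. Hence the inner sup equals $M\cdot(\sum_i\sigma_i)_-$ with $(x)_- := \max(-x,0)$. By symmetry of Rademacher variables, $(\sum_i\sigma_i)_+$ and $(\sum_i\sigma_i)_-$ are equidistributed, so $\mathbb{E}(\sum_i\sigma_i)_- = \tfrac{1}{2}\mathbb{E}|\sum_i\sigma_i| \leq \tfrac{1}{2}\sqrt{n}$ by Jensen, producing the contribution $\varepsilon M/(2\sqrt{n})$ and completing the bound.

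The main obstacle, though modest, is securing the factor of $1/2$: a crude bound using $|\sum_i\sigma_i|$ in place of $(\sum_i\sigma_i)_-$ would lose the constant and give $\varepsilon M/\sqrt{n}$ instead. The improvement hinges on noticing that the adversarial contribution is forced to $0$ (via $\w=0$) precisely when the Rademacher sum is non-negative, so only the negative part enters in expectation; ensuring $\w=0$ is feasible is the one small hypothesis-check required for this half-factor gain.
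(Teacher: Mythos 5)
Your proposal is correct and follows essentially the same route as the paper's proof: rewrite the inner minimization via the dual-norm identity, split off the perturbation term by subadditivity of the supremum, bound the clean term with the Kakade--Sridharan--Tewari result for $\sigma$-strongly convex regularizers, and use the symmetry of the Rademacher sum plus Jensen to obtain the $\tfrac{\varepsilon}{2\sqrt{n}}\sup_{\w}\|\w\|_\star$ contribution. (Your feasibility check of $\w=0$ is harmless but not needed: when $\sum_i\sigma_i>0$ the inner supremum is nonpositive in any case, so the factor $\tfrac12$ survives as an inequality.)
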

The proof (\cref{app:rad}) is based on a fundamental result on (standard) Rademacher complexities for strongly convex functions due to \cite{KST08} and a symmetrization argument.

This result informs us that the worst-case Rademacher complexity can decompose into two terms: one which characterizes the standard error and one that scales with the magnitude of perturbation $\varepsilon$ and depends on the \textit{dual} norm of the perturbation. Thus, we expect that a regularization which promotes a small second term in the RHS of \cref{eq:rad_generic_bound} will likely mean a smaller robust generalization gap, as $\varepsilon$ increases. This can be further elucidated in the following subcases (proofs appear in~\cref{app:rad}), for which we already derived exact asymptotics in \cref{sec:exact-asymptotics}:
\begin{itemize}
    \item $\left\lVert \cdot \right \rVert = \left\lVert \cdot \right \rVert_{\Sigmadelta}$ and $r(\w) = \|\w\|_2^2$: this corresponds to perturbations with respect to a a symmetric positive definite matrix $\Sigmadelta \in \mathbb{R}^{d \times d}$,  while we regularize in the Euclidean norm.
    In this case, we obtain:
    \begin{corollary}
        Let $\varepsilon > 0$ and symmetric positive definite $\Sigmadelta \in \mathbb{R}^{d \times d}$. Then:
        \begin{equation*}
            \rad (\Tilde{\mathcal{H}}_{\|\cdot\|_2^2}) \leq \frac{\max_{i \in [m]} \|\xb_i\|_2 \mathcal{W}_2}{\sqrt{m}} + \frac{\varepsilon \mathcal{W}_2}{2 \sqrt{m}} \sqrt{\lambda_{\mathrm{min}}^{-1} (\Sigmadelta)}.
        \end{equation*}
    \end{corollary}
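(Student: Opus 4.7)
The plan is to directly specialize Proposition~\ref{prop:upper_bound_rad} to the pair $r(\w)=\|\w\|_2^2$ and $\|\cdot\|=\|\cdot\|_{\Sigmadelta}$, identifying the relevant dual norms and strong convexity constant, and then to bound the dual-norm supremum using the spectrum of $\Sigmadelta$.

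First, I would verify the strong convexity hypothesis: the map $\w\mapsto\|\w\|_2^2$ has Hessian $2\,\mathrm{Id}_d$, hence it is $2$-strongly convex with respect to the Euclidean norm, so we may take $\sigma=2$ and $\prescript{}{r}{\|\cdot\|}=\|\cdot\|_2$ in Proposition~\ref{prop:upper_bound_rad}. Since $\ell_2$ is self-dual, the first term in \eqref{eq:rad_generic_bound} becomes $\max_{i\in[m]}\|\x_i\|_2\,\mathcal{W}_2\sqrt{2/(2m)}=\max_{i\in[m]}\|\x_i\|_2\,\mathcal{W}_2/\sqrt{m}$, which already matches the first term of the claimed bound.

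Next, I would identify the dual norm of the Mahalanobis perturbation norm $\|\x\|_{\Sigmadelta}=\sqrt{\x^\top\Sigmadelta\,\x}$. A short Lagrangian/Cauchy--Schwarz computation (change of variables $\uu=\Sigmadelta^{1/2}\x$) gives $\|\cdot\|_\star=\|\cdot\|_{\Sigmadelta^{-1}}$, i.e.\ $\|\w\|_\star=\sqrt{\w^\top\Sigmadelta^{-1}\w}$. It remains to bound
\begin{equation*}
\sup_{\w:\,\|\w\|_2^2\leq\mathcal{W}_2^2}\sqrt{\w^\top\Sigmadelta^{-1}\w}.
\end{equation*}
Using the Rayleigh quotient characterization, $\w^\top\Sigmadelta^{-1}\w\leq\lambda_{\max}(\Sigmadelta^{-1})\|\w\|_2^2=\lambda_{\min}^{-1}(\Sigmadelta)\|\w\|_2^2$, and this bound is tight along the corresponding eigenvector, so the supremum equals $\mathcal{W}_2\sqrt{\lambda_{\min}^{-1}(\Sigmadelta)}$. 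Substituting into the second term of \eqref{eq:rad_generic_bound} yields $\frac{\varepsilon\,\mathcal{W}_2}{2\sqrt{m}}\sqrt{\lambda_{\min}^{-1}(\Sigmadelta)}$, which combined with the first term is precisely the claimed inequality.

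There is no real obstacle here: the corollary is a mechanical specialization of Proposition~\ref{prop:upper_bound_rad}. The only points requiring care are (i) correctly identifying $\sigma=2$ for $\|\cdot\|_2^2$ (and not $\sigma=1$, which would hold for $\tfrac12\|\cdot\|_2^2$), and (ii) the duality calculation $(\|\cdot\|_{\Sigmadelta})_\star=\|\cdot\|_{\Sigmadelta^{-1}}$, both of which are standard.
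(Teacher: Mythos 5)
Your proof is correct and follows essentially the same route as the paper: specialize Proposition~\ref{prop:upper_bound_rad}, identify the dual of $\|\cdot\|_{\Sigmadelta}$ as $\|\cdot\|_{\Sigmadelta^{-1}}$, and evaluate the constrained supremum via the Rayleigh quotient (Courant--Fischer), exactly as in the appendix. Your bookkeeping of the strong-convexity constant ($\sigma=2$ for $\|\cdot\|_2^2$, absorbing the $\sqrt{2}$) is in fact slightly more careful than the paper's remark that the squared $\ell_2$ norm is ``1-strongly convex,'' and it is what makes the first term match the stated bound without a spurious $\sqrt{2}$.
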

    \item $\left\lVert \cdot \right \rVert = \left\lVert \cdot \right \rVert_{\Sigmadelta}$ and $r(\w) = \|\w\|_{\Sigmaw}^2$: this corresponds to perturbations with respect to a a symmetric positive definite matrix $\Sigmadelta \in \mathbb{R}^{d \times d}$ and regularization with respect to a norm induced by another matrix $\Sigmaw \in \mathbb{R}^{d \times d}$. We will analyze the special case where $\Sigmadelta$ and $\Sigmaw$ share the same set of eigenvectors.
    \begin{corollary}
        Let $\varepsilon > 0$.
        Let $\Sigmaw = \sum_{i = 1}^d \alpha_i \mathbf{v}_i \mathbf{v}_i^T$ and $\Sigmadelta = \sum_{i=1}^d \lambda_i \mathbf{v}_i \mathbf{v}_i^T$, with $\mathbf{v}_i \in \mathbb{R}^d$ being orthonormal. Then:
        \begin{equation}\label{eq:A_S_commute_bound}
            \begin{split}
                \rad (\Tilde{\mathcal{H}}_{\|\cdot\|_A^2}) \leq & \frac{\mathcal{W}_A\max_{i \in [m]} \|\xb_i\|_{\Sigmaw^{-1}}}{\sqrt{m}} \\
                & + \frac{\varepsilon \mathcal{W}_A}{2 \sqrt{m}} \sqrt{\max_{i \in [d]} \frac{1}{\lambda_i \alpha_i}}.
            \end{split}
        \end{equation}
    \end{corollary}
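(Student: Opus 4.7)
The plan is to apply \cref{prop:upper_bound_rad} with the auxiliary norm $\prescript{}{r}{\|\cdot\|} = \|\cdot\|_{\Sigmaw}$. First I would verify that $r(\w) = \w^\top \Sigmaw \w$ is $2$-strongly convex with respect to this norm: a direct Taylor expansion gives $r(\y) - r(\x) - \nabla r(\x)^\top (\y - \x) = (\y-\x)^\top \Sigmaw (\y-\x) = \|\y-\x\|_{\Sigmaw}^2$, so the strong-convexity constant is $\sigma = 2$. The relevant dual norms are standard: the dual of $\|\cdot\|_{\Sigmaw}$ is $\|\cdot\|_{\Sigmaw^{-1}}$ and the dual of the perturbation norm $\|\cdot\|_{\Sigmadelta}$ is $\|\cdot\|_{\Sigmadelta^{-1}}$.

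Substituting these ingredients into \cref{eq:rad_generic_bound} yields the first term of \cref{eq:A_S_commute_bound} immediately, and reduces the task to controlling
\[
    \sup_{\w:\, \w^\top \Sigmaw \w \leq \mathcal{W}_A^2} \sqrt{\w^\top \Sigmadelta^{-1} \w}.
\]
Here the commuting assumption is exploited: in the common eigenbasis $\{\mathbf{v}_i\}$, writing $\w = \sum_i w_i \mathbf{v}_i$, the constraint becomes $\sum_i \alpha_i w_i^2 \leq \mathcal{W}_A^2$ and the objective becomes $\sum_i \lambda_i^{-1} w_i^2$. Setting $u_i = w_i^2 \geq 0$, this becomes a linear program in $\{u_i\}$: maximize $\sum_i \lambda_i^{-1} u_i$ subject to $\sum_i \alpha_i u_i \leq \mathcal{W}_A^2$. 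Its optimum is attained at a vertex, concentrating all the mass at an index $i^\star \in \arg\max_i (\lambda_i \alpha_i)^{-1}$, giving $u_{i^\star} = \mathcal{W}_A^2 / \alpha_{i^\star}$ and optimal value $\mathcal{W}_A^2 \max_{i \in [d]} (\lambda_i \alpha_i)^{-1}$. Taking the square root and plugging back into \cref{eq:rad_generic_bound} produces the second term of \cref{eq:A_S_commute_bound}.

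There is no real obstacle beyond carefully bookkeeping the norms and their duals; the commuting spectral assumption is exactly what turns the variational problem into a separable LP that admits a closed-form solution. Without this assumption, one would still obtain a bound in terms of $\lambda_{\max}(\Sigmaw^{-1/2} \Sigmadelta^{-1} \Sigmaw^{-1/2})$, but the clean single-index expression $\max_i (\lambda_i \alpha_i)^{-1}$ is specific to the simultaneously diagonalizable case.
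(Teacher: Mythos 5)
Your proposal is correct and follows essentially the same route as the paper: invoke the general strongly-convex Rademacher bound with the norm $\|\cdot\|_{\Sigmaw}$ and its dual $\|\cdot\|_{\Sigmaw^{-1}}$, then compute $\sup_{\|\w\|_{\Sigmaw}\le \mathcal{W}_A}\|\w\|_{\Sigmadelta^{-1}}$ in the common eigenbasis. Your linear-programming vertex argument is just a repackaging of the paper's weighted upper bound plus its tightness check (the paper exhibits the maximizer $\w=\alpha_j^{-1/2}\mathbf{v}_j$), so the two proofs coincide in substance.
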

\end{itemize}
Hence, we deduce that regularizing the class of linear predictors with $\Sigmaw = \Sigmadelta^{-1}$, where $\Sigmadelta$ is the matrix of the perturbation norm, can more effectively control the robust generalization error.

Similar results can be derived in the context of $\ell_p$ perturbations - see \citet{YRB19,AFM20} and \cref{app:rad}. In fact, mirroring our analysis, the robust generalization error there is controlled by the $\lVert \cdot \rVert_{p^\star}$ norm of the weights. We explore the effect of the regularizer numerically with simulations next.

\section{Experiments}
\label{sec:experiments}
\begin{figure}
    \centering
    \includegraphics[width=0.45\linewidth]{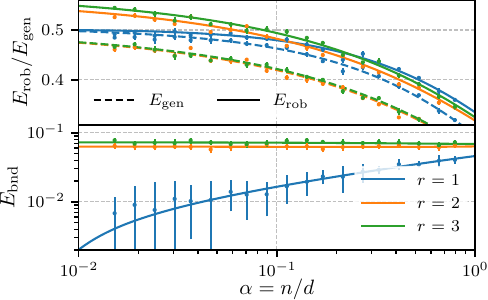}
    \includegraphics[width=0.45\linewidth]{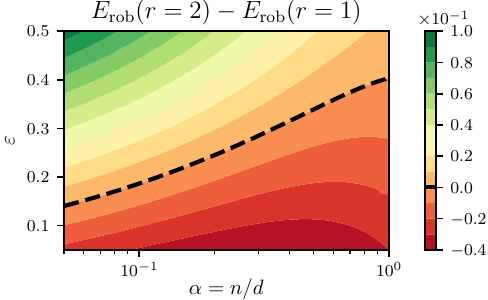}
    \caption{(\textbf{Left})         Generalization error of RERMs in the low sample complexity regime under $\ell_\infty$ perturbations for various choices of regularization. We see that the edge of $\ell_1$ over the rest of the methods stems from the boundary error which goes to zero as \(\alpha \to 0^+\). Setting: \(\varepsilon = 0.2\) and optimally tuned regularization parameter \(\lambda\). The bullet points with the error bars are RERM simulations for $d=1000$ ($10$ random seeds). (\textbf{Right}) Difference between robust generalization errors for $r=2$ and $r=1$ as a function of $\varepsilon$ and $\alpha$ for $\ell_\infty$ attacks. Green zones correspond to areas where the the dual norm regularization is better than $\ell_2$.
    }
    \label{fig:errors-small-alpha}
\end{figure}

Leveraging our exact results from Section~\ref{sec:exact-asymptotics} and guided by the predictions of Section~\ref{sec:uniform-conv-bounds}, in this section we numerically investigate  the role of the regularization geometry in the robustness and accuracy of robust empirical risk minimizers. Full experimental details and further ablation studies are in~\cref{sec:app:parameter-exploration}.

\subsection{Importance of Regularization in the Scarce Data Regime}

First, we consider the setting of Section~\ref{sec:ellp_norms}, with perturbations constrained in their $\ell_\infty$ norms, for three different regularizers ($\ell_1, \ell_2$ and $\ell_3$ norms).
\Cref{fig:errors-small-alpha} (left) compares the generalization errors of the solutions of~\cref{eq:adversarial-training-problem} for the various regularizers and plots them as a function of the sample complexity $\alpha$. Note that when $\alpha$ is small (scarce data), the $\ell_1$ regularized solution (dual norm of $\ell_\infty$) provides better defense against $\ell_\infty$ perturbations. Interestingly, this is due to the fact that the boundary error approaches zero as $\alpha \to 0^+$, only in the case when $r = 1$~(\cref{fig:errors-small-alpha}, bottom left). We analytically explore this phenomenon further in~\cref{sec:app:low-sample-complexity}, where we analyze the boundary error from~\cref{thm:gen-bound-adv-errs} and probe its dependence on the overlap parameters $(m^\star, q^\star, P^\star)$.

The phase diagram of~\cref{fig:errors-small-alpha} (right) further elucidates the difference of the methods as a function of $\varepsilon$. We display the difference in robust generalization error between $\ell_2$ and $\ell_1$ regularized solutions versus attack budget $\varepsilon$ and sample complexity $\alpha$, with \emph{optimally tuned} regularization coefficient $\lambda$. We observe that $\ell_1$ outperforms $\ell_2$ regularization in regions of high $\varepsilon$ and low $\alpha$.

\Cref{fig:comparison-different-sigma-w} (left) demonstrates $\advgenerr$ in the structured case of \cref{sec:maha_norms}, where the perturbations are constrained in a Mahalanobis norm $\lVert \cdot \rVert_{\Sigmadelta}$. We observe that regularizing the weights of the solution with the dual norm of the perturbation ($\lVert \cdot \rVert_{\Sigmadelta^{-1}}$) yields better robustness, while the gap between the various methods increases as $\varepsilon$ grows. 

\begin{figure}[t]
    \centering
    \includegraphics[width= 0.45\linewidth]{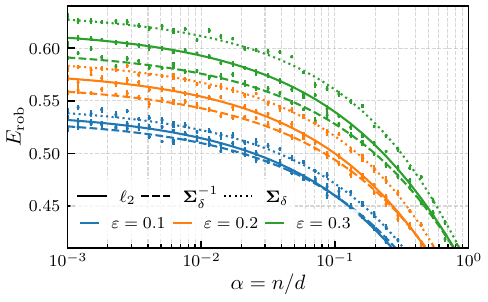}
    \includegraphics[width= 0.45\linewidth]{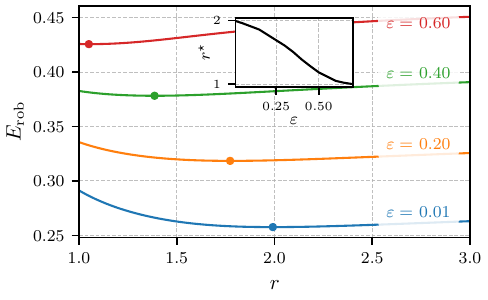}

    \caption{(\textbf{Left}) Difference between robust generalization error for $\Sigmadelta$ perturbations. We see that a regularization with the dual norm has the best adversarial error for different choices of \(\varepsilon\). The points with the error bars (std) are RERM simulations for \(d=1000\) (10 random seeds). (\textbf{Right}) Robust generalization error of the solution of regularized RERM as a function of the regularization order \(r\), i.e. \(r(\w) = \lambda \norm{\w}^r_r\) for various perturbations strengths $\varepsilon$. Sample complexity \(\alpha=1.0\). Regularization coefficients $\lambda$ are optimally tuned. The inside figure shows how the optimal value of $r$ scales with $\varepsilon$.
    }
    \label{fig:comparison-different-sigma-w}
\end{figure}


\subsection{Optimal Regularization Geometry as a Function of \texorpdfstring{$\varepsilon$}{attack strength}}

While the previous figures compared the various regularizers $r(\w)$ as $\varepsilon$ grows, it is not clear what exactly the relationship is between optimal $r(\w)$ and perturbation strength. In particular, we expect when $\varepsilon=0$, $\ell_2$-regularized solutions to achieve better accuracy, due to the fact that the data are Gaussian. However, it is not clear how the transition to the dual norm happens.

We examine this relationship in \cref{fig:comparison-different-sigma-w} (right), where we plot the robust generalization error for various values of perturbation $\varepsilon$ and regularization order $r$ for a fixed value of sample complexity $\alpha$. We observe that, as the attack strength increases, the order of the optimal regularization \textit{smoothly} transitions from $r=2$ to $r=1$. Hence, there is a regime of perturbation scale $\varepsilon$ where neither $r=2$ nor $r=1$ is optimal, but a regularizer $r \in (1, 2)$ achieves the least possible robust test error.
\section{Conclusion}
\label{sec:conclusion}
We studied the role of regularization in robust empirical risk minimization (adversarial training) for a variety of perturbation and regularization norms. We derived an exact asymptotic description of the robust and standard generalization error in the high-dimensional proportional limit, and we showed results for the (worst-case) Rademacher Complexity of linear predictors in the case of structured perturbations. Phase diagrams and exact scaling laws, afforded by our analysis, suggest that choosing the right regularization becomes increasingly important as $\varepsilon$ grows, and, in fact, this optimal regularization often corresponds to the dual norm of the perturbation.
Furthermore, our results reveal a curious, smooth, transition between different optimal regularizations ($\ell_2$ to $\ell_1$) with increasing perturbation strength; a phenomenon that has not yet been captured by any other theoretical work (to the best of our knowledge).

It would be interesting for future work to investigate the interplay between regularization and perturbation geometry in non-linear models, such as the random features model \citep{mei_generalization_2022,Gerace_2021, HaJa22}.

\subsubsection*{Acknowledgment} 
MV would like to thank Florent Krzakala for the support given in pursuing this project.
MV also acknowledges funding by the Swiss National Science Foundation
grants SNSF OperaGOST 200021\_200390.
BL acknowledges funding from the \textit{Choose France - CNRS AI Rising Talents} program.
NT and JK acknowledge support through the NSF under award 1922658. Part of this work was done while JK and NT were hosted by the Centre Sciences de Donnees at the École Normale Supérieure (ENS) in 2023/24, and JK and NT would like to thank everyone at ENS for their hospitality, which enabled this work. This work was supported in part through the NYU IT High Performance Computing (HPC) resources, services, and staff expertise. NT would like to especially thank Shenglong Wang for assistance in overcoming challenges related to HPC.

\bibliographystyle{plainnat}
\bibliography{bibliography}

\newpage
\appendix
\section*{\LARGE Appendix}
\section{Sharp High-dimensional Asymptotics}
\label{sec:proof-asymptotics}
The first theorem that will be crucial in our subsequent analysis is the Convex Gaussian MinMax Theorem (CGMT), a powerful tool in high-dimensional probability theory. 
The CGMT provides a connection between two seemingly unrelated optimization problems under Gaussian conditioning. 
Essentially, it allows us to study the properties of a complex primary optimization problem (PO) by examining a simpler auxiliary optimization problem (AO). 
This theorem is particularly valuable in our context as it enables us to transform intricate high-dimensional problems into more tractable lower-dimensional equivalents, significantly simplifying our analysis and leading to \cref{thm:self-consistent-equations-ell-p,thm:self-consistent-equations-mahalanobis}.

The CGMT states that under certain conditions, the probabilistic behavior of the primary optimization problem involving a Gaussian matrix is upper and lower bounded by the behavior of an auxiliary problem involving only Gaussian vectors. This powerful result allows us to derive tight probability bounds and asymptotic predictions for the high-dimensional estimation problems considered in this manuscript.

We state the theorem in full generality.

\begin{theorem}[CGMT \cite{gordon_1988,thrampoulidis2015gaussian}]\label{thm:cgmt}
Let \(\boldsymbol{G} \in \RR^{m \times n}\) be an i.i.d. standard normal matrix and \(\mathbf{g} \in \RR^m\), \(\mathbf{h} \in \RR^n\) two i.i.d. standard normal vectors independent of one another. Let \(\mathcal{S}_{\w}\), \(\mathcal{S}_{\boldsymbol{u}}\) be two compact sets such that \(\mathcal{S}_{\w} \subset \RR^n\) and \(\mathcal{S}_{\boldsymbol{u}} \subset \RR^n\). Consider the two following optimization problems for any continuous \(\psi\) on \(\mathcal{S}_{\w} \times \mathcal{S}_{\boldsymbol{u}}\)
\begin{align}
    \mathbf{C}(\boldsymbol{G}) &:= \min_{\w \in \mathcal{S}_{\w}} \max _{\boldsymbol{u} \in \mathcal{S}_{\boldsymbol{u}}} 
    \boldsymbol{u}^{\top} \boldsymbol{G} \w + 
    \psi(\w, \boldsymbol{u}) \\
    \mathcal{C}(\mathbf{g}, \mathbf{h}) &:= \min_{\w \in \mathcal{S}_{\w}} \max _{\boldsymbol{u} \in \mathcal{S}_{\boldsymbol{u}}} 
    \|\w\|_2 \mathbf{g}^{\top} \boldsymbol{u} + 
    \|\boldsymbol{u}\|_2 \mathbf{h}^{\top} \w + 
    \psi(\w, \boldsymbol{u})
\end{align}
Then the following hold
\begin{enumerate}
    \item For all \(c\in \RR\) we have
    \begin{equation}
        \mathbb{P}(\mathbf{C}(\boldsymbol{G})<c) \le 2 \mathbb{P}(\mathcal{C}(\mathbf{g}, \mathbf{h}) \le c)
    \end{equation}
    \item Further assume that \(\mathcal{S}_{\w} \) and \( \mathcal{S}_{\boldsymbol{u}}\) are convex sets, \(\psi\) is convex-concave on \(\mathcal{S}_{\w} \times \mathcal{S}_{\boldsymbol{u}}\). Then for all \(c\in \RR\)
    \begin{equation}
        \mathbb{P}(\mathbf{C}(\boldsymbol{G})>c) \le 2 \mathbb{P}(\mathcal{C}(\mathbf{g}, \mathbf{h}) \ge c)
    \end{equation}
    In particular for all \(\mu \in \RR\), \(t > 0 \) we have \(\mathbb{P}(|\mathbf{C}(\boldsymbol{G})-\mu|>t) \le 2 \mathbb{P}(|\mathcal{C}(\mathbf{g}, \mathbf{h})-\mu| \ge t)\).
\end{enumerate}
\end{theorem}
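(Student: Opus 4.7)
The plan is to establish the Convex Gaussian MinMax Theorem in its classical two-part form. The core ingredient is Gordon's Gaussian min--max comparison inequality (an extension of Slepian's lemma to nested optimization), which we apply to specific centered Gaussian processes built from $\boldsymbol{G}$ and $(\mathbf{g},\mathbf{h})$ respectively. Part (1) is a direct application, and part (2) upgrades the one-sided bound to two-sided concentration by invoking Sion's minimax theorem together with the sign symmetry of Gaussians.

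For part (1), I would introduce on $\mathcal{S}_{\w}\times\mathcal{S}_{\uu}$ the two centered Gaussian processes
\begin{align*}
    X_{\w,\uu} &:= \uu^{\top}\boldsymbol{G}\w, \\
    Y_{\w,\uu} &:= \|\w\|_{2}\,\mathbf{g}^{\top}\uu + \|\uu\|_{2}\,\mathbf{h}^{\top}\w,
\end{align*}
which have identical second moment $\|\w\|_{2}^{2}\|\uu\|_{2}^{2}$. A direct covariance calculation gives $\mathbb{E}[X_{\w,\uu}X_{\w',\uu'}] = (\w^{\top}\w')(\uu^{\top}\uu')$ while $\mathbb{E}[Y_{\w,\uu}Y_{\w',\uu'}] = \|\w\|_{2}\|\w'\|_{2}(\uu^{\top}\uu') + \|\uu\|_{2}\|\uu'\|_{2}(\w^{\top}\w')$. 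Cauchy--Schwarz applied to $\w^{\top}\w'$ and $\uu^{\top}\uu'$ then verifies the inequality between squared increments demanded by Gordon's minimax lemma, with the required equality when $\w=\w'$. Carrying the deterministic continuous term $\psi$ through the comparison (it does not affect the covariance structure) yields
\begin{equation*}
    \mathbb{P}\bigl(\mathbf{C}(\boldsymbol{G}) < c\bigr) \leq 2\,\mathbb{P}\bigl(\mathcal{C}(\mathbf{g},\mathbf{h}) \leq c\bigr).
\end{equation*}
The factor of $2$ comes from a standard symmetrization of Gordon's inequality that accommodates the nonzero $\psi$. Compactness of $\mathcal{S}_{\w},\mathcal{S}_{\uu}$ and continuity of $\psi$ let me pass from Gordon's finite-set statement (the form in which the induction is cleanest) to the continuous case by a density/continuity argument.

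For part (2), under the added convexity hypotheses, I would invoke Sion's minimax theorem to write, almost surely in $\boldsymbol{G}$,
\begin{equation*}
    \mathbf{C}(\boldsymbol{G}) = \max_{\uu\in\mathcal{S}_{\uu}}\min_{\w\in\mathcal{S}_{\w}}\bigl\{\uu^{\top}\boldsymbol{G}\w + \psi(\w,\uu)\bigr\},
\end{equation*}
and the analogous representation for $\mathcal{C}(\mathbf{g},\mathbf{h})$; these exchanges are legitimate because $\mathcal{S}_{\w},\mathcal{S}_{\uu}$ are convex compact and the integrands are convex--concave. Applying part (1) to the sign-flipped problem $\min_{\uu}\max_{\w}\{-\uu^{\top}\boldsymbol{G}\w - \psi(\w,\uu)\}$ and using that $(\boldsymbol{G},\mathbf{g},\mathbf{h})$ and $(-\boldsymbol{G},-\mathbf{g},-\mathbf{h})$ have the same law then gives the reverse tail bound
\begin{equation*}
    \mathbb{P}(\mathbf{C}(\boldsymbol{G}) > c) \leq 2\,\mathbb{P}(\mathcal{C}(\mathbf{g},\mathbf{h}) \geq c).
\end{equation*}
Taking $c = \mu \pm t$ and combining the two one-sided bounds produces the stated two-sided concentration.

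The main obstacle is Gordon's minimax comparison inequality itself. Slepian--Fernique-type comparison for single suprema is classical, but the extension to nested $\min$--$\max$ functionals in the correct direction, with precisely the covariance conditions verified above, is subtle: it requires either Gordon's original inductive proof on finite index sets combined with a careful continuity lift to compact continuous sets, or a Kahane-style smart-path interpolation adapted to the min--max setting. The Sion swap in part (2) is comparatively routine, but it relies essentially on the convex--concavity of $\psi$ and convexity/compactness of the feasible sets to guarantee that the exchange holds pathwise in $\boldsymbol{G}$, without which the reverse tail bound cannot be inferred from part (1).
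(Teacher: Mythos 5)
The paper does not prove this theorem at all --- it is imported verbatim from \citet{gordon_1988} and \citet{thrampoulidis2015gaussian} --- so your proposal can only be measured against the standard proof in those references, which is indeed the route you sketch (Gordon's comparison inequality for part (1), a Sion-type exchange plus Gaussian sign symmetry for part (2)). However, as written, part (1) contains a genuine error. The two processes you compare, $X_{\w,\uu}=\uu^{\top}\boldsymbol{G}\w$ and $Y_{\w,\uu}=\|\w\|_{2}\,\mathbf{g}^{\top}\uu+\|\uu\|_{2}\,\mathbf{h}^{\top}\w$, do \emph{not} have identical second moments: $\mathbb{E}[X_{\w,\uu}^{2}]=\|\w\|_{2}^{2}\|\uu\|_{2}^{2}$ while $\mathbb{E}[Y_{\w,\uu}^{2}]=2\|\w\|_{2}^{2}\|\uu\|_{2}^{2}$, so the equal-variance hypothesis of Gordon's minimax comparison fails and the lemma cannot be invoked for this pair. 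The standard fix is to augment the bilinear process with an independent scalar Gaussian $\gamma$, i.e.\ compare $\uu^{\top}\boldsymbol{G}\w+\|\w\|_{2}\|\uu\|_{2}\,\gamma$ against $Y_{\w,\uu}$; then the variances match, the increment condition reduces to the identity $\mathbb{E}[XX']-\mathbb{E}[YY']=(\w^{\top}\w'-\|\w\|\|\w'\|)(\uu^{\top}\uu'-\|\uu\|\|\uu'\|)\ge 0$ with equality at $\w=\w'$, and $\psi$ enters harmlessly through the thresholds $\lambda_{\w,\uu}=c-\psi(\w,\uu)$ in Gordon's probabilistic comparison. Crucially, the factor $2$ in the statement is \emph{not} a ``symmetrization accommodating the nonzero $\psi$'' as you claim: it arises from discarding the auxiliary variable by conditioning on $\{\gamma\le 0\}$, via $\tfrac{1}{2}\,\mathbb{P}(\mathbf{C}(\boldsymbol{G})<c)\le \mathbb{P}\bigl(\min\max\{\uu^{\top}\boldsymbol{G}\w+\|\w\|\|\uu\|\gamma+\psi\}<c\bigr)\le \mathbb{P}(\mathcal{C}(\mathbf{g},\mathbf{h})\le c)$.

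Part (2) is essentially the right argument, but with one inaccuracy: Sion's theorem legitimizes the min--max exchange only for the primary problem (bilinear, hence convex--concave, in $\w$ and $\uu$); the auxiliary objective $\|\w\|_{2}\,\mathbf{g}^{\top}\uu+\|\uu\|_{2}\,\mathbf{h}^{\top}\w+\psi$ is \emph{not} convex--concave in general (e.g.\ $\|\w\|_{2}\,\mathbf{g}^{\top}\uu$ is concave in $\w$ whenever $\mathbf{g}^{\top}\uu<0$), so ``the analogous representation for $\mathcal{C}(\mathbf{g},\mathbf{h})$'' is not available. Fortunately it is also not needed: after applying part (1) to the sign-flipped, role-exchanged problem and using that $(-\boldsymbol{G},-\mathbf{g},-\mathbf{h})$ has the same law, the elementary inequality $\max\min\le\min\max$ on the auxiliary side already yields $\mathbb{P}(\mathbf{C}(\boldsymbol{G})>c)\le 2\,\mathbb{P}(\mathcal{C}(\mathbf{g},\mathbf{h})\ge c)$. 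With the auxiliary Gaussian $\gamma$ inserted in part (1) and the Sion step restricted to the primary problem, your outline matches the proof of \citet{thrampoulidis2015gaussian}.
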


In our analysis, we will employ a version of the CGMT applied to a general class of generalized linear models, as proved by \citet{loureiro2021learning}. 

\subsection{Notations and Definitions}

In this paper, we extensively employ the concepts of Moreau envelopes and proximal operators, pivotal elements in convex analysis frequently encountered in recent works on high-dimensional asymptotic of convex problems \citep{boyd_vandenberghe_2004,parikh_boyd_proximal_algo}. For an in-depth analysis of their properties, we refer the reader to the cited literature. Here, we briefly outline their definition and the main properties for context.

\begin{definition}[Moreau Envelope]\label{def:app:moreau-definition}
Given a convex function $f : \RR^n \to \RR$ we define its Moreau envelope as being
\begin{equation}\label{eq:app:moreau-definition}
    \mathcal{M}_{V f(\cdot)} (\boldsymbol{\omega}) = \min_{\x} \qty[\frac{1}{2 V} \norm{\x - \boldsymbol{\omega}}_2^2 + f(\x)] \,.
\end{equation}
where the Moreau envelope can be seen as a function $\mathcal{M}_{V f(\cdot)} : \RR^n \to \RR$.
\end{definition}

\begin{definition}[Proximal Operator]\label{def:app:proximal-definition}
Given a convex function $f : \RR^n \to \RR$ we define its Proximal operator as being
\begin{equation}\label{eq:app:proximal-definition}
    \mathcal{P}_{V f(\cdot)} (\boldsymbol{\omega}) = \argmin_{\x} \qty[\frac{1}{2 V}\norm{\x - \boldsymbol{\omega}}_2^2 + f(\x)]  \,.
\end{equation}
where the Proximal operator can be seen as a function $\mathcal{P}_{V f(\cdot)} : \RR^n \to \RR^n$.
\end{definition}

\begin{theorem}[Gradient of Moreau Envelope \citep{thrampoulidis2018precise}, Lemma D1]\label{thm:app:envelope-theorem}
Given a convex function $f : \RR^n \to \RR$, we denote its Moreau envelope by $\mathcal{M}_{V f(\cdot)} (\boldsymbol{\cdot})$ and its Proximal operator as $\mathcal{P}_{V f(\cdot)} (\boldsymbol{\cdot})$. Then, we have:
\begin{equation}\label{eq:app:envelope-thm}
    \grad_{\boldsymbol{\omega}} \mathcal{M}_{V f(\cdot)}(\boldsymbol{\omega})= \frac{1}{V} \left(\boldsymbol{\omega} - \mathcal{P}_{V f(\cdot)}(\boldsymbol{\omega})\right) \,.
\end{equation}
\end{theorem}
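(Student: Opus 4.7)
The plan is to apply the envelope theorem to the parametric minimization defining the Moreau envelope. Writing $\Phi(x,\omega) = \frac{1}{2V}\|x-\omega\|_2^2 + f(x)$, we have $\mathcal{M}_{Vf(\cdot)}(\omega) = \min_x \Phi(x,\omega)$ with unique minimizer $x^\star(\omega) := \mathcal{P}_{Vf(\cdot)}(\omega)$ (unique because $\Phi(\cdot,\omega)$ is strongly convex for any convex $f$, since $V>0$). Since the partial gradient in the second argument is $\nabla_\omega \Phi(x,\omega) = \frac{1}{V}(\omega - x)$, the claimed identity is exactly the envelope statement $\nabla_\omega \mathcal{M}_{Vf(\cdot)}(\omega) = \nabla_\omega \Phi(x,\omega)\big|_{x=x^\star(\omega)} = \frac{1}{V}(\omega - x^\star(\omega))$.

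To turn this heuristic into a rigorous derivation without invoking abstract envelope machinery, I would set up a direct squeeze on finite differences. For any $\omega,\omega'\in\mathbb{R}^n$, the minimality of $x^\star(\omega)$ and $x^\star(\omega')$ gives the two comparisons $\mathcal{M}_{Vf(\cdot)}(\omega') \le \Phi(x^\star(\omega),\omega')$ and $\mathcal{M}_{Vf(\cdot)}(\omega) \le \Phi(x^\star(\omega'),\omega)$. Rearranging produces the sandwich
\begin{equation*}
    \Phi(x^\star(\omega'),\omega') - \Phi(x^\star(\omega'),\omega) \;\le\; \mathcal{M}_{Vf(\cdot)}(\omega') - \mathcal{M}_{Vf(\cdot)}(\omega) \;\le\; \Phi(x^\star(\omega),\omega') - \Phi(x^\star(\omega),\omega),
\end{equation*}
in which the $f$-terms cancel and only the $\omega$-dependent quadratic remains. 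Setting $\omega' = \omega + h$ and expanding the squared norms, the upper bound becomes $\frac{1}{V}h^\top(\omega - x^\star(\omega)) + \frac{1}{2V}\|h\|_2^2$, while the lower bound becomes $\frac{1}{V}h^\top(\omega - x^\star(\omega+h)) + \frac{1}{2V}\|h\|_2^2$. Provided $x^\star$ is continuous at $\omega$, both outer quantities agree to leading order with $\frac{1}{V}h^\top(\omega - x^\star(\omega)) + o(\|h\|_2)$, which simultaneously proves differentiability of $\mathcal{M}_{Vf(\cdot)}$ and identifies its gradient.

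The only remaining step, and what I view as the main (mild) technical obstacle, is the continuity of $\omega \mapsto x^\star(\omega)$. This is the classical fact that the prox of a proper convex function is the resolvent $(I + V\partial f)^{-1}$ of a maximal monotone operator and is therefore $1$-Lipschitz (nonexpansive), so $\|x^\star(\omega) - x^\star(\omega+h)\|_2 \le \|h\|_2$ and the squeeze closes cleanly. As a consistency check, first-order optimality for the prox reads $\tfrac{1}{V}(\omega - x^\star(\omega)) \in \partial f(x^\star(\omega))$, so the gradient we obtain is itself a subgradient of $f$ at the prox point, matching the textbook identity and confirming the formula in \cref{eq:app:envelope-thm}.
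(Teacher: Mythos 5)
Your argument is correct. Note that the paper does not actually prove this statement: it imports it verbatim as Lemma~D.1 of \citet{thrampoulidis2018precise}, so there is no internal proof to compare against. What you supply is the standard self-contained derivation: the two-sided minimality comparison gives the sandwich in which the $f$-terms cancel, the quadratic expansion identifies the candidate gradient $\tfrac{1}{V}(\boldsymbol{\omega}-\mathcal{P}_{Vf(\cdot)}(\boldsymbol{\omega}))$ up to $O(\|h\|_2^2)$ terms, and the nonexpansiveness of the proximal map (resolvent of the maximal monotone operator $\partial f$) closes the squeeze, yielding Fréchet differentiability and the formula simultaneously. All steps check out: uniqueness of the prox follows from the $\tfrac{1}{V}$-strong convexity of $x\mapsto \tfrac{1}{2V}\|x-\boldsymbol{\omega}\|_2^2+f(x)$ (and existence from coercivity, since a finite convex $f$ on $\RR^n$ is minorized by an affine function), and your final consistency check $\tfrac{1}{V}(\boldsymbol{\omega}-\mathcal{P}_{Vf(\cdot)}(\boldsymbol{\omega}))\in\partial f(\mathcal{P}_{Vf(\cdot)}(\boldsymbol{\omega}))$ is exactly the first-order optimality condition for the prox. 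Your proof is, if anything, more elementary than invoking the cited lemma, since it avoids any envelope-theorem machinery beyond the finite-difference sandwich; the only fact you borrow is prox nonexpansiveness, and even continuity alone would suffice to close the limit.
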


Additionally we will use the following two properties
\begin{equation}\label{eq:app:proximal-moreau-shift}
    \mathcal{M}_{V f(\cdot + \boldsymbol{u})} (\boldsymbol{\omega}) = \mathcal{M}_{V f(\cdot)} (\boldsymbol{\omega} + \boldsymbol{u}) \,, \quad 
    \mathcal{P}_{V f(\cdot + \boldsymbol{u})} (\boldsymbol{\omega}) = \boldsymbol{u} + \mathcal{P}_{V f(\cdot)} (\boldsymbol{\omega} + \boldsymbol{u}) \,,
\end{equation}
which are easy to show from a change of variables inside the minimization.

\begin{definition}[Dual of a Number]\label{def:app:dual-number}
We define the the dual of a number $a \geq 0$ as being $a^\star$ as the only number such that $\sfrac{1}{a} + \sfrac{1}{a^\star} = 1$.
\end{definition}

\subsection{Assumptions and Preliminary Discussion}

We restate here all the assumptions that we make for the problem.

\begin{assumption}[Estimation from the dataset]\label{ass:app:estimation-from-dataset}
Given a dataset $\mathcal{D}$ made of $n$ pairs of input outputs $\{(\x_i, y_i)\}_{i=1}^{n}$, where $\x_i \in \RR^d$ and $y_i \in\RR$ we estimate the vector $\what$ as being
\begin{equation}
    \what \in \argmin_{\w \in \RR^d} \sum_{\datidx = 1}^{\nsamples} 
    \max_{
        \norm{\vecdelta_\datidx} \leq \varepsilon
    }
    \lossfun \qty(y_\datidx \frac{\w^\top \qty(\x_\datidx + \vecdelta_\datidx)}{\sqrt{d}}) 
    + \lambda \regfun(\w) \,,
\end{equation}
where $\lossfun : \RR \to \RR$ is a convex non-increasing function, $\lambda \in [0, \infty)$ and $\regfun : \RR^d \to \RR$ a convex regularization function.
\end{assumption}

\begin{assumption}[High-Dimensional Limit]\label{ass:app:high-dimensional-limit}
We consider the proportional high-dimensional regime where both the number of training data and input dimension $n, d \to \infty$ at a fixed ratio $\alpha := \sfrac{n}{d}$.
\end{assumption}

\begin{assumption}[Regularization functions and Attack Norms considered]\label{ass:app:reg-functions-attack-norm-considered}
We consider consider two settings for the perturbation norm $\lVert \cdot \rVert$ and the regularization function $r$. For the first one, the regularization function and the attack norm $\ell_p$ norms, defined as 
\begin{equation}
    \norm{\x}_p = \qty(\sum_{i=1}^n \abs{x_i}^p)^{1/p}
\end{equation}
for $p \in (1, \infty]$. We will refer to the index of the regularization function as $r$ and to the index of the norm inside the inner maximization as $p$ and we define $\pstar$ as the dual number of $p$~(\cref{def:app:dual-number}). 

For the second case, both the regularization function and the attack norm are Mahalanobis norms, defined as
\begin{equation}
    \norm{\x}_{\mathbf{\Sigma}} = \sqrt{\x^\top \mathbf{\Sigma} \x}
\end{equation}
for a positive definite matrix $\mathbf{\Sigma}$. We refer to the index of the matrix of the regularization function as $\Sigmaw$ and to the matrix of the norm inside the inner maximization as $\Sigmadelta$. In this case, we define $p = r = 2$ (in order to unify notations) and we will thus talk about $\pstar = r^\star = 2$.
\end{assumption}

This setting considers most of the losses used in machine learning setups for binary classification, \textit{e.g.} logistic, hinge, exponential losses.
We additionally remark that with the given choice of regularization the whole cost function is coercive.

\begin{assumption}[Scaling of Adversarial Norm Constraint]\label{ass:app:scaling-eps}
We suppose that the value of $\varepsilon$ scales with the dimension $d$ such that $\varepsilon \sqrt[\pstar]{d} = O_d(1)$.
\end{assumption}

\begin{assumption}[Data Distribution]\label{ass:app:data-distribution}
We consider two cases of data distribution. Both of them will rely on the following general generative process.
For each $i \in [n]$, the covariates $\x_i \in \RR^d$ are drawn i.i.d. from a data distribution $P_{\rm in}(\x)$. Then, the corresponding $y_i$ is sampled independently from the conditional distribution $P_{\rm out}$.
More succinctly, one can write the data distribution for a given pair $(\x,y)$ as
\begin{equation}
    P(\x, y) = \int_{\mathbb{R}^{d}} \dd \w_{\star} \mathbb{P}_{\rm out}\left(y \Bigg\vert \frac{\langle \w_{\star},\x\rangle}{\sqrt{d}}\right)P_{\mathrm{in}}(\x)P_{\rm \w}(\w_{\star}),
\end{equation}
The target weight vector $\wstar \in \RR^d$ is drawn from a prior probability distribution $P_{\rm \w}$.

Our two cases differentiate in the following way. For the first case, we consider $P_{\rm in}(\x) = \mathcal{N}_{\x}(\mathbf{0}, \mathrm{Id}_d)$ and $P_{\rm \w}$ which is separable, i.e. $P_{\rm \w}(\w) = \prod_{i=1}^d P_{w}(w_i)$ for a distribution $P_{w}$ in $\mathbb{R}$ with finite variance ${\rm Var}(P_{w})=\rho<\infty$.

For the second case, we consider $P_{\rm in}(\x) = \mathcal{N}_{\x}(\mathbf{0}, \Sigmax)$ and $P_{\rm \w}(\w) = \mathcal{N}_{\w}(\mathbf{0}, \Sigmatheta)$.
\end{assumption}

\begin{assumption}[Limiting Convergence of Spectral Values]\label{ass:app:limiting-spectra}
We suppose that $\Sigmax, \Sigmadelta, \Sigmatheta, \Sigmaw$ are simultaneously diagonalisable. We call \(\Sigmax = \mathrm{S}^\top \operatorname{diag}(\omega_i) \mathrm{S}\), \(\Sigmadelta = \mathrm{S}^\top \operatorname{diag}(\zeta_i) \mathrm{S} \) and \(\Sigmaw = \mathrm{S}^\top \operatorname{diag}(w_i) \mathrm{S} \). 
We define \(\bar{\boldsymbol{\theta}} = \mathrm{S} \Sigmax^\top \wstar / \sqrt{\rho}\). 
We assume that the empirical distributions of eigenvalues and the entries of $\bar{\boldsymbol{\theta}}$ jointly converge to a probability distribution $\mu$ as
\begin{equation}
    \!\!{
        \textstyle \sum_{i=1}^d \! \delta\qty(\bar{\boldsymbol{\theta}}_i - \bar{\theta}) \delta\qty(\omega_i - \omega) \delta\qty(\zeta_i - \zeta) \delta\qty(w_i - w) \to \mu \,.
    }
\end{equation}
\end{assumption}

\subsection{Problem Simplification}

Recall that we start from the following optimization problem:
\begin{equation}\label{eq:app:adversarial-training-problem}
    \Phi_d = \min_{\w \in \RR^d} \sum_{\datidx = 1}^{n} 
    \max_{
        \norm{\vecdelta_\datidx} \leq \varepsilon 
    }
    \lossfun \qty(y_\datidx \frac{\w^\top \qty(\x_\datidx + \vecdelta_\datidx)}{\sqrt{d}}) 
    + \lambda r(\w) \,,
\end{equation}
where $r(\cdot)$ is a convex regularization function and \(\lossfun(\cdot)\) is a non-increasing loss function. The non-increasing property of \(\lossfun\) allows us to simplify the inner maximization, leading to an equivalent formulation
\begin{equation}
    \Phi_d = \min_{\w \in \RR^d} \sum_{\datidx = 1}^{n} 
    \lossfun \qty(
        y_\datidx \frac{\w^\top \x_\datidx}{\sqrt{d}} 
        - \frac{\varepsilon}{\sqrt{d}} \norm{\w}_\star
    ) 
    + \lambda \regfun(\w) \,.
\end{equation}

To facilitate our analysis, we introduce auxiliary variables $P = \norm{\w}_\star^\pstar / d$ and $\hat{P}$ (the Lagrange parameter relative to this variable), which allow us to decouple the norm constraints. This leads to a min-max formulation
\begin{equation}
    \Phi_d = \min_{\w \in \RR^d, P} \max_{\hat{P}}
    \sum_{\datidx = 1}^{\nsamples} \lossfun\qty(
        y_\datidx \frac{\w^\top \x_\datidx}{\sqrt{d}} 
        - \frac{\varepsilon}{\sqrt[\pstar]{d}} \sqrt[\pstar]{P}
    ) 
    + \lambda \regfun(\w) 
    + \hat{P} \norm{\w}_{\star}^{\pstar} - d P \hat{P}, 
\end{equation}
where we switched the value of $\varepsilon$ for its value without the scaling in $d$.
This reformulation is what will allow us to apply the CGMT in subsequent steps. 

It's worth noting the significance of the scaling for $\varepsilon$ as detailed in~\cref{ass:app:scaling-eps}. In the high-dimensional limit $d \to \infty$, it's essential that all terms in $\Phi_d$ exhibit the same scaling with respect to $d$. This careful scaling ensures that our asymptotic analysis remains well-behaved and meaningful in the high-dimensional regime.

\subsection{Scalarization and Application of CGMT}

To facilitate our analysis, we further introduce effective regularization and loss functions, \(\tilde{\regfun}\) and \(\tilde{\lossfun}\), respectively. 
These functions are defined as
\begin{equation}
    \tilde{\lossfun}(\y, \z) = \sum_{\datidx = 1}^{\nsamples} \lossfun\qty(
        y_\datidx \z_\datidx
        - \frac{\varepsilon}{\sqrt[\pstar]{d}} \sqrt[\pstar]{P}
    ) \,, \quad
    \tilde{\regfun}(\w) = \regfun(\w) + \hat{P} \norm{\w}_\pstar^\pstar
    \,.
\end{equation}
A crucial step in our analysis involves inverting the order of the min-max optimization. We can justify this operation by considering the minimization with respect to \(\w \in \RR^d\) at fixed values of \(\hat{P}\) and \(P\). This reordering is valid due to the convexity of our original problem. Specifically, the objective function is convex in \(\w\) and concave in \(\hat{P}\) and \(P\), and the constraint sets are convex. 
Under these conditions, we apply Sion's minimax theorem, which guarantees the existence of a saddle point and allows us to interchange the order of minimization and maximization without affecting the optimal value.

This reformulation enables us to directly apply \cite[Lemma~11]{loureiro2021learning}. This lemma represents a meticulous application of \cref{thm:cgmt} to scenarios involving non-separable convex regularization and loss functions. The result is a lower-dimensional equivalent of our original high-dimensional minimization problem that represent the limiting behavior of the solution of the high-dimensional problem.

Consequently, our analysis now focuses on a low-dimensional functional, which takes the form 
\begin{equation}\label{eq:app:exrtemisation-problem-gordon}
    \Tilde{\Phi} = \min_{P, m, \eta, \tau_1} \max_{\hat{P}, \kappa, \tau_2, \nu} 
    \Bigg[ 
        \frac{\kappa \tau_1}{2}
        - \alpha \mathcal{L}_\lossfun
        - \frac{\eta}{2\tau_2} \qty(\nu^2 \rho + \kappa^2) 
        - \frac{\eta \tau_2}{2}
        - \mathcal{L}_\regfun
        + m \nu
        - P \hat{P}
    \Bigg] 
\end{equation}
where we have restored the min max order of the problem.

In this expression, $\gaussvecone$ and $\gaussvectwo$ are independent Gaussian vectors with i.i.d. standard normal components. The terms $\mathcal{L}_\lossfun$ and $\mathcal{L}_\regfun$ represent the scaled averages of Moreau Envelopes (\cref{eq:app:moreau-definition}) 
\begin{align}
    \mathcal{L}_\lossfun &= \frac{1}{n} \mathbb{E}\qty[
        \mathcal{M}_{\frac{\tau_1}{\kappa} \tilde{\lossfun}(\y, \cdot)}\qty(
            \frac{m}{\sqrt{\rho}} \s + \eta \gaussvectwo
        ) 
    ] \\
    \mathcal{L}_\regfun & = \frac{1}{d} \mathbb{E}\qty[
        \mathcal{M}_{\frac{\eta}{\tau_2} \tilde{\regfun}(\cdot)}\qty(\frac{\eta}{\tau_2}\qty(\kappa \gaussvecone + \nu \wstar)) 
    ]
\end{align}
The extremization problem in~\cref{eq:app:exrtemisation-problem-gordon} is related to the original optimization problem in \cref{eq:app:adversarial-training-problem} as it can be thought as the leading part in the limit $n,d\to\infty$.

This dimensional reduction is the step that allows us to study the asymptotic properties of our original high-dimensional problem through a more tractable low-dimensional optimization and thus have in the end a low dimensional set of equations to study.

It's important to note that the optimization problem \(\Tilde{\Phi}\) is still implicitly defined in terms of the dimension \(d\) and, consequently, as a function of the sample size \(n\). 
We introduce two variables
\begin{equation}
    \w_{\rm eq} = \mathcal{P}_{\frac{\eta^*}{\tau_2^*} \tilde{\regfun}\left(.\right)}\left(\frac{\eta^*}{\tau_2^*}\left(\nu^* \mathbf{t}+\kappa^* \mathbf{g}\right)\right) \, , \quad 
    \z_{\rm eq} = \mathcal{P}_{\frac{\tau_1^*}{\kappa^*} \tilde{\lossfun}(,, \mathbf{y})}\left(\frac{m^*}{\sqrt{\rho}} \mathbf{s}+\eta^* \mathbf{h}\right)
\end{equation}
where $(\eta^\star, \tau_2^\star, P^\star, \hat{P}^\star, \kappa^\star, \nu^\star, m^\star,\tau_1^\star)$ are the extremizer points of $\Tilde{\Phi}$.

Building upon \citet[Theorem~5]{loureiro2021learning}, we can establish a convergence result. Let \(\what\) be an optimal solution of the problem defined in \cref{eq:app:adversarial-training-problem}, and let \(\hat{\z} = \frac{1}{\sqrt{d}} \boldsymbol{X} \what\). For any Lipschitz function \(\varphi_1 : \RR^d \to \RR\), and any separable, pseudo-Lipschitz function \(\varphi_2 : \RR^n \to \RR\), there exist constants \(\epsilon, C, c > 0\) such that
\begin{equation}
\begin{aligned}
    & \mathbb{P}\left(\left|\phi_1\left(\frac{\hat{\mathbf{w}}}{\sqrt{d}}\right)-\mathbb{E}\left[\phi_1\left(\frac{\w_{\rm eq}}{\sqrt{d}}\right)\right]\right| \geq \epsilon\right) \leq \frac{C}{\epsilon^2} e^{-c n \epsilon^4} \\
    & \mathbb{P}\left(\left|\phi_2\left(\frac{\hat{\mathbf{z}}}{\sqrt{n}}\right)-\mathbb{E}\left[\phi_2\left(\frac{\z_{\rm eq}}{\sqrt{n}}\right)\right]\right| \geq \epsilon\right) \leq \frac{C}{\epsilon^2} e^{-c n \epsilon^4}
\end{aligned}
\end{equation}

It demonstrates that the limiting values of any function depending on \(\what\) and \(\hat{\z}\) can be computed by taking the expectation of the same function evaluated at \(\w_{\rm eq}\) or \(\z_{\rm eq}\), respectively. This convergence property allows us to translate results from our low-dimensional proxy problem back to the original high-dimensional setting with high probability.

\subsection{Derivation of Saddle Point equations}

We now want to show that extremizing the values of $m,\eta, \tau_1, P, \hat{P}, \nu, \tau_2, \kappa$ lead to the optimal value $\Tilde{\Phi}$ of~\cref{eq:app:exrtemisation-problem-gordon}. We are going to directly derive the saddle point equations and then argue that in the high-dimensional limit they become exactly the ones reported in the main text.

We obtain the first set of derivatives that depend only on the loss function and the channel part by taking the derivatives with respect to $m,\eta,\tau_1,P$ to obatin
\begin{equation}\label{eq:app:cgmt-eq-channel-part}
\begin{aligned}
    \pdv{m} &: \nu = \alpha \frac{\kappa}{n \tau_1} \mathbb{E}\left[\left(\frac{m}{\eta \rho} \mathbf{h}-\frac{\mathbf{s}}{\sqrt{\rho}}\right)^{\top} \mathcal{P}_{\frac{\tau_1}{\kappa} \tilde{\lossfun}(., \mathbf{y})}\left(\frac{m}{\sqrt{\rho}} \mathbf{s}+\eta \mathbf{h}\right)\right] \\
    \pdv{\eta} &: \tau_2=\alpha \frac{\kappa}{\tau_1} \eta-\frac{\kappa \alpha}{\tau_1 n} \mathbb{E}\left[\mathbf{h}^{\top} \mathcal{P}_{\frac{\tau_1}{\kappa} \tilde{\lossfun}(\cdot, \mathbf{y})}\left(\frac{m}{\sqrt{\rho}} \mathbf{s}+\eta \mathbf{h}\right)\right] \\
    \pdv{\tau_1} &: \frac{\tau_1^2}{2}=\frac{1}{2} \alpha \frac{1}{n} \mathbb{E}\left[\left\|\frac{m}{\sqrt{\rho}} \mathbf{s}+\eta \mathbf{h}-\mathcal{P}_{\frac{\tau_1}{\kappa} \tilde{\lossfun}(\cdot, y)}\left(\frac{m}{\sqrt{\rho}} \mathbf{s}+\eta \mathbf{h}\right)\right\|_2^2\right] \\
    \pdv{P} &: \hat{P} = \frac{\alpha}{n} \partial_P \mathbb{E}\qty[
        \mathcal{M}_{\frac{\tau_1}{\kappa} \tilde{\lossfun}(\y, \cdot)}\qty(
            \frac{m}{\sqrt{\rho}} \s + \eta \gaussvectwo
        ) 
    ]
\end{aligned}
\end{equation}
By taking the derivatives with respect to the remaining variables $\kappa, \nu, \tau_2, \hat{P}$ we obtain a set of equations depending on regularization and prior over the teacher weights
\begin{equation}\label{eq:app:cgmt-eq-prior-part}
\begin{aligned}
    \pdv{\kappa} &: \tau_1=\frac{1}{d} \mathbb{E}\left[\mathbf{g}^{\top} \mathcal{P}_{\frac{\eta}{\tau_2} \tilde{\regfun}(\cdot)}\left(\frac{\eta}{\tau_2}\left(\nu \wstar+\kappa \mathbf{g}\right)\right)\right] \\
    \pdv{\nu} &: m =\frac{1}{d} \mathbb{E}\left[\wstar^{\top} \mathcal{P}_{\frac{\eta}{\tau_2} \tilde{\regfun}(\cdot)} \left(\frac{\eta}{\tau_2}\left(\nu \wstar+\kappa \mathbf{g}\right)\right)\right] \\
    \pdv{\tau_2} &: \frac{1}{2 d} \frac{\tau_2}{\eta} \mathbb{E}\left[\left\|\frac{\eta}{\tau_2}\left(\nu \wstar+\kappa \mathbf{g}\right)-\mathcal{P}_{\frac{\eta}{\tau_2} \tilde{\regfun}(\cdot)} \left(\frac{\eta}{\tau_2}\left(\nu  \wstar+\kappa \mathbf{g}\right)\right)\right\|_2^2\right] =\frac{\eta}{2 \tau_2}\left(\nu^2 \rho + \kappa^2\right) - m \nu - \kappa \tau_1 + \frac{\eta \tau_2}{2} + \frac{\tau_2}{2 \eta} \frac{m^2}{\rho} \\
    \pdv{\hat{P}} &: P = \frac{1}{d}\partial_{\hat{P}} \mathbb{E}\qty[
        \mathcal{M}_{\frac{\eta}{\tau_2} \tilde{\regfun}(\cdot)}\qty(\frac{\eta}{\tau_2}\qty(\kappa \gaussvecone + \nu \wstar)) 
    ]
\end{aligned}
\end{equation}
The rewriting of these equations in the desired form in~\cref{thm:self-consistent-equations-ell-p,thm:self-consistent-equations-mahalanobis} follows from the same considerations as in  \citet[Appendix~C.2]{loureiro2021learning}.

To perform this rewriting the first ingredient we need is the following change of variables
\begin{equation}\label{eq:app:gordon-replica-dict}
\begin{aligned}
    m &\leftarrow m \,, & \quad q &\leftarrow \eta^2 + \frac{m^2}{\rho} \,, & \quad V &\leftarrow \frac{\tau_1}{\kappa} \,, & \quad P&\leftarrow P \,, \\ 
    \hat{V} &\leftarrow \frac{\tau_2}{\eta} \,, & \quad \hat{q} &\leftarrow \kappa^2 \,, & \quad \hat{m} &\leftarrow \nu \,, & \quad \hat{P}&\leftarrow \hat{P} \,.
\end{aligned}
\end{equation}
ant the use of Isserlis' theorem \citep{isserlis1918formula} to simplify the expectation where Gaussian $\gaussvecone$, $\gaussvectwo$ vectors are present.

\subsubsection{Rewriting of the Channel Saddle Points}

To obtain specifically the form implied in the main text we introduce
\begin{equation}
    \mathcal{Z}_0(y, \omega, V) = \int \frac{\dd x}{\sqrt{2 \pi V}} e^{-\frac{1}{2 V}(x-\omega)^2} \delta\left(y-f^0(x)\right) \,,
\end{equation}
where this definition is equivalent to the one presented in~\cref{eq:main:def-z0}. The function $\mathcal{Z}_0$ can be interpreted as a partition function of the conditional distribution $\mathbb{P}_{\rm out}$ and contains all of the information about the label generating process.

\subsubsection{Specialization of Prior Saddle Points for \texorpdfstring{$\ell_p$}{Lp} norms}

In the case of \(\ell_p\) norms, we can leverage the separable nature of the regularization to simplify our equations. The key insight here is that the proximal operator of a separable regularization is itself separable.
This property allows us to treat each dimension independently, leading to a significant simplification of our high-dimensional problem.

First, due to the separability, all terms depending on the proximal of either $\tilde{\lossfun}$ or $\tilde{\regfun}$ simplify the $n$ or $d$ at the denominator. 
This cancellation is crucial as it eliminates the explicit dependence on the problem dimension, allowing us to derive dimension-independent equations.

Next, we introduce
\begin{equation}
    \mathcal{Z}_{\rm w}(\gamma, \Lambda) = \int \dd{w} P_{\rm w} (w) e^{-\frac{\Lambda}{2} w^2 + \gamma w},
\end{equation}
which, in turn, leads in the form shown in~\cref{eq:main-saddle-point}.

\subsubsection{Specialization of Prior Saddle Points for Mahalanobis norms}

In the case of Mahalanobis norm, the form of the proximal of the effective regularization function is specifically
\begin{equation}
    \mathcal{P}_{V \tilde{\regfun}(\cdot)} (\boldsymbol{\omega}) = \argmin_{\z} \qty[
        \lambda \z^\top \Sigmaw \z + \hat{P} \z^\top \Sigmadelta \z + \frac{1}{2V} \norm{\z - \boldsymbol{\omega}}_2^2
    ] = \frac{1}{V} \qty(2 \hat{P} \Sigmadelta + 2 \lambda \Sigmaw + \frac{1}{V})^{-1} \boldsymbol{\omega} 
\end{equation}

By substituting this explicit form into the equations from \cref{eq:app:cgmt-eq-prior-part}, we obtain a set of simplified equations that still depends on the dimension
\begin{equation}
\begin{aligned}
    m &= \frac{1}{d} \tr \qty[ 
        \hat{m} \Sigmax^{\top} \boldsymbol{\theta}_0 \boldsymbol{\theta}_0^{\top} \Sigmax \qty(\lambda \Sigmaw + \hat{P} \Sigmadelta + \hat{V} \Sigmax)^{-1} 
    ] \\
    q &= \frac{1}{d} \tr \qty[
        \qty( \hat{m}^2 \Sigmax^{\top} \boldsymbol{\theta}_0 \boldsymbol{\theta}_0^{\top} \Sigmax + \hat{q}  \Sigmax ) \Sigmax \qty(\lambda \Sigmaw + \hat{P} \Sigmadelta + \hat{V} \Sigmax)^{-2}
    ] \\
    V &= \frac{1}{d} \tr \qty[
        \Sigmax \qty(\lambda \Sigmaw + \hat{P} \Sigmadelta + \hat{V} \Sigmax)^{-1}
    ] \\
    P &= \frac{1}{d} \tr \qty[
        \qty( \hat{m}^2 \Sigmax^{\top} \boldsymbol{\theta}_0 \boldsymbol{\theta}_0^{\top} \Sigmax + \hat{q}  \Sigmax ) \Sigmadelta \qty(\lambda \Sigmaw + \hat{P} \Sigmadelta + \hat{V} \Sigmax)^{-2}
    ]
\end{aligned}
\end{equation}

The final step involves taking the high-dimensional limit of these equations. Here, we leverage our assumptions about the trace of the relevant matrices to further simplify the expressions so that they only depend on the limiting distribution $\mu$ from \cref{ass:app:limiting-spectra}.

Specifically, the assumptions on the trace allow us to replace certain high-dimensional operations with scalar quantities, effectively reducing the dimensionality of our problem. This dimensionality reduction is crucial for obtaining tractable equations in the high-dimensional limit. In the end we obtain the equations in \cref{eq:main-saddle-point-channel-sigma}.

\subsection{Different channels and Prior functions}

We want to show how the different functions $\mathcal{Z}_0, \mathcal{Z}_{\rm w}$ look like for some choices of output channel and prior in the data model.
For the case of a probit output channel, we have by direct calculation 
\begin{equation}
    \mathcal{Z}_0(y, \omega, V) = \frac{1}{2} \erfc\qty(-y\frac{\omega}{\sqrt{2 (V + \tau^2) }})
\end{equation}
For the case of a channel of the form $y = \operatorname{sign}(z)+\sqrt{\Delta^* \xi}$, one has that
\begin{equation}
    \mathcal{Z}_{0}(y, \omega, V) = \mathcal{N}_y\left(1, \Delta^{\star}\right) \frac{1}{2}\left(1+\operatorname{erf}\left(\frac{\omega}{\sqrt{2 V}}\right)\right)+\mathcal{N}_y\left(-1, \Delta^{\star}\right) \frac{1}{2}\left(1-\operatorname{erf}\left(\frac{\omega}{\sqrt{2 V}}\right)\right)
\end{equation}

For the choices of the prior over the teacher weights, we have for a Gaussian prior that
\begin{equation}
    \mathcal{Z}_{\rm{w}}(\gamma, \Lambda) =\frac{1}{\sqrt{\Lambda + 1}} e^{\gamma^2 / 2(\Lambda + 1)}
\end{equation}
or for sparse binary weights
\begin{equation}
    \mathcal{Z}_{\rm w}(\gamma, \Lambda) = \rho+e^{-\frac{\Lambda}{2}}(1-\rho) \cosh (\gamma)
\end{equation}

\subsection{Error Metrics}

To derive the form of the generalization error the procedure is the same as detailed in \citet{aubin_generalization_2020} or in \citet[Appendix~A]{mignacco2020role}. We report here the final form being
\begin{equation}
    \generr = \frac{1}{\pi} \arccos\qty(\frac{m}{\sqrt{(\rho + \tau^2) q}})
\end{equation}

To derive the form for the boundary error one can proceed in the same way as \citet[Appendix~D]{Gerace_2021} and obtain
\begin{equation}
    \bounderr = \int_0^{\varepsilon \sqrt[\pstar]{P} / \sqrt{q}} 
    \erfc\qty(- \frac{m}{\sqrt{q}} \lambda \frac{1}{\sqrt{2 (\rho + \tau^2 - \frac{m^2}{q})}} ) 
    \frac{ e^{-\frac{1}{2} \lambda^2} }{ \sqrt{2\pi} } \dd{\lambda}
\end{equation}

We are also interested in the \textit{average teacher margin} defined as 
\begin{equation}
    \mathbb{E}\qty[y \wstar^\top \x]
\end{equation}
which can be expressed as a function of the solutions of the saddle point equations as follows:
\begin{equation}
    \sqrt{\frac{2}{\pi}}\frac{\sqrt{\rho}}{\sqrt{1 + \frac{\tau^2}{\rho}}}
\end{equation}

\subsection{Asymptotic in the low sample complexity regime}\label{sec:app:low-sample-complexity}

\begin{figure}
    \centering
    \includegraphics[width=0.9\textwidth]{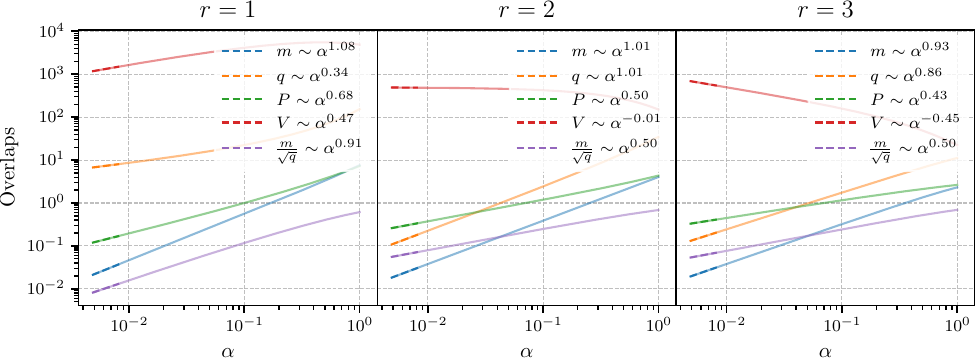}
    \caption{
        Scaling of the overlap parameters in the low sample complexity regime for \(p = \infty\), \(\varepsilon = 0.3\), \(\rho = 1\) and \(\lambda = 10^{-3}\). 
        The numbers presented in the legends are the linear fit in log-log scale of the dashed part.
    }
    \label{fig:app:numerical-scalings}
\end{figure}

This section examines the asymptotic behavior of our model in the regime of low sample complexity. Our analysis is motivated by numerical observations of the overlaps $m,q,P,V$ in the small $\alpha$ regime, as illustrated in \cref{fig:app:numerical-scalings}.

Based on these observations, we propose a general scaling ansatz for the overlap parameters (solutions of the equations presented in~\cref{thm:self-consistent-equations-ell-p,thm:self-consistent-equations-mahalanobis}) as functions of the sample complexity $\alpha$
\begin{equation}
    m^\star = m_0 \alpha^{\delta_m} \,, \quad q^\star = q_0 \alpha^{\delta_q} \,, \quad V^\star = V_0 \alpha^{\delta_V} \,, \quad P^\star = P_0 \alpha^{\delta_P} \,,
\end{equation}
where the values with a zero subscript do not depend on $\alpha$ and the exponents are all positive. We focus on the noiseless case $\tau =0$.

We are interested in the expansion of the generalization error and the boundary error, keeping only the most relevant terms in the limit $\alpha \to 0^+$. For the generalization error we have
\begin{equation}
    \generr = \frac{1}{\pi} \arccos\qty(\frac{m^\star}{\sqrt{\rho q^\star }}) = \frac{1}{2} - \frac{m_0}{\pi \sqrt{\rho q_0}} \alpha^{\delta_m - \frac{\delta_q}{2}} + o\qty(\alpha^{\delta_m - \delta_q / 2})
\end{equation}
and for the boundary error a similar expansion leads to
\begin{align}
    \bounderr = \int_0^{\frac{\sqrt[\pstar]{P^\star}}{q^\star}} \erfc\left( \frac{ -\frac{m^\star}{\sqrt{q^\star}} \nu}{\sqrt{2 \qty(\rho  - (m^\star)^2 / q^\star)}} \right) \frac{e^{-\frac{\nu^2}{2}}}{\sqrt{2\pi}} \dd{\nu} = \frac{\advgencost \sqrt[\pstar]{P_0}}{\sqrt{2 \pi q_0}} \alpha^{\delta_P / \pstar - \delta_q / 2} + \frac{\theta_0 }{2 \pi} \alpha^{2 \delta_P / \pstar + \delta_m - 2\delta_q} + o\qty(\alpha^\kappa)
\end{align}
where $\kappa = \max (\delta_P / \pstar - \delta_q / 2, 2 \delta_P / \pstar + \delta_m - 2\delta_q) $.

Numerical simulations reveal a clear distinction in the low $\alpha$ regime between cases where the regularization parameter $r = \pstar$ and $r \neq \pstar$. \Cref{fig:app:numerical-scalings} illustrates this difference for a fixed regularization parameter $\lambda$. We identify two scenarios that characterize the behavior of the leading term in the boundary error expansion
\begin{description}
    \item[When $\delta_P / \pstar > \delta_q / 2$ : ] 
    This occurs when $\pstar = r = 1$. In this case, the leading term has a positive exponent, causing it to vanish as $\alpha \to 0$.
    \item[When $\delta_P / \pstar = \delta_q / 2$ : ] 
    This scenario arises when $r \neq \pstar = 1$. Here, the exponent of the leading term becomes zero, resulting in a constant term independent of $\alpha$.
\end{description}
Notably, in all cases we've examined, the second terms in both the generalization error and boundary error expansions consistently approach zero in the limit of low sample complexity.
\section{Rademacher Complexity Analysis}
\label{app:rad}
Missing proofs for main text results.

\begin{proposition}
    Let $\varepsilon, \sigma > 0$. Consider a sample $\mathcal{S} = \left\{ (\x_1, y_1), \ldots, (\x_n, y_n) \right\}$, and let $\mathcal{H}_F$ be the hypothesis class defined in \cref{eq:H_general_def}. Then, it holds:
    \begin{equation}
        \begin{split}
            \rad (\Tilde{\mathcal{H}}_F) \leq \max_{i \in [n]} \prescript{}{F}{\|\xb_i\|_\star} \mathcal{W}_F \sqrt{\frac{2}{\sigma n}} + \frac{\varepsilon}{2\sqrt{n}} \sup_{\w: r(\w) \leq \mathcal{W}_F^2} \|\w\|_\star,
        \end{split}
    \end{equation}
    where $\prescript{}{F}{\lVert\cdot\rVert_\star}, \lVert\cdot\rVert_\star$ denote the dual norm of $\prescript{}{F}{\lVert\cdot\rVert}, \lVert\cdot\rVert$, respectively.
\end{proposition}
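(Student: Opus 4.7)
The starting point is to exploit the definition of the dual norm to evaluate the inner minimization in closed form: for any fixed $\w$, one has $\min_{\|\vecdelta\| \le \varepsilon} \innerprod{\w}{\x_i + \vecdelta} = \innerprod{\w}{\x_i} - \varepsilon \|\w\|_\star$. Plugging this back in gives
\begin{equation*}
    \rad(\Tilde{\mathcal{H}}_r) = \frac{1}{n}\mathbb{E}_\sigma\!\left[\sup_{\w : r(\w) \le \mathcal{W}_r^2} \Bigl(\sum_{i=1}^n \sigma_i \innerprod{\w}{\x_i} \;-\; \varepsilon \|\w\|_\star \sum_{i=1}^n \sigma_i\Bigr)\right].
\end{equation*}
Then I split the supremum of a sum by the sum of suprema, which yields the upper bound
\begin{equation*}
    \rad(\Tilde{\mathcal{H}}_r) \;\le\; \underbrace{\frac{1}{n}\mathbb{E}_\sigma\!\left[\sup_{\w} \sum_{i=1}^n \sigma_i \innerprod{\w}{\x_i}\right]}_{=:\, A} \;+\; \underbrace{\frac{\varepsilon}{n}\,\mathbb{E}_\sigma\!\left[\sup_{\w} \Bigl(-\|\w\|_\star \sum_{i=1}^n \sigma_i\Bigr)\right]}_{=:\, B}.
\end{equation*}

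For $A$, which is the standard empirical Rademacher complexity of $\mathcal{H}_r$, I invoke the result of \cite{KST08} on Rademacher complexities of classes defined by $\sigma$-strongly convex regularizers: if $r$ is $\sigma$-strongly convex with respect to $\prescript{}{r}{\|\cdot\|}$, then
\begin{equation*}
    A \;\le\; \max_{i \in [n]} \prescript{}{r}{\|\x_i\|_\star}\, \mathcal{W}_r \sqrt{\tfrac{2}{\sigma n}}.
\end{equation*}

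For $B$, the key observation is that $\|\w\|_\star \ge 0$, so taking $\w = \mathbf{0}$ shows the sup is always non-negative. Precisely, $\sup_\w (-\|\w\|_\star \sum_i \sigma_i) = \bigl(\sup_\w \|\w\|_\star\bigr)\cdot \max\!\bigl(0, -\sum_i \sigma_i\bigr)$. Since $\sum_i \sigma_i$ has a symmetric distribution, $\mathbb{E}_\sigma \max(0, -\sum_i \sigma_i) = \tfrac12 \mathbb{E}_\sigma |\sum_i \sigma_i|$, and Jensen's inequality (or Khintchine) gives $\mathbb{E}_\sigma|\sum_i \sigma_i| \le \sqrt{n}$. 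Combining these yields
\begin{equation*}
    B \;\le\; \frac{\varepsilon}{2\sqrt{n}}\, \sup_{\w : r(\w) \le \mathcal{W}_r^2} \|\w\|_\star,
\end{equation*}
and adding $A$ and $B$ gives the claim.

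\textbf{Expected obstacles.} The dual-norm rewriting of the inner minimization is standard, and the KST bound for $A$ is invoked as a black box. The only mildly subtle point is obtaining the correct constant $\tfrac{1}{2}$ in $B$: a naive bound of the form $\mathbb{E}|\sum_i \sigma_i| \le \sqrt{n}$ applied directly to the worst-case sign of $\sum_i \sigma_i$ would lose a factor of two. The argument needs the positivity of the dual norm to restrict to the event $\{\sum_i \sigma_i < 0\}$ (equivalently, taking the positive part), after which symmetry of the Rademacher variables produces the $\tfrac12$. This is the step to be careful about.
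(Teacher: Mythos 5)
Your proposal is correct and follows essentially the same route as the paper: rewrite the inner minimization via the dual norm, split the supremum by subadditivity, bound the clean Rademacher term with the strong-convexity result of \citet{KST08}, and control the perturbation term by the symmetry of the Rademacher signs together with Jensen's inequality. Your explicit handling of the positive part $\max(0,-\sum_i\sigma_i)$ is just a spelled-out version of the paper's ``symmetry of $\sigma$'' step that yields the factor $\tfrac12$, so the two arguments coincide.
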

\begin{proof}
    We have:
    \begin{equation}
        \begin{split}
            \rad (\Tilde{\mathcal{H}}_F) & = \mathbb{E}_{\sigma} \left[ \frac{1}{n} \sup_{h \in \mathcal{H}_F} \sum_{i = 1}^n \sigma_i \min_{\|\xb_i^\prime - \xb_i\| \leq \varepsilon} y_i h(\xb^\prime_i) \right] \\
            & = \mathbb{E}_{\sigma} \left[ \frac{1}{n} \sup_{h \in \mathcal{H}_F} \sum_{i = 1}^n \sigma_i y_i \innerprod{\w}{\xb_i} - \varepsilon \|\w\|_\star \right] \;\;\;\;\;\;\;\;\;\;\;\;\;\;\;\;\;\;\;\;\;\;\;\;\;\;\;\;\;\;\;\;\;\;\;\; (\text{Def. of dual norm}) \\
            & \leq \mathbb{E}_{\sigma} \left[ \frac{1}{n} \sup_{h \in \mathcal{H}_F} \sum_{i = 1}^n \sigma_i y_i \innerprod{\w}{\xb_i} \right] + \mathbb{E}_{\sigma} \left[ \frac{1}{n} \sup_{h \in \mathcal{H}_F} \sum_{i = 1}^n - \varepsilon \sigma_i  \|\w\|_\star \right] \;\; (\text{Subadditivity of supremum}) \\
            & = \rad (\mathcal{H}_F) + \frac{\varepsilon}{2} \mathbb{E}_\sigma \left[ \left\vert \frac{1}{n} \sum_{i = 1}^n \sigma_i \right\vert \right] \sup_{\w: r(\w) \leq \mathcal{W}_F^2} \| \w \|_\star. \;\;\;\;\;\;\;\;\;\;\;\;\;\;\;\;\;\;\;\;\;\;\;\; (\text{Symmetry of } \sigma)
        \end{split}
    \end{equation}
    For the first term, i.e. the ``clean'' Rademacher Complexity, we plug in \citep[Theorem~1]{KST08}. By Jensen's inequality, we have for the second term:
    \begin{equation}
        \mathbb{E}_\sigma \left[ \left\vert \frac{1}{n} \sum_{i = 1}^n \sigma_i \right\vert \right] \leq \sqrt{\mathbb{E}_\sigma \left[ \left( \frac{1}{n} \sum_{i = 1}^n \sigma_i \right)^2 \right]} = \frac{1}{\sqrt{n}},
    \end{equation}
    which concludes the proof.
\end{proof}

\begin{corollary}
        Let $\varepsilon > 0$. Then:
        \begin{equation}
            \rad (\Tilde{\mathcal{H}}_{\|\cdot\|_2^2}) \leq \frac{\max_{i \in [n]} \|\xb_i\|_2 \mathcal{W}_2}{\sqrt{m}} + \frac{\varepsilon \mathcal{W}_2}{2 \sqrt{n}} \sqrt{\lambda_{\mathrm{min}}^{-1} (\Sigmadelta)}.
        \end{equation}
    \end{corollary}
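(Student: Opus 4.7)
The plan is to apply the previous Proposition with $r(\w) = \lVert \w \rVert_2^2$ and to identify the relevant strong convexity constant and dual norms. The function $\w \mapsto \lVert \w \rVert_2^2$ is $2$-strongly convex with respect to the Euclidean norm $\lVert \cdot \rVert_2$ (its Hessian is $2 I_d$), so one can set $\sigma = 2$ and $\prescript{}{r}{\lVert \cdot \rVert} = \lVert \cdot \rVert_2$, whose dual is itself. The perturbation norm is $\lVert \cdot \rVert = \lVert \cdot \rVert_{\Sigmadelta}$, and a standard duality calculation (using Cauchy--Schwarz on $\Sigmadelta^{1/2} \w$ and $\Sigmadelta^{-1/2} \uu$) gives $\lVert \cdot \rVert_\star = \lVert \cdot \rVert_{\Sigmadelta^{-1}}$.

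With these identifications, the first term in the right-hand side of \cref{eq:rad_generic_bound} becomes
\begin{equation*}
    \max_{i \in [n]} \lVert \xb_i \rVert_2 \, \mathcal{W}_2 \sqrt{\tfrac{2}{2 n}} \;=\; \frac{\max_{i \in [n]} \lVert \xb_i \rVert_2 \, \mathcal{W}_2}{\sqrt{n}},
\end{equation*}
which matches the first summand of the corollary (modulo the apparent typo $m \leftrightarrow n$). For the second term, I need to control $\sup_{\w : \lVert \w \rVert_2 \le \mathcal{W}_2} \lVert \w \rVert_{\Sigmadelta^{-1}}$. Using the eigendecomposition (or the variational characterization of the largest eigenvalue),
\begin{equation*}
    \w^\top \Sigmadelta^{-1} \w \;\le\; \lambda_{\max}(\Sigmadelta^{-1}) \, \lVert \w \rVert_2^2 \;=\; \lambda_{\min}^{-1}(\Sigmadelta) \, \lVert \w \rVert_2^2,
\end{equation*}
and taking square roots gives $\sup_{\lVert \w \rVert_2 \le \mathcal{W}_2} \lVert \w \rVert_{\Sigmadelta^{-1}} \le \mathcal{W}_2 \sqrt{\lambda_{\min}^{-1}(\Sigmadelta)}$, which yields the second summand in the corollary after multiplying by $\varepsilon / (2 \sqrt{n})$.

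There is no real obstacle here: the corollary is a direct specialization of the preceding proposition, and the only substantive ingredients are (i) identifying the strong convexity constant of $\lVert \cdot \rVert_2^2$ and (ii) computing the dual of the Mahalanobis norm $\lVert \cdot \rVert_{\Sigmadelta}$, together with the trivial eigenvalue bound for its sup over an Euclidean ball. Combining the two summands concludes the proof.
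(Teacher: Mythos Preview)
Your proof is correct and follows essentially the same approach as the paper: apply Proposition~\ref{prop:upper_bound_rad}, identify the dual of the Mahalanobis norm $\lVert\cdot\rVert_{\Sigmadelta}$ as $\lVert\cdot\rVert_{\Sigmadelta^{-1}}$, and bound $\sup_{\lVert\w\rVert_2\le\mathcal{W}_2}\lVert\w\rVert_{\Sigmadelta^{-1}}$ via the variational characterization of the top eigenvalue (the paper invokes Courant--Fischer--Weyl, which is the same thing). Your identification of the strong convexity constant as $\sigma=2$ is in fact more accurate than the paper's statement (``1-strongly convex''), and it is $\sigma=2$ that produces the correct $1/\sqrt{n}$ factor in the first summand.
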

    \begin{proof}
        Leveraging Proposition~\ref{prop:upper_bound_rad}, the first term of the RHS follows from the fact that the squared $\ell_2$ norm is 1-strongly convex (w.r.t itself). For the second term, we have that the dual norm of $\left \lVert \cdot \right \rVert_{\Sigmadelta}$ is given by $\left \lVert \cdot \right \rVert_{\Sigmadelta^{-1}} = \sqrt{\innerprod{\w}{ \Sigmadelta^{-1} \w}}$. Then, it holds:
        \begin{equation}
            \begin{split}
                \sup_{\w: \|\w\|_2^2 \leq \mathcal{W}_2^2} \| \w \|_\star & = \sup_{\w: \|\w\|_2^2 \leq \mathcal{W}_2^2} \| \w \|_{\Sigmadelta^{-1}} \\
                & = \mathcal{W}_2 \sup_{\w: \|\w\|_2 \leq 1} \| \w \|_{\Sigmadelta^{-1}} \\
                & = \mathcal{W}_2 \sqrt{\lambda_{\mathrm{max}}(\Sigmadelta^{-1})},
            \end{split}
        \end{equation}
        where the last equality follows from Courant–Fischer–Weyl's min-max principle.
    \end{proof}

\begin{corollary}
        Let $\Sigmaw = \sum_{i = 1}^d \alpha_i \mathbf{v}_i \mathbf{v}_i^T$ and $\Sigmadelta = \sum_{i=1}^d \lambda_i \mathbf{v}_i \mathbf{v}_i^T$, with $\mathbf{v}_i \in \mathbb{R}^d$ being orthonormal. Then:
        \begin{equation}\label{eq:app:A_S_commute_bound}
            \rad (\Tilde{\mathcal{H}}_{\|\cdot\|_A^2}) \leq \frac{\mathcal{W}_A \max_{i \in [n]} \|\xb_i\|_{\Sigmaw^{-1}}}{\sqrt{m}} + \frac{\varepsilon \mathcal{W}_A}{2 \sqrt{n}} \sqrt{\max_{i \in [d]} \frac{1}{\lambda_i \alpha_i}}.
        \end{equation}
    \end{corollary}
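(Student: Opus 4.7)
The plan is to apply \cref{prop:upper_bound_rad} directly, specialized to the regularizer $r(\w) = \|\w\|_A^2 = \w^\top \Sigmaw \w$ and the perturbation norm $\|\cdot\|_{\Sigmadelta}$. First I would verify that $r$ is $2$-strongly convex with respect to the Mahalanobis norm $\prescript{}{r}{\|\cdot\|} := \|\cdot\|_{\Sigmaw}$. Since $\nabla^2 r = 2\Sigmaw$, a Taylor expansion gives $r(\y) - r(\x) - \nabla r(\x)^\top(\y-\x) = (\y-\x)^\top \Sigmaw (\y-\x) = \|\y-\x\|_{\Sigmaw}^2$, yielding $\sigma = 2$.

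Next I would identify the two dual norms that appear in \cref{eq:rad_generic_bound}. The dual of $\|\cdot\|_{\Sigmaw}$ is $\|\cdot\|_{\Sigmaw^{-1}}$, so with $\sigma = 2$ the first term of the bound becomes $\max_{i \in [n]} \|\x_i\|_{\Sigmaw^{-1}} \mathcal{W}_A / \sqrt{n}$, matching the first summand of \cref{eq:app:A_S_commute_bound}. Likewise the dual of $\|\cdot\|_{\Sigmadelta}$ is $\|\cdot\|_{\Sigmadelta^{-1}}$, so the second term reduces to
\begin{equation*}
\frac{\varepsilon}{2\sqrt{n}} \sup_{\w \,:\, \w^\top \Sigmaw \w \leq \mathcal{W}_A^2} \sqrt{\w^\top \Sigmadelta^{-1} \w}.
\end{equation*}

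The only real computation is evaluating this constrained supremum. Here the assumption that $\Sigmaw$ and $\Sigmadelta$ share the orthonormal eigenbasis $\{\mathbf{v}_i\}$ is essential: writing $\w = \sum_i c_i \mathbf{v}_i$, the constraint diagonalises to $\sum_i \alpha_i c_i^2 \leq \mathcal{W}_A^2$ and the objective to $\sum_i c_i^2/\lambda_i$. A linear change of variables $u_i := \alpha_i c_i^2 \geq 0$ turns the problem into maximising $\sum_i u_i/(\alpha_i \lambda_i)$ under a single budget constraint $\sum_i u_i \leq \mathcal{W}_A^2$. This is a linear program whose maximiser places all the mass on the coordinate attaining $\max_{i \in [d]} 1/(\alpha_i \lambda_i)$, giving optimal value $\mathcal{W}_A^2 \max_i 1/(\lambda_i \alpha_i)$; taking the square root produces the factor in the second summand of \cref{eq:app:A_S_commute_bound}.

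I do not anticipate any serious obstacle: the strong convexity constant is immediate from the Hessian, and the diagonalised supremum is a textbook optimisation. The only thing to be careful about is tracking the constant $\sigma = 2$ through \cref{prop:upper_bound_rad} so that the $\sqrt{2/(\sigma n)}$ factor collapses to $1/\sqrt{n}$; otherwise combining the two pieces yields exactly the stated bound.
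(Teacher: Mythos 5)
Your proposal is correct and follows essentially the same route as the paper: invoke Proposition~\ref{prop:upper_bound_rad} with $\sigma=2$ strong convexity of $\w\mapsto\w^\top\Sigmaw\w$ with respect to $\|\cdot\|_{\Sigmaw}$ (whose dual is $\|\cdot\|_{\Sigmaw^{-1}}$), identify the dual of the perturbation norm as $\|\cdot\|_{\Sigmadelta^{-1}}$, and evaluate the constrained supremum in the shared eigenbasis. The only (cosmetic) difference is that you solve the diagonalized supremum exactly as a linear program over the budget $\sum_i u_i\le\mathcal{W}_A^2$, whereas the paper bounds it by pulling out $\max_{i}\sfrac{1}{\lambda_i\alpha_i}$ and then exhibits the maximizer $\w=\mathbf{v}_j/\sqrt{\alpha_j}$ to show tightness — both give the same value.
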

    \begin{proof}
        For the worst-case part, we have:
        \begin{equation}
            \begin{split}
                \sup_{\w: \|\w\|_{\Sigmaw}^2 \leq \mathcal{W}_A^2} \| \w \|_{\Sigma^{-1}}
                & = \mathcal{W}_A \sup_{\w: \|\w\|_{\Sigmaw} \leq 1} \sqrt{ \innerprod{\w}{\Sigmadelta^{-1} \w}} \\
                & = \mathcal{W}_A \sup_{\w: \|\w\|_{\Sigmaw} \leq 1} \sqrt{ \sum_{i = 1}^d \lambda_i^{-1} \innerprod{\w}{\mathbf{v}_i}^2} \\
                & = \mathcal{W}_A \sup_{\w: \|\w\|_{\Sigmaw} \leq 1} \sqrt{ \sum_{i = 1}^d \frac{\lambda_i^{-1}}{\alpha_i} \alpha_i \innerprod{\w}{\mathbf{v}_i}^2} \\
                & \leq \mathcal{W}_A \sup_{\w: \|\w\|_{\Sigmaw} \leq 1} \sqrt{ \max_{i \in [d]} \frac{\lambda_i^{-1}}{\alpha_i} \sum_{i = 1}^d \alpha_i \innerprod{\w}{\mathbf{v}_i}^2} \\
                & = \mathcal{W}_A \sup_{\w: \|\w\|_A \leq 1} \sqrt{ \max_{i \in [d]} \frac{\lambda_i^{-1}}{\alpha_i}} \|\w\|_{\Sigmaw} = \mathcal{W}_A \sqrt{ \max_{i \in [d]} \frac{\lambda_i^{-1}}{\alpha_i}}.
            \end{split}
        \end{equation}
        On the other hand, for $\w = \frac{1}{\sqrt{\alpha_j}} \mathbf{v}_j$ where $j \in \argmax_{i \in [d]} \frac{\lambda_i^{-1}}{\alpha_i}$, it is $\|\w\|_{\Sigmaw} = 1$ and also $\|\w\|_{\Sigmadelta^{-1}} = \sqrt{\max_{i \in [d]} \frac{\lambda_i^{-1}}{\alpha_i}}$, so the above bound is tight.
    \end{proof}

\subsection{Worst-case Rademacher complexity for $\ell_p$ norms}

\citet{AFM20} provides the following bound on the worst-case Rademacher complexity of linear hypothesis classes constrained in their $\ell_r$ norm.

\begin{theorem}{Theorem 4 in \citep{AFM20}}
    Let $\epsilon > 0$ and $p, r \geq 1$. Define $\mathcal{H}_r = \{ \mathbf{x} \mapsto \innerprod{\mathbf{w}}{\mathbf{x}}: \| \mathbf{w} \|_r \leq 1 \}$. Then, it holds:
    \begin{equation}
        \rad(\Tilde{\mathcal{H}}_r) \leq \rad(\mathcal{H}) + \epsilon \frac{\max(d^{1 - \frac{1}{r} - \frac{1}{p}}, 1)}{2\sqrt{m}}.
    \end{equation}
\end{theorem}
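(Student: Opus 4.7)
The plan is to mirror the argument used for Proposition~\ref{prop:upper_bound_rad}, replacing the strong-convexity-based clean Rademacher bound by the (abstract) clean Rademacher complexity for $\ell_r$-balls, and specializing the worst-case dual-norm factor via the elementary $\ell_p$ nesting inequality on $\mathbb{R}^d$.

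First I would use the closed form of the inner adversarial optimization for linear predictors: since the perturbation is constrained in $\ell_p$ norm, $\min_{\|\vecdelta\|_p \leq \varepsilon}\innerprod{\w}{\xb + \vecdelta} = \innerprod{\w}{\xb} - \varepsilon\|\w\|_\pstar$ by the definition of the dual norm. Substituting this into $\rad(\tilde{\mathcal{H}}_r)$ and applying subadditivity of the supremum, together with the symmetry of the Rademacher variables $\sigma_i$ (which absorbs the label factors $y_i$), I obtain
\begin{equation*}
    \rad(\tilde{\mathcal{H}}_r) \leq \rad(\mathcal{H}_r) + \frac{\varepsilon}{2}\,\mathbb{E}_\sigma\!\left[\left|\frac{1}{m}\sum_{i=1}^m \sigma_i\right|\right]\sup_{\|\w\|_r \leq 1}\|\w\|_\pstar.
\end{equation*}
The $\|\w\|_\pstar$ factor decouples from the supremum over $\sigma$ because it does not depend on the data.

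Second, I would control the Rademacher average via Jensen's inequality, $\mathbb{E}_\sigma|\frac{1}{m}\sum_i\sigma_i| \leq \sqrt{\mathbb{E}_\sigma(\frac{1}{m}\sum_i\sigma_i)^2} = 1/\sqrt{m}$, precisely as in the proof of Proposition~\ref{prop:upper_bound_rad}.

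Third, the quantitative crux is to compute $\sup_{\|\w\|_r \leq 1}\|\w\|_\pstar$ on $\mathbb{R}^d$. By the standard $\ell_p$ nesting inequality, for $0 < s \leq t$ one has $\|\w\|_t \leq \|\w\|_s \leq d^{1/s - 1/t}\|\w\|_t$. Splitting into the two cases $\pstar \leq r$ (supremum attained by the rescaled all-ones vector, equal to $d^{1/\pstar - 1/r}$) and $\pstar \geq r$ (attained by any canonical basis vector, equal to $1$), this supremum equals $\max(d^{1/\pstar - 1/r},1) = \max(d^{1 - 1/p - 1/r},1)$, which is exactly the exponent in the theorem. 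I expect no serious obstacle: the argument is a direct transposition of Proposition~\ref{prop:upper_bound_rad}, with the strong-convexity input replaced by the abstract $\rad(\mathcal{H}_r)$ and the dual-norm supremum evaluated by the classical $\ell_p$ inequality. The only mild care needed concerns the sign convention of the perturbation term (min versus max formulation of $\tilde{\mathcal{H}}_r$), which does not affect the magnitude of the resulting bound.
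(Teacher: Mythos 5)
Your proposal is correct and follows essentially the approach the paper itself relies on: the paper quotes this statement from Theorem~4 of \citet{AFM20} without reproving it, and your argument is a direct transposition of the paper's proof of Proposition~\ref{prop:upper_bound_rad} (dual-norm reduction of the inner minimization, subadditivity of the supremum with the symmetry-of-$\sigma$ factor $\tfrac12$, and Jensen's inequality giving $1/\sqrt{m}$), with the remaining factor $\sup_{\|\w\|_r\le 1}\|\w\|_{\pstar}=\max\qty(d^{1-\frac1p-\frac1r},1)$ computed correctly via the standard norm-comparison inequality and the two cases $\pstar\le r$ and $\pstar\ge r$. I see no gap.
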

The bound above suggests regularizing the weights in the $\ell_r$ norm, $r = \frac{p}{p-1}$, for effectively controlling the estimation error of the class.

\section{Parameter Exploration}
\label{sec:app:parameter-exploration}
This section presents the experimental details for all the figures in the main text and explore the model parameters in greater detail.
For implementation details of our numerical procedures, please refer to \cref{sec:app:numerics}.

\subsection{Settings for Main Text Figures}
All figures in the main text utilize the logistic loss function, defined as \(g(x) = \log(1 + \exp(-x))\). Below, we detail the specific parameters for each figure.

\begin{description}
    \item[\Cref{fig:errors-small-alpha} (left)] 
    We optimize the regularization parameter $\lambda$ for each curve. Parameters: $\epsilon = 0.2$, noiseless regime ($\tau = 0$). Data points represent averages over 10 distinct data realizations with dimension $d=1000$, varying sample size $n$ to adjust $\alpha$. Error bars indicate deviation from the mean.

    \item[\Cref{fig:errors-small-alpha} (right)] 
    Generated in the noiseless case ($\tau = 0$) with optimal regularization parameter $\lambda$. We optimize robust error for regularizations $r=2$ and $r=1$ independently, then compute their difference.

    \item[\Cref{fig:comparison-different-sigma-w} (left)] 
    We employ a Strong Weak Feature Model (SWFM) as defined in \citet{tanner2024high}. This model implements a block structure on all covariances ($\Sigmax$, $\Sigmadelta$, $\Sigmatheta$, and $\Sigmaw$), with block sizes relative to dimension $d$ denoted by $\phi_i$ for block $i$. We use two equal-sized feature blocks, totaling $d = 1000$. All matrices are block diagonal, with each block being diagonal. The values for each matrix are as follows
    \begin{center}
        \begin{tabular}{c|c|c|c|c|c|c} \hline \hline
             & $\Sigmax$ & $\Sigmadelta$ & $\Sigmatheta$ & Case $\Sigmaw = \Sigmadelta$ & Case $\ell_2$ & Case $\Sigmaw = \Sigmadelta^{-1}$ \\ \hline
            First Block & 1 & 1 & 1 & 1 & 1 & 2.5 \\
            Second Block & 1 & 2.5 & 1 & 2.5 & 1 & 1\\ 
            \hline \hline
        \end{tabular}
    \end{center}
    All matrices are trace-normalized, with $\varepsilon$ values as specified in the figure.
    Again error bars indicate the deviation from the mean.

    \item[\Cref{fig:comparison-different-sigma-w} (right)] 
    We optimize the regularization parameter $\lambda$ in the noiseless case ($\tau=0$), with $\alpha = 1$. The inset is generated by conducting $r$ sweeps for 10 distinct $\varepsilon$ values. Each sweep comprises 50 points, with the minimum determined using \texttt{np.argmin}.
\end{description}

\begin{figure}[t]
    \centering
    \includegraphics[width=0.45\textwidth]{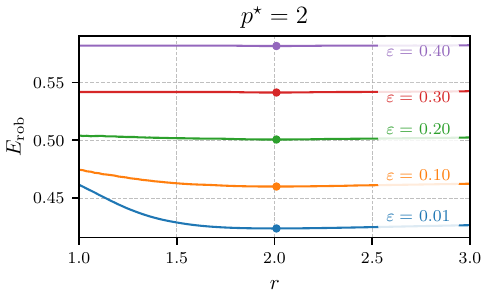}
    \hfill
    \includegraphics[width=0.45\textwidth]{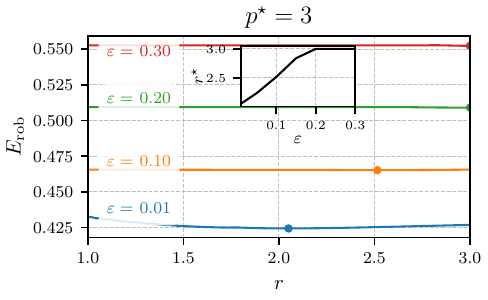}
    \caption{
        Robust error as a function of the regularization order \(r\) for two different \(\pstar\). By increasing the value of \(\varepsilon\) we have that the optimal value \(r^\star\) gets close to \(\pstar\).
    }
    \label{fig:app:reg-sweep-optimal-lambda-different-pstar}
\end{figure}

\begin{figure}[t]
    \centering
    \includegraphics[width=0.45\textwidth]{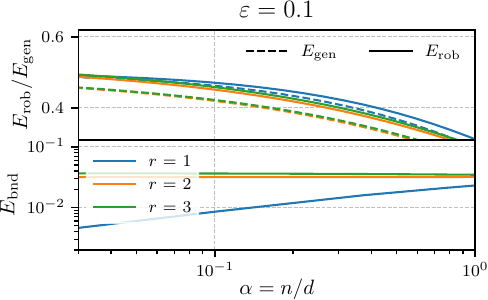}
    \hfill
    \includegraphics[width=0.45\textwidth]{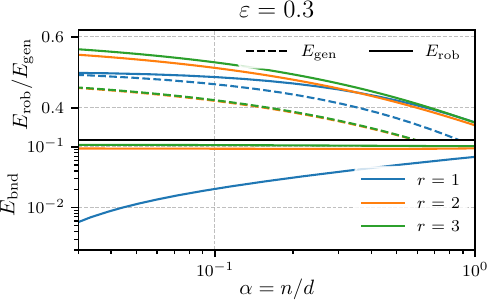}
    \caption{
        Robust error, generalization error and boundary error for different choices of regularization geometry $r$ as a function of the sample complexity $\alpha$. 
        We see that the value of the errors increases with $\varepsilon$.
    }
    \label{fig:app:alpha-sweep}
\end{figure}

\begin{figure}[t]
    \centering
    \includegraphics[width=0.45\textwidth]{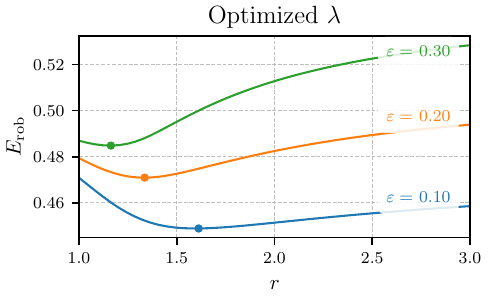}
    \hfill
    \includegraphics[width=0.45\textwidth]{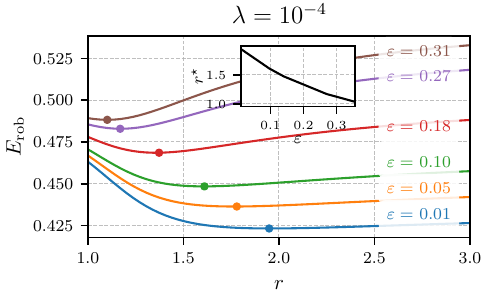}
    \caption{
        Robust generalization error as a function of regularization order \(r\) for fixed versus optimized regularization strength \(\lambda\). The comparison illustrates that the impact of \(\lambda\) optimization does not change qualitatively the behavior of the optimal regularization geometry $r^\star$ as $\varepsilon$ increases.
    }
    \label{fig:app:reg-sweep-optimal-lambda-different-alpha}
\end{figure}


\subsection{Additional Parameter Exploration}

We now present some additional exploration of the model in some different regimes. 

\begin{description}
    \item[\Cref{fig:app:reg-sweep-optimal-lambda-different-pstar}]
    These figures display theoretical results for attack perturbations constrained by $\ell_2$ (Left) and $\ell_{\sfrac{3}{2}}$ (Right) norms. We vary $\varepsilon$ as shown and use the noiseless regime ($\tau = 0$). Parameters: $\alpha = 0.1$, optimal $\lambda$. Each sweep comprises 50 points, with minima determined using \texttt{np.argmin}. Points on the curves indicate the minimum for each $\varepsilon$ value.
    
    \item[\Cref{fig:app:alpha-sweep}]
    This figure illustrates generalization metrics as a function of $\alpha$ for various regularization geometries. We present results for two attack strengths: $\varepsilon = 0.1$ (Left) and $\varepsilon = 0.3$ (Right). Both use optimal $\lambda$ values. This figure can be compared to~\cref{fig:errors-small-alpha}.

    \item[\Cref{fig:app:reg-sweep-optimal-lambda-different-alpha}]
    Both panels show robust generalization error versus regularization geometry $r$, with $\alpha = 0.1$. The right panel optimizes regularization strength $\lambda$, while the left uses a fixed value $\lambda = 10^{-4}$.
\end{description}
\section{Replica Computation}
\label{sec:replica-computation}
We show how to derive this same result using the replica computation from statistical physics \citep{mezard1987spin}. The starting point is to define a Gibbs measure from the empirical risk with a fictitious inverse temperature $\beta$. Then from this one can consider the zero temperature limit to characterize the solution space.

Here we proceed with the replica derivation of the rigorous result proven in \cref{sec:proof-asymptotics} only for the $\ell_p$ case described. The curious reader can find a similar derivation for the Mahalanobis norm in \cite{tanner2024high}.

\subsection{Gibbs Minimization}

The starting point is to define the following Gibbs measure over weights \(\w \in \RR^d\). We want the most probable states to be the ones that minimize the ERM problem in the first place.
The measure that we are interested in is
\begin{equation}\label{eq:Gibbs_measure}
\mu_\beta(\dd \w) =
\frac{1}{\mathcal{Z}_\beta} e^{ -\beta
    \left[
        \sum_{\mu=1}^n g\left(y^\mu, \w^{\top} \x^\mu, \w, \varepsilon \right)
        + \lambda r(\w)
    \right]}
    \dd \w
    = \frac{1}{\mathcal{Z}_\beta} 
        \prod_{\mu=1}^n e^{-\beta g\left(y^\mu, \w^{\top} \x^\mu, \w, \varepsilon \right)} 
        e^{- \beta \lambda r(\w)} 
    \dd{\w}
\end{equation}
We define \(P_\lossfun\) as the probability distribution associated with the channel and \(P_{\rm w}\) as the following prior probability distribution:
\begin{equation}
    P_{\lossfun} = \prod_{\mu=1}^n e^{-\beta g\left(y^\mu, \w^{\top} \x^\mu, \w, \varepsilon \right)} \,,\quad P_{\rm w} = e^{- \beta \lambda \norm{\w}_r^r} 
\end{equation}
When $\beta \to \infty$, the Gibbs measure $\mu_\beta(\dd \w)$ will concentrate around the $\w$ that minimize the empirical risk. As a result, the
free energy density defined as
\begin{equation}
    -\beta f_\beta=\lim _{d \rightarrow \infty} \frac{1}{d} \mathbb{E} \log \mathcal{Z}_\beta
\end{equation}
will concentrate around the minimum risk (the limit is taken with $n/d$ fixed, and the expectation is over the distribution of the data). 
In order to take the expectation explicitly, we use the replica trick
\begin{equation}
    \log \mathcal{Z}_\beta=\left.\lim_{r \rightarrow 0^{+}} \frac{1}{r} \partial_r \mathcal{Z}_\beta^r\right|_{r=0}
\end{equation}

Therefore one can start the expectation of the replicated partition function as
\begin{equation}
    \mathbb{E} \mathcal{Z}_\beta^r 
    = \prod_{\mu=1}^n \mathbb{E}_{y^\mu, \x^\mu} \prod_{a=1}^r \int_{\RR^d} P_{\w} \left( \dd{\w^a} \right) P\left(y^\mu \Bigg \vert \frac{\langle \w^a, \x^\mu \rangle}{\sqrt{d}}\right)
\end{equation}
We have that the term in $P_\lossfun$ 
\begin{equation}
    P_\lossfun \qty(y^\mu \Bigg \vert \frac{\x^\mu \cdot \w^a}{\sqrt{d}}, \w^a, \varepsilon) = \frac{\sqrt{\beta}}{\sqrt{2\pi}} 
    \exp\qty( 
        - \beta \lossfun\qty(
            y \frac{\x^\mu \cdot \w^a}{\sqrt{d}} 
            - \frac{\varepsilon}{\sqrt{d}} \norm{\w^a}_{p^\star}
        ) 
    )
\end{equation}
and \(P_{\rm w}\) is
\begin{equation}
    P_{\rm w}(\w^a;\lambda) =  \exp\qty(-\beta \lambda \regfun(\w)).
\end{equation}
We see from here that the scaling of \(\varepsilon = \order{d^{\frac{1}{2} - \frac{1}{p^\star}}}\).

\subsection{Rewriting as a Saddle-Point Problem}

Continuing the computation, we perform a change of variable and introduce the following local fields:
\begin{equation}
    \rho = \frac{\langle \wstar, \wstar \rangle}{d} \,, \quad m^a = \frac{\langle \wstar, \w^a \rangle}{d} \,, \quad q^{ab} = \frac{\langle \w^a, \w^b \rangle}{d} \,, \quad P^a = \frac{\norm{\w^a}_\pstar^\pstar}{d} \,.
\end{equation}
With these change of variable we can rewrite the integral as
\begin{equation}\label{aaa}
    \mathbb{E} \mathcal{Z}_\beta^r = \int \frac{\dd \rho \dd \hat{\rho}}{2 \pi} \prod_{a=1}^r \frac{\dd m^a \dd \hat{m}^a}{2 \pi} \frac{\dd P^a \dd \hat{P}^a}{2 \pi} \prod_{1 \leq a \leq b \leq r} \frac{\dd Q^{a b} \dd \hat{Q}^{a b}}{2 \pi} e^{d \Phi^{(r)}}
\end{equation}
where the $r$ times replicated functional \(\Phi^{(r)}\) is
\begin{equation}\label{eq:replicated-free-energy}
    \Phi^{(r)} = 
    \Psi_t^{(r)}
    + \alpha \Psi_y^{(r)}\qty(\rho, m^a, Q^{a b}, P^a)
    + \Psi_w^{(r)}\qty(\hat{\rho}, \hat{m}^a, \hat{Q}^{a b}, \hat{P}^a)
\end{equation}
where we will refer at the three terms as the \textit{trace part}
\begin{equation}\label{eq:trace-part}
    \Psi_t^{(r)} = - \rho \hat{\rho} - \sum_{a=1}^{r} m^a \hat{m}^a - \sum_{1 \leq a \leq b \leq r} Q^{a b} \hat{Q}^{a b} - \sum_{a=1}^{r} P^{a} \hat{P}^{a}
\end{equation}
then the \textit{prior part}
\begin{equation}\label{eq:prior-part}
\begin{aligned}
    \Psi_w^{(r)} = & \frac{1}{d} \log \left[
        \int_{\RR^d} P_{\wstar} \left(\dd{\wstar}\right) e^{\hat{\rho} \wstar^{\top} \wstar} \int_{\RR^{d \times r}} \prod_{a=1}^r P_w\left(\dd{\w^a}\right) e^{
            \sum_{a=1}^{r} \left(
                \hat{m}^a \wstar^{\top} \w^a + 
                \hat{P}^{a} \norm{\w^{a}}_\pstar^\pstar
            \right)            
            +\sum_{1 \leq a \leq b \leq r} \left(
                \hat{Q}^{a b} \w^{a \top} \w^b 
            \right)    }
            \right] \\
\end{aligned}
\end{equation}
and the \textit{channel part} 
\begin{equation}\label{eq:channel-part}
    \Psi_y^{(r)} = \log \qty[
        \int_{\RR} \dd{y} \int_{\RR} \dd \nu P_0(y \mid \nu) \int \prod_{a=1}^r \dd \lambda^a P_g\left(y \mid \lambda^a, P^{a}, \varepsilon \right) \mathcal{N}\left(\nu, \lambda^a ; \mathbf{0}, \Sigma^{a b}\right)
    ].
\end{equation}
In the last equation, we used the fact that \((\nu_\mu, \lambda_\mu) \: \mu = 1, \dots n\) factors over all the data points.

We notice that the $d$ factor in front of the exponential makes it amenable to the use of the Saddle-Point method \citep{erdelyi1956asymptotic} and thus the value of the integral concentrates around the values that extremise the free entropy \(\Phi^{(r)}\).
We see that the free energy density is thus related to this functional as
\begin{equation}
    \beta f_\beta  
    = - \lim_{r\to 0^+} \partial_r \operatorname{extr} \Phi^{(r)}
\end{equation}

\subsection{Replica Symmetric Ansatz}

We propose the Replica Symmetric ansatz for the variables that we have to extremize over
\begin{equation}
\begin{array}{rrr}
    m^a=m & \hat{m}^a=\hat{m} & \text { for } a=1, \cdots, r \\
    q^{a a} = Q & \hat{q}^{a a} = -\frac{1}{2} \hat{Q} & \text { for } a=1, \cdots, r \\
    q^{a b} = q & \hat{q}^{a b}=\hat{q} & \text { for } 1 \leq a<b \leq r \\ 
    P^{a} = P & \hat{P}^{a} = - \hat{P} & \text { for } a=1, \cdots, r
\end{array}
\end{equation}
One can check that in order to have consistency, we must have $\hat{\rho} = 0$. Now, we can take the limit $r \to 0$ we obtain
\begin{equation}
   \Psi_t = \frac{1}{2}( q\hat{q} + (V+q)(\hat{V} - \hat{q}) ) + P\hat{P} - m \hat{m} = \frac{1}{2}(V\hat{V} + q\hat{V} - V\hat{q} ) + P\hat{P}  - m \hat{m} 
\end{equation}
where we defined \(V = Q - q\) and \(\hat{V} = \hat{Q} + \hat{q}\). We notice that the first term will drop taking the zero temperature limit.





\subsubsection{Prior Replica Zero Limit}

Thus, we can plug these ansatz inside \cref{eq:channel-part,eq:prior-part} and obtain the following for the prior term. 
We get:
\begin{equation}
    \Psi_w =  \log \left[
        \int_{\RR} P_{w_\star} \qty(\dd{w_\star}) 
        \int_{\RR^r} \prod_{a=1}^{r} P_w\left(\dd{w^a}\right) e^{
        - \hat{P} \sum_{a=1}^{r} \abs{w^a}^{\pstar}
        + \hat{m} w_\star \sum_{a=1}^{r} w^a            
        - \frac{1}{2}\hat{V} \sum_{a=1}^r\qty(w^a)^2
        + \frac{1}{2} \hat{q} \qty(\sum_{a=1}^r w^a)^2 
        }
    \right].
\end{equation}
Using a Hubbard-Stratonovich transformation \citep{hubbard1959calculation} we get
\begin{equation}
    \Psi_w = \log \left[
        \int_{\RR} \dd{\xi} \frac{e^{-\frac{1}{2} \xi^2}}{\sqrt{2\pi}}
        \int_{\RR} P_{w_\star} \qty(\dd{w_\star}) 
        \qty[
            \int_{\RR} P_w\left(\dd{w}\right) 
            e^{
                - \hat{P} \abs{w}^\pstar
                + \hat{m} w_{\star} w
                - \frac{1}{2}\hat{V} w^2 + \sqrt{\hat{q}} \xi w
            }
        ]^r
    \right]
\end{equation}
By taking the derivative and limit, we obtain after a change of variables:
\begin{equation}\label{eq:prior-term-after-r-limit}
\begin{aligned}
    \Psi_w = \lim_{r \to 0^+} \partial_r \Psi_w^{(r)} = 
    \int_{\RR} \dd{\xi} \frac{e^{-\frac{1}{2} \xi^2}}{\sqrt{2\pi}}
    \int_{\RR} P_{w_\star} \qty(\dd{w_\star}) 
    e^{
        -\frac{1}{2} \frac{\hat{m}^2}{\hat{q}} w_\star^2 + \frac{\hat{m}}{\sqrt{\hat{q}}}\xi
    }
    \log\qty[
        \int_{\RR} P_w\left(\dd{w}\right) 
        e^{
            - \hat{P} \abs{w}^\pstar
            - \frac{1}{2}\hat{V} w^2 
            + \sqrt{\hat{q}} \xi w
        }
    ] 
\end{aligned}
\end{equation}

\subsubsection{Channel Replica Zero Limit}

Now we can focus on the channel term and rewrite it in a more suitable way for taking the \(r \to 0^+\) limit. In a very similar fashion as before we would like to simplify
\begin{equation}
    \Psi_y^{(r)} = \log \qty[
        \int_{\RR} \dd{y} \int_{\RR} \dd{\nu} P_0(y \mid \nu) \int \prod_{a=1}^r \dd{\lambda^a} P_g\left(y \mid \lambda^a, P, \varepsilon \right) \mathcal{N}\left(\nu, \lambda^a ; \mathbf{0}, \Sigma^{a b}\right)
    ]
\end{equation}
We have that in the limit the value of this term becomes
\begin{equation}
\begin{aligned}
    \Psi_y &= \lim_{r \to 0^+} \partial_r \Psi_y^{(r)} = \mathbb{E}_{\xi} \qty[
        \int_{\RR} \dd{y} \int \frac{\dd{\nu}}{\sqrt{2 \pi \rho}} P_0(y \mid \nu) e^{-\frac{1}{2 \rho} \nu^2}
        \log \qty[
            \int \frac{\dd{\lambda}}{\sqrt{2 \pi}} P_g(y \mid \lambda, P, \varepsilon ) e^{-\frac{1}{2} \frac{\lambda^2}{V} + \left(\frac{\sqrt{q-m^2 / \rho}}{V} \xi + \frac{m / \rho}{V} \nu \right) \lambda}
        ]
    ] \\
    &-\frac{1}{2} \log V - \frac{1}{2} \frac{q}{V}.
\end{aligned}
\end{equation}

We would like to rewrite the quantities with the help of the following definition
\begin{equation}\label{eq:partition-function-definitions}
    \mathcal{Z}_0(y, \omega, V) = \int \frac{\dd x}{\sqrt{2 \pi V}} e^{-\frac{1}{2 V}(x-\omega)^2} P_0(y \mid x)\,, \quad 
    \mathcal{Z}_y(y, \omega, V, P) = \int \frac{\dd x}{\sqrt{2 \pi V}} e^{-\frac{1}{2 V}(x-\omega)^2} P_g(y \mid x, P, \varepsilon)\,,
\end{equation}

Thus, the result becomes:
\begin{equation}
    \mathbb{E}_{\xi} \qty[
        \int_{\RR} \dd{y} \mathcal{Z}_0 \qty(y, \frac{m}{\sqrt{q}} \xi, \rho - \frac{m^2}{q})
        \log \mathcal{Z}_y (y, \sqrt{q} \xi, V, P)
    ]
\end{equation}

Now there are two things that we still need to do : find the form for the prior term and take the limit \(\beta \to \infty\).

\subsection{Zero temperature limit}

Let us take the zero temperature limit. The explicit scaling of the parameters are
\begin{equation}\label{eq:temperature-scalings}
\arraycolsep=7.5pt\def\arraystretch{1.4}
\begin{array}{rrrr}
    V \rightarrow \beta^{-1} V & 
    q \rightarrow q & 
    m \rightarrow m & 
    P \rightarrow P \\
    \hat{V} \rightarrow \beta \hat{V} & 
    \hat{q} \rightarrow \beta^2 \hat{q} & 
    \hat{m} \rightarrow \beta \hat{m} & 
    \hat{P} \rightarrow \beta \hat{P}
\end{array}
\end{equation}

Let's start with the prior term
\begin{equation}
\begin{aligned}
    \mathcal{Z}_w(\sqrt{\hat{q}} \xi, \hat{V}) &= \int \dd{w} e^{-\beta \lambda \abs{w}^r - \beta \hat{P} \abs{w}^\pstar - \frac{\beta}{2}\hat{V} w^2 + \beta \sqrt{\hat{q}} \xi w} = \int\dd{w}e^{-\beta \lambda \abs{w}^r - \beta \hat{P} \abs{w}^\pstar - \frac{\beta}{2}\hat{V} \qty(w - \frac{\sqrt{\hat{q}} \xi}{\hat{V}})^2 + \frac{\beta}{2} \frac{\hat{q} \xi^2}{\hat{V}} } \\
    &\underset{\beta \rightarrow \infty}{=} e^{-\beta \mathcal{M}_{\frac{1}{\hat{V}} \qty(\lambda \abs{\cdot}^r + \hat{P}\abs{\cdot}^\pstar ) } (\sqrt{\hat{q}} \xi) + \frac{\beta}{2} \frac{\hat{q} \xi^2}{\hat{V}} }
\end{aligned}
\end{equation}

In a similar way we can find for the channel
\begin{equation}
\begin{aligned}
    \mathcal{Z}_y(y, \omega, V, P) & = \int \frac{\dd x}{\sqrt{2 \pi V}} e^{-\frac{\beta}{2 V}(x-\omega)^2} P_g(y \mid x, P, \varepsilon) 
    = \sqrt{\beta} \int \frac{\dd x}{\sqrt{2 \pi V}} e^{-\frac{\beta}{2 V}(x-\omega)^2} \frac{1}{\sqrt{2 \pi}} e^{ 
        - \beta g(
            y x - \varepsilon P
        ) } \\
    & \underset{\beta \rightarrow \infty}{=} e^{-\beta \mathcal{M}_{V g(y, \cdot; P, \varepsilon)}(\omega)}
\end{aligned}
\end{equation}
where we introduced the Moreau envelope defined in \cref{eq:app:moreau-definition}.
Then the limit of the channel term becomes
\begin{equation}
    \Psi_y = \lim_{\beta\to\infty} \frac{1}{\beta} \Psi_y = - \mathbb{E}_{\xi} 
    \left[
        \int \dd{y} \mathcal{Z}_0\left(y, \frac{m}{\sqrt{q}} \xi, \rho-\frac{m^2}{q}\right) \mathcal{M}_{V g(y, \cdot; P, \varepsilon)}(\sqrt{q} \xi)
    \right],
\end{equation}
where \(\mathcal{M}_{V g(y, \cdot; A, N, \varepsilon)}\) is the Moreau envelope of the modified loss function defined in \cref{eq:app:moreau-definition} with the relevant quantities changed for their overlaps and \(\xi \sim \mathcal{N}(0,1)\). 

After taking the zero temperature limit, we are left with the following expression for the free energy density
\begin{equation}\label{eq:free-energy-density-after-limit}
\begin{aligned}
    \lim _{\beta \rightarrow \infty} f_\beta & = \mathop{\operatorname{extr}}_{\substack{V, q, m, P,\\ \hat{V}, \hat{q}, \hat{m}, \hat{P}}}
    \left\{-\frac{1}{2}(q \hat{V} - \hat{q} V) - P \hat{P} + m \hat{m} + \alpha \mathbb{E}_{\xi}\left[\int \dd{y} \mathcal{Z}_0(y, \sqrt{\eta} \xi, 1 - \eta) \mathcal{M}_{V g(y, \cdot; P, \varepsilon)} (\sqrt{q} \xi) \right]  \right. \\ 
    & \left. + \mathbb{E}_{\xi}\left[\mathcal{Z}_{w^\star} (\sqrt{\hat{\eta}} \xi, \hat{\eta}) \qty(\mathcal{M}_{\frac{1}{\hat{V}} \qty(\lambda \abs{\cdot}^r + \hat{P}\abs{\cdot} ) } \qty(\frac{\sqrt{\hat{q}} \xi}{\hat{V}}) - \frac{1}{2} \frac{\hat{q} \xi^2}{\hat{V}}) \right] \right\}
\end{aligned}
\end{equation}
where we have that \(\eta = m^2 / q\) and \(\hat{\eta} = \hat{m}^2 / \hat{q}\).

\subsection{Saddle-point equations}

The extremisation condition of \cref{eq:free-energy-density-after-limit} can be translated into the overlap needing to satisfy the following
\begin{equation}\label{eq:fixed_point_ set of equations.equations}
\begin{aligned}
\hat{V} & = 2 \alpha \partial_q \Psi_y, & & q= - 2 \partial_{\hat{V}} \Psi_w \\
\hat{q} & = - 2 \alpha \partial_{V} \Psi_y, & & V= 2 \partial_{\hat{q}} \Psi_w, \\
\hat{P} & = \alpha \partial_P \Psi_y, & & P= \partial_{\hat{P}} \Psi_w \\
\hat{m} & = - \alpha \partial_m \Psi_y, & & m= \partial_{\hat{m}} \Psi_w 
\end{aligned}    
\end{equation}

As we pre-announced we would like to find the stationary values that dominate the integral and to do so we should derive the exponent with respect to all the order parameters.
The saddle points that depend on \(m,q,V,\hat{m},\hat{q}\) and \(\hat{V}\) are of a similar form as those found already in \cite{loureiro2021learning}. We need thus to derive with respect to \(P,\hat{P}\).

\subsubsection{The Channel Saddle-Point Equations}

Let us begin by looking at the derivatives with respect to \(P\). These derivatives amount to computing the derivative of the Moreau-envelope with respect to \(P\) since we have that
\begin{equation}
\begin{aligned}
    \partial_P \Psi_{y} & = \mathbb{E}_{y, \xi}\left[
        \mathcal{Z}_0 \qty(y, \frac{m}{\sqrt{q}} \xi, \rho - \frac{m^2}{q})
        \partial_P \mathcal{M}_{V g(y, \cdot; P, \varepsilon)}(\sqrt{q} \xi) 
    \right] \,.
\end{aligned}
\end{equation}

In this specific case we have that 
\begin{equation}\label{eq:moreau-envelope-shift-propriety}
    \mathcal{M}_{V g(y, \cdot, P, \varepsilon)}(\omega) = 
    \inf _{x \in \RR}\left[
        \frac{(x-\omega)^2}{2 V} + 
        g \qty(y x - \varepsilon \sqrt[\pstar]{P})
    \right] = \mathcal{M}_{V g(y, \cdot)} \qty(
        \omega - y \varepsilon \sqrt[\pstar]{P}
    )
\end{equation}
where we remind that this specific form is possible since \(y \in \qty{+ 1, -1}\). We have for its derivative:
\begin{equation}
\begin{aligned}
    \partial_P \mathcal{M}_{V g(y,\cdot, P, \varepsilon)}(\omega) &= 
    -y \varepsilon \pstar P^{\frac{1}{\pstar} - 1} \mathcal{M}^\prime_{V g(y, \cdot)}\qty(\omega - y \varepsilon \sqrt[\pstar]{P}) 
\end{aligned}
\end{equation}

With this, we can write the new equations as
\begin{equation}
    \hat{P} = \alpha \varepsilon \pstar P^{\frac{1}{\pstar} - 1} \mathbb{E}_{\xi}\left[
        \int_{\RR} \dd{y} y \, \mathcal{Z}_0 (y, \sqrt{\eta} \xi, 1 - \eta) 
        f_g (y, \sqrt{q} \xi, V, P, \varepsilon) 
    \right] \,, 
\end{equation}
where we defined
\begin{equation}
    f_g(y, \omega, V, P, \varepsilon) = 
    - \mathcal{M}^\prime_{V g(y, \cdot, P, \varepsilon)}(\omega).
\end{equation}

\subsubsection{The Prior Saddle-Point Equations}

Again we start by considering the derivative with respect to \(\hat{P}\). We have that the saddle point equation relative to this becomes
\begin{equation}
    P = \mathbb{E}_{\xi}\qty[
        \mathcal{Z}_{w^\star} (\sqrt{\hat{\eta}} \xi, \hat{\eta}) \partial_{\hat{P}} \mathcal{M}_{\frac{1}{\hat{V}} \qty(\lambda \abs{\cdot}^r + \hat{P}\abs{\cdot} ) } (\frac{\sqrt{\hat{q}} \xi}{\hat{V}}) 
    ]
\end{equation}
and the rest of the equations become the same as in \cite{aubin_generalization_2020}.

\subsection{Final set of saddle point equations for \texorpdfstring{\(\ell_2\)}{L2} regularization}

We state here our final set of saddle point equations for reference
\begin{equation}\label{eq:saddle-point-channel}
    \begin{cases}
        \hat{m} = \alpha \mathbb{E}_{\xi}\left[
            \int_{\RR} \dd{y} 
            \partial_\omega \mathcal{Z}_0 (y, \sqrt{\eta} \xi, 1 - \eta) 
            f_g(\sqrt{q} \xi, P, \varepsilon)
        \right] \\
        \hat{q} = \alpha \mathbb{E}_{\xi}\left[
            \int_{\RR} \dd{y} \mathcal{Z}_0 (y, \sqrt{\eta} \xi, 1 - \eta) 
            f_g^2(\sqrt{q} \xi, P, \varepsilon)
        \right] \\
        \hat{V} = -\alpha \mathbb{E}_{\xi}\left[
            \int_{\RR} \dd{y} \mathcal{Z}_0 (y, \sqrt{\eta} \xi, 1 - \eta) 
            \partial_\omega f_g(\sqrt{q} \xi, P, \varepsilon)
        \right] \\
        \hat{P} = \varepsilon \alpha \pstar P^{\frac{1}{\pstar} - 1} \mathbb{E}_{\xi} \qty[ 
            \int_{\RR} \dd{y} \mathcal{Z}_0 (y, \sqrt{\eta} \xi, 1 - \eta)  
            y f_g (\sqrt{q} \xi, P, \varepsilon)
        ] 
    \end{cases}
\end{equation}

\begin{equation}\label{eq:saddle-point-prior}
    \begin{cases}
        m = \mathbb{E}_{\xi}\left[
            \partial_\gamma \mathcal{Z}_{w^\star} (\sqrt{\hat{\eta}} \xi, \hat{\eta}) 
            f_w(\sqrt{q} \xi, \hat{P}, \lambda)
        \right] \\
        q = \mathbb{E}_{\xi}\left[
            \mathcal{Z}_{w^\star} (\sqrt{\hat{\eta}} \xi, \hat{\eta}) 
            f_w(\sqrt{q} \xi, \hat{P}, \lambda)^2
        \right] \\
        V = \mathbb{E}_{\xi}\left[
            \mathcal{Z}_{w^\star} (\sqrt{\hat{\eta}} \xi, \hat{\eta}) 
            \partial_\gamma f_w(\sqrt{q} \xi, \hat{P}, \lambda)
        \right] \\
        P = \mathbb{E}_{\xi}\left[
            \mathcal{Z}_{w^\star} (\sqrt{\hat{\eta}} \xi, \hat{\eta}) 
            \partial_{\hat{P}} \mathcal{M}_{\frac{1}{\hat{V}} \qty(\lambda \abs{\cdot}^r + \hat{P}\abs{\cdot} ) } (\frac{\sqrt{\hat{q}} \xi}{\hat{V}}) 
        \right]
    \end{cases}
\end{equation}
where \(\xi \sim \mathcal{N}(0,1)\) and the integration over \(y\) in the binary classification is a sum over \(+1,-1\).
\begin{equation}
    f_g(y,\omega,V) = \argmin_z \qty[g(y z - \varepsilon \sqrt[\pstar]{P}) + \frac{1}{2 V}\qty(z - \omega)^2] \,, 
    \mathcal{M}_{V g} (\omega) = \min_z \qty[g(y z - \varepsilon \sqrt[\pstar]{P}) + \frac{1}{2 V}\qty(z - \omega)^2]
\end{equation}
\begin{equation}
    f_w(\gamma,\Lambda) = \argmin_z \qty[\lambda \abs{z}^r + \hat{P} \abs{z}^\pstar + \frac{\Lambda}{2} z^2 - \gamma z] \,.
\end{equation}
In order to solve these equations, we can use a fixed point iteration scheme.
\section{Numerical Details}
\label{sec:app:numerics}
The self-consistent equations from \cref{thm:self-consistent-equations-ell-p,thm:self-consistent-equations-mahalanobis} are written in a way amenable to be solved via fixed-point
iteration. Starting from a random initialization, we iterate through both the hat and non-hat variable equations until the maximum absolute difference between the order parameters in two successive iterations falls below a tolerance of
$10^{-5}$.

To speed-up convergence we use a damping scheme, updating each order parameter at iteration $i$, designated as
$x_i$, using $x_i := x_i \mu + x_{i-1}(1 - \mu)$, with $\mu$ as the damping parameter.

Once convergence is achieved for fixed $\lambda$, hyper-parameters are optimized using a gradient-free numerical minimization procedure for a one dimensional minimization.

For each iteration, we evaluate the proximal operator numerically using SciPy's~\citep{Vir+20} Brent's algorithm for root finding (\texttt{scipy.optimize.minimize\_scalar}).
The numerical integration is handled with SciPy's quad method (\texttt{scipy.integrate.quad}), which provides adaptive quadrature of a given function over a specified interval. These numerical techniques allow us to evaluate the equations and perform the necessary integrations with the desired accuracy. 

\end{document}